\documentclass{article}

    \PassOptionsToPackage{numbers, compress}{natbib}
\usepackage[final]{neurips_2025}

\usepackage[utf8]{inputenc} %
\usepackage[T1]{fontenc}    %
\usepackage[pagebackref]{hyperref}       %
\renewcommand*{\backrefalt}[4]{%
  \ifcase#1 %
  \or
    p.~#2.%
  \else
    pp.~#2.%
  \fi
}
\usepackage{url}            %
\usepackage{booktabs}       %
\usepackage{amsfonts}       %
\usepackage{nicefrac}       %
\usepackage{microtype}      %
\usepackage{neurips_2025}

\usepackage[utf8]{inputenc} %
\usepackage[T1]{fontenc}    %
\usepackage{hyperref}       %
\usepackage{url}            %
\usepackage{booktabs}       %
\usepackage{nicefrac}       %
\usepackage{microtype}      %
\usepackage[table, dvipsnames]{xcolor}         %
\usepackage{graphicx}

\usepackage{lipsum} %

\usepackage{kotex}
\usepackage{comment}
\usepackage{makecell}
\usepackage{siunitx}
\usepackage{caption}
\usepackage{wrapfig}
\usepackage{tcolorbox}  

\definecolor{green2}{RGB}{0, 140, 20}
\definecolor{blue2}{RGB}{25, 70, 190}

\newcommand{\guide}[1]{\textbf{\color{red}[#1]}}

\usepackage{scrwfile}
\TOCclone[\appendixname]{toc}{atoc}
\newcommand\StartAppendixEntries{}
\AfterTOCHead[toc]{%
  \renewcommand\StartAppendixEntries{\value{tocdepth}=-10000\relax}%
}
\AfterTOCHead[atoc]{%
  \edef\maintocdepth{\the\value{tocdepth}}%
  \value{tocdepth}=-10000\relax%
  \renewcommand\StartAppendixEntries{\value{tocdepth}=\maintocdepth\relax}%
}

\usepackage{minted}

\usepackage[scale=0.82]{FiraMono}

\usepackage{listings}
\usepackage{pifont}%

\usepackage{transparent}
\definecolor{codegreen}{rgb}{0,0.6,0}
\definecolor{codegray}{rgb}{0.5,0.5,0.5}
\definecolor{codepurple}{rgb}{0.58,0,0.82}
\definecolor{backcolour}{rgb}{0.95,0.95,0.92}

\lstdefinestyle{mystyle}{
    language=Python,
    keywords={def,class,return,None,raise,mean,sqrt},    
    backgroundcolor=\color{backcolour},   
    commentstyle=\color{codegreen},
    keywordstyle=\color{blue},
    numberstyle=\tiny\color{codegray},
    stringstyle=\color{codepurple},
    basicstyle=\ttfamily\footnotesize,
    breakatwhitespace=false,         
    breaklines=true,                 
    captionpos=b,                    
    keepspaces=true,                 
    numbers=left,                    
    xleftmargin=.375cm,
    numbersep=5pt,
    showspaces=false,                
    showstringspaces=false,
    showtabs=false,                  
    tabsize=4,    
}
\lstset{style=mystyle}

\usepackage{amsmath}          %
\usepackage{bm}               %

\usepackage{mathtools}
\usepackage{amsmath}
\usepackage{amssymb}    %
\usepackage{amsthm}
\usepackage{enumitem}

\theoremstyle{plain}%
\newtheorem{theorem}{Theorem}[section]
\newtheorem{proposition}{Proposition}[section]

\theoremstyle{definition}

\newtheorem{lemma}{Lemma}[section]

\theoremstyle{remark}

\newtheoremstyle{break}  
  {\topsep}   %
  {\topsep}   %
  {}          %
  {}          %
  {\bfseries} %
  {.\vspace{4pt}}         %
  {\newline}  %
  {}          %

\theoremstyle{break}

\newcommand{\bmPsi}{\bm{\Psi}}
\newcommand{\bmPhi}{\bm{\Phi}}
\newcommand{\PsiX}{\bmPsi_{\kern-0.15em X}}
\newcommand{\PsiY}{\bmPsi_{\kern-0.05em Y}}
\newcommand{\PhiX}{\bmPhi_{\kern-0.15em X}}

\newcommand{\Hm}{\mathsf{H}}
\newcommand{\Pm}{\mathsf{P}}
\newcommand{\Sm}{\mathsf{S}}

\renewcommand{\a}{\alpha}

\newcommand{\alphav}{\boldsymbol{\alpha}}
\newcommand{\betav}{\boldsymbol{\beta}}

\newcommand{\phiv}{\boldsymbol{\phi}}
\newcommand{\psiv}{\boldsymbol{\psi}}
\newcommand{\Tm}{\mathsf{T}}
\newcommand{\zv}{\mathbf{z}}

\DeclareMathAlphabet{\mathpzc}{OT1}{pzc}{m}{it}

\newcommand{\Iop}{\mathpzc{I}}
\newcommand{\Kop}{\mathpzc{K}}
\newcommand{\Lop}{\mathpzc{L}}
\newcommand{\Top}{\mathpzc{T}}

\newcommand{\Uop}{\mathpzc{U}}

\newcommand{\Kopadj}{\Kop^*}

\newcommand{\bv}{\mathbf{b}}
\newcommand{\hv}{\mathbf{h}}
\newcommand{\fv}{\mathbf{f}}
\newcommand{\gv}{\mathbf{g}}
\newcommand{\wv}{\mathbf{w}}
\newcommand{\Xv}{\mathbf{X}}
\newcommand{\Wv}{\mathbf{W}}

\newcommand{\defeq}{\triangleq}

\newcommand\numberthis{\addtocounter{equation}{1}\tag{\theequation}}

\newcommand{\Lc}{\mathcal{L}}

\newcommand{\xv}{\mathbf{x}}
\newcommand{\yv}{\mathbf{y}}

\newcommand{\Real}{\mathbb{R}}
\newcommand{\Complex}{\mathbb{C}}

\newcommand{\E}{\mathbb{E}}
\newcommand{\Rc}{\mathcal{R}}
\newcommand{\uv}{\mathbf{u}}
\newcommand{\vv}{\mathbf{v}}
\renewcommand{\th}{\theta}

\newcommand{\Fm}{\mathsf{F}}

\newcommand{\Um}{\mathsf{U}}
\newcommand{\Vm}{\mathsf{V}}
\newcommand{\Cm}{\mathsf{C}}
\newcommand{\Km}{\mathsf{K}}
\newcommand{\Mm}{\mathsf{M}}
\newcommand{\Wm}{\mathsf{W}}
\newcommand{\Imat}{\mathsf{I}}

\newcommand{\secondmoment}[2]{\Mm_{#1}[#2]}
\newcommand{\seqsecondmoment}[2]{\Mm_{#1}^{\mathsf{seq}}[#2]}
\newcommand{\empsecondmoment}[2]{\hat{\Mm}_{#1}[#2]}
\newcommand{\jointmoment}[1]{\Tm[#1]}
\newcommand{\empjointmoment}[1]{\hat{\Tm}[#1]}
\newcommand{\testempjointmoment}[1]{\hat{\Tm}_{\mathrm{test}}[#1]}

\newcommand{\stopgrad}[1]{\mathsf{sg}[#1]}

\newcommand{\loraobj}{\Lc_{\mathsf{lora}}}
\newcommand{\emploraobj}{\hat{\Lc}_{\mathsf{lora}}}
\DeclareMathOperator{\tr}{tr}
\newcommand{\Sb}{\mathbb{S}}

\newcommand{\Xc}
{\mathcal{X}}
\newcommand{\Yc}{\mathcal{Y}}

\newcommand{\Am}{\mathsf{A}}
\newcommand{\Bm}{\mathsf{B}}

\newcommand{\Fv}{\mathbf{F}}
\newcommand{\suchthat}{\colon}

\newcommand{\lora}{\mathsf{lora}}

\newcommand{\hilbert}[2]{L_{#1}^2(#2)}

\newcommand{\jntlora}{LoRA$_{\mathsf{jnt}}$}
\newcommand{\seqlora}{LoRA$_{\mathsf{seq}}$}

\newcommand{\weights}{\alphav}
\newcommand{\weight}{\a}

\newif\ifFINAL
\newif\ifPREPRINT

\FINALtrue
\PREPRINTtrue
\ifFINAL
  \def\guide#1{}
\else
  \usepackage{transparent}
  \usepackage[inline]{showlabels}
  \usepackage{rotating}
  \renewcommand{\showlabelfont}%
  {\transparent{0.8}\scriptsize\bf\slshape\color{Lavender}}
\fi

\allowdisplaybreaks

\title{Efficient Parametric SVD of Koopman Operator\\for Stochastic Dynamical Systems}

\author{%
    \textbf{Minchan Jeong}\textsuperscript{1}\thanks{The two authors contributed equally to this work.}, 
    \textbf{J.~Jon~Ryu}\textsuperscript{2}$^*$, 
    \textbf{Se-Young Yun}\textsuperscript{1},
    \textbf{Gregory W. Wornell}\textsuperscript{2}\\
    \textsuperscript{1}Kim Jaechul Graduate School of AI, KAIST, Daejeon 34141, South Korea\\
    \textsuperscript{2}Department of EECS, MIT, Cambridge, MA 02139, United States \\
    \texttt{\{\href{mailto:mcjeong@kaist.edu}{mcjeong,yunseyoung}\}@kaist.ac.kr, \{\href{mailto:jongha@mit.edu}{jongha},\href{mailto:gww@mit.edu}{gww}\}@mit.edu}
}

\renewcommand{\mid}{\,|\,}

\begin{document}

\maketitle

\begin{abstract}
The Koopman operator provides a principled framework for analyzing nonlinear dynamical systems through linear operator theory. Recent advances in dynamic mode decomposition (DMD) have shown that trajectory data can be used to identify dominant modes of a system in a data-driven manner. Building on this idea, deep learning methods such as VAMPnet and DPNet have been proposed to learn the leading singular subspaces of the Koopman operator. 
However, these methods require backpropagation through potentially numerically unstable operations on empirical second moment matrices, such as singular value decomposition and matrix inversion, during objective computation, which can introduce biased gradient estimates and hinder scalability to large systems.
In this work, we propose a scalable and conceptually simple method for learning the top-$k$ singular functions of the Koopman operator for stochastic dynamical systems based on the idea of low-rank approximation. Our approach eliminates the need for unstable linear-algebraic operations and integrates easily into modern deep learning pipelines. Empirical results demonstrate that the learned singular subspaces are both reliable and effective for downstream tasks such as eigen-analysis and multi-step prediction.
\end{abstract}

\section{Introduction}
The Koopman operator theory offers a powerful framework for analyzing nonlinear dynamical systems by lifting them into an infinite-dimensional function space, where spectral techniques from linear operator theory can be applied. 
Recent advances in dynamic mode decomposition (DMD) have shown that trajectory data can be effectively used to identify dominant eigen-modes in a data-driven manner~\citep{tu_2014_dmdtheory,williams_edmd_2014,williams_kedmd_2015,kutz_dynamic_2016,colbrook_2023_residual}. Inspired by the success of deep learning, recent methods such as VAMPnet~\citep{Wu--Noe2020,Mardt--Pasquali--Wu--Noe2018vampnets} and DPNet~\citep{Kostic--Novelli--Grazzi--Lounici--Pontil2024iclr} employ neural networks to approximate the leading singular subspaces of the Koopman operator. 
While shown effective for some benchmark tasks, these approaches often rely on numerically unstable operations such as singular value decomposition (SVD) or matrix inversion during objective computation. These operations present practical challenges, particularly in computing unbiased gradients and thus scaling to high-dimensional systems.

In this work, we propose a conceptually simple and scalable method for learning the top-$k$ singular functions of the Koopman operator for stochastic dynamical systems. 
Our approach builds on the idea of \emph{low-rank approximation}, which has recently received increasing attention in the literature due to its favorable optimization structure that aligns well with modern optimization practices; see, e.g., \citep{Liu--Wen--Zhang2015,Wen--Yang--Liu--Zhang2016} in the numerical optimization literature, and \citep{Wang--Wu--Huang--Zheng--Xu--Zhang--Huang2019,Tsai--Zhao--Yamada--Morency--Salakhutdinov2020,HaoChen--Wei--Gaidon--Ma2021,Ryu--Xu--Erol--Bu--Zheng--Wornell2024,Xu--Zheng2024,Kostic--Pacreau--Turri--Novelli--Lounici--Pontil2024neurips} in the machine learning literature.
Our method avoids unstable linear-algebraic computations and is easy to integrate into modern deep learning pipelines. 
We demonstrate that it reliably recovers dominant Koopman eigen-subspaces and supports downstream tasks such as prediction and eigen-analysis.

\section{Problem Setting and Preliminaries}
In this section, we introduce the problem setting and establish the foundation for the subsequent discussion.
We begin by formulating the problem in the context of discrete-time dynamical systems, which will serve as our primary focus.
We then briefly address the continuous-time case.
Next, we review two existing approaches, VAMPnet and DPNet, and highlight their limitations, thereby motivating the need for our proposed method.

\textbf{Notation.}
We denote linear operators by stylized script letters, e.g., $\Uop$, $\Kop$, $\Lop$, and $\Iop$, using the Zapf Chancery font to distinguish them from matrices and scalar functions.
Bold lowercase letters such as $\xv$ and $\yv$ are reserved for vectors, while bold uppercase letters like $\Xv$ denote vector-valued random variables.
Sans-serif uppercase letters, e.g., $\Mm$ and $\Sm$, are used for matrices.
In particular, $\Imat$ denotes the identity matrix. 

\renewcommand{\Fv}{F}
\subsection{Discrete-Time Dynamical Systems}
{Consider a stochastic discrete-time dynamical system $\xv_{t+1}=\xi(\Fv(\xv_t),\epsilon_t)$. Here, $\Fv\suchthat \Xc\to\Xc$ is a possibly nonlinear mapping for a domain $\Xc\subseteq\Real^d$, $\epsilon_t\sim \mathcal{D}$ is an independent random noise variable, and $\xv_t\mapsto\xi(\Fv(\xv_t),\epsilon_t)\in\Xc$ captures independent noise in the process, such as the additive white Gaussian noise.\footnote{All machinery developed here can also be adapted for a discrete state space, but we focus on continuous $\Xc$.}
Assuming that $\Fv$ and the noise distribution $\mathcal{D}$ are time-invariant, the process becomes a time-homogeneous Markov process with transition density $p(\xv' \mid \xv)$, i.e., $\Pr(\Xv_{t+1} \in A \mid \Xv_t = \xv) = \int_A p(\xv' \mid \xv) \, d\xv'$ for all measurable sets $A \subseteq \Xc$ and all $t \ge 0$.

{In the stochastic setup, the dynamics is fully captured by the transition density $p(\xv'\mid\xv)$, which is induced by $(\xi\circ\Fv)(\cdot)$, and thus the problem becomes analyzing $p(\xv'\mid\xv)$ of a Markov chain.
Here, the Koopman operator becomes the conditional expectation operator, i.e., for an observable $g\suchthat \Xc\to\Real$, 
\begin{align*}
(\Kop g)(\xv)
\defeq \E_{p(\xv'\mid\xv)}[g(\xv')].
\end{align*}
By the Markov property, repeatedly applying the Koopman operator will correspond to the \emph{multi-step prediction} in terms of the posterior mean of $g(\Xv_t)$ given $\Xv_0=\xv_0$, i.e., 
$(\Kop^t g)(\xv_0)=\E_{p(\xv_t\mid\xv_0)}[g(\xv_t)]$.

Throughout the paper, we will assume that $\Kop$ is \emph{compact} following \citep{Kostic--Novelli--Grazzi--Lounici--Pontil2024iclr}, which is a mild assumption that holds for a large class of Markov processes; see, e.g., \citep{Kostic--Novelli--Maurer--Ciliberto--Rosasco--Ponti2022}.
We remark that we cannot directly apply the existing spectral techniques to a deterministic dynamical system, including the technique developed in this paper, since the corresponding Koopman operator is not compact; see \citep[Appendix A.5]{Wu--Noe2020}.
The consideration of stochasticity breaks the degeneracy and allows the linear algebraic techniques to be applicable. 
Moreover, assuming stochasticity is not necessarily a restrictive assumption, as physical processes in the real world may be inherently noisy.

\textbf{Problem Setting.}
Our goal is to analyze the Markov dynamics of the length-$T$ trajectory $(\xv_0,\xv_1,\ldots,\xv_T)\sim p(\xv_0)p(\xv_1\mid\xv_0)\ldots p(\xv_T\mid\xv_{T-1})$, where $p(\xv_0)$ is a distribution for the initial state.
In practice, we assume that we have access to $N$ independent, random trajectories, and collect the consecutive transition pairs $\{(\xv_0, \xv_1), (\xv_1, \xv_2), \dots, (\xv_{T-1}, \xv_T)\}$ and define the joint distribution over the pair as $p(\xv, \xv') \defeq \frac{1}{T} \sum_{t=0}^{T-1} p_t(\xv) p(\xv' \mid\xv)$,
where $p_0(\xv) = p(\xv_0)$ and $p_{t+1}(\xv') \defeq \int p(\xv' \mid\xv) p_t(\xv) d\xv $.
We let $\rho_0(\xv)$ and $\rho_1(\xv')$ denote the marginal distribution over $\Xc$ of the current and future states.
Note that while $\rho_1(\xv')=\E_{\rho_0(\xv)}[p(\xv'\mid\xv)]$ holds, the two distributions, $\rho_0(\xv)$ and $\rho_1(\xv')$ can differ significantly, particularly when the trajectory length $T$ is relatively short.

Let $(\Xc,\mathcal F)$ be a measurable space and let $\rho_0$ and
$\rho_1$ be (finite) measures on $(\Xc,\mathcal F)$ representing the
distributions of the current and future states, respectively.
For any measure $\rho$ on $(\Xc,\mathcal F)$ define
$\hilbert{\rho}{\Xc}\defeq \{f\colon\Xc\to\Complex | \int |f(\xv)|^2\rho(d\xv)<\infty\}$.
Equipped with the inner product $\langle f,g\rangle_{\rho}\defeq \int f(\xv)g(\xv)\rho(d\xv)$, $\hilbert{\rho}{\Xc}$ is a Hilbert space.
The Koopman operator $\Kop$ is then a mapping from $\hilbert{\rho_1}{\Xc}$ to $\hilbert{\rho_0}{\Xc}$, which is an integral kernel operator with the transition kernel $k(\xv,\xv')\defeq \frac{p(\xv'\mid\xv)}{\rho_1(\xv')}$.
The adjoint operator $\Kop^*$ then acts as the \emph{backward predictor}, i.e., $(\Kop^* f)(\xv')= \E_{q(\xv\mid\xv')}[f(\xv)]$,
where $q(\xv\mid\xv')\defeq \frac{\rho_0(\xv)p(\xv'\mid\xv)}{\rho_1(\xv')}$ denotes the conditional distribution induced by $\rho_0(\xv)p(\xv'\mid\xv)$.
Similar to $\Kop$, repeated application of $\Kop^*$ yields multi-step backward prediction: $((\Kop^*)^t f)(\xv_0)=\E_{p(\xv_{0}\mid\xv_t)}[f(\xv_{0})].$}

\textbf{Special Case 1: Stationary Markov Processes.} 
Given $p(\xv'\mid\xv)$, let $\pi(\xv)$ denote the stationary distribution, i.e., the distribution satisfies $\E_{\pi(\xv)}[p(\xv'\mid\xv)]=\pi(\xv')$. 
Under mild regularity conditions such as ergodicity and irreducibility, such distribution uniquely exists.\footnote{For example, if the stochasticity in the system is induced by an additive noise, i.e., $\xv_{t+1}=\Fv(\xv_t)+\epsilon_t$ for $\epsilon_t\sim\mathcal{D}$, with the density of $\mathcal{D}$ being positive almost everywhere, then the system is asymptotically stable, i.e., it converges to a unique stationary distribution; see \citep[Corollary~10.5.1]{Lasota--Mackey2013}.}
In the stationary case, the marginal distributions of current and future states are equal, i.e., $\rho_0=\rho_1=\pi$, and the Koopman operator $\Kop$ becomes a map from $\hilbert{\pi}{\Xc}$ to $\hilbert{\pi}{\Xc}$. 
Assuming ergodicity, we can collect the time-lagged pairs $(\xv,\xv')$ from a long, single trajectory, as time averages converge to expectations under the stationary distribution.

\textbf{Special Case 2: Reversible Markov Processes.}
A Markov process is \emph{time-reversible} if and only if it satisfies the \emph{detailed balance condition}: the joint function $P(\xv,\xv')\defeq\pi(\xv) p(\xv'\mid\xv)$ is symmetric, i.e., $P(\xv,\xv')=P(\xv',\xv)$ for any $\xv,\xv'\in\Xc\times\Xc$.
In this case, the Koopman operator becomes self-adjoint, and thus the eigenfunctions and eigenvalues are real. This is a much stronger condition than the normality of a Koopman operator and thus the easiest to deal with.
For the case of reversible processes, \citet{Noe--Nuske2013} proposed a method to approximate eigenfunctions from time-series data, followed by the extended DMD (EDMD)~\citep{williams_edmd_2014}.

\subsection{Continuous-Time Dynamical Systems}
Let $(\xv_t)_{t \geq 0}$ be a time-homogeneous Markov process (e.g., Langevin dynamics). 
The \emph{Koopman semigroup} $\{\Uop^t\}_{t \geq 0} $ acts on observables $ f\suchthat \Xc \to \Real $ as $(\Uop^t f)(\xv_0) \triangleq \E_{p(\xv_t\mid\xv_0)}[f(\xv_t)]$.
This defines a strongly continuous semigroup satisfying $\Uop^0 = \Iop$ (identity operator) and $\Uop^t \Uop^s = \Uop^{t+s}$.
The \emph{(infinitesimal) generator} $\Lop$ of the Koopman semigroup is given by:
\[
(\Lop f)(\xv) \triangleq \lim_{t \to 0} \frac{(\Uop^t f)(\xv) - f(\xv)}{t}
= \frac{d}{dt} \E_{p(\xv_t\mid\xv_0)}[f(\xv_t)]
,
\]
where the limit is taken in the strong operator topology. 

In general, generators may not be bounded, and thus not compact.
Similar to \citep{Wu--Noe2020}, however, one can view that this discrete-time dynamics is a discretized version of a continuous dynamics with lag time $\tau>0$, and the Koopman operator of the discretized system is often compact; see \citep{Wu--Noe2020}.
Moreover, as argued in \citet{Kostic--Novelli--Grazzi--Lounici--Pontil2024iclr},
we can also directly apply the developed technique for special, yet important continuous-time dynamical systems such as an overdamped Langevin dynamics.
Unless stated explicitly (like in Section~\ref{sec:learning}), we will describe our techniques for discrete-time processes.

\subsection{Data-Driven Learning of Singular Subspaces: VAMPnet and DPNet}
\label{subsec:vampnet_dpnet}

Learning the dominant singular subspaces of the Koopman operator is a key goal for understanding complex dynamical systems, especially in data-driven modeling.
First, for non-normal operators, which commonly arise in irreversible or non-equilibrium dynamics, the singular value decomposition (SVD) provides the best possible low-rank approximation measured by Hilbert--Schmidt norm. Second, if a process is reversible and stationary (i.e., $\rho_0(\xv) = \rho_1(\xv)=\pi(\xv)$), the Koopman operator is self-adjoint. In this case, its SVD coincides with its eigenvalue decomposition (EVD). Third, even for irreversible processes, the dominant singular functions still capture essential dynamical features. For example, they can show kinetic distances between states, much like eigenfunctions do in reversible systems \citep{Paul_2019}, or also find coherent sets in changing Markov processes; these are generalized forms of long-lasting states \citep{Koltai_2018}.
See also \citep[Section~2.3]{Wu--Noe2020}. 

These theoretical advantages have motivated recent developments in neural network-based approaches, such as VAMPnet~\citep{Wu--Noe2020,Mardt--Pasquali--Wu--Noe2018vampnets} and DPNet~\citep{Kostic--Novelli--Grazzi--Lounici--Pontil2024iclr}, which aim to learn the top-$k$ singular subspace of the Koopman operator directly from trajectory data. Both methods are grounded in variational characterizations of the dominant singular subspaces, yet they differ significantly in their training objectives and optimization strategies. 
In the following, we briefly contrast these training approaches and highlight their common limitations. 
Here we focus on the training approaches, and defer the discussion on their inference procedures to Section~\ref{sec:inference}.
Also, unlike the score convention commonly used in the literature, we will follow the \emph{loss} convention throughout, whereby the training objective is formulated as a quantity to be minimized.

\subsubsection{Common Setup}
Suppose that we wish to capture the top-$k$ singular subspaces using neural networks $\xv\mapsto\fv(\xv)\defeq[f_1(\xv),\ldots,f_k(\xv)]^\intercal\in\Real^k$ and $\xv'\mapsto\gv(\xv')\defeq[g_1(\xv'),\ldots,g_k(\xv')]^\intercal\in\Real^k$. 
These are sometimes referred to as the \emph{encoder} and the \emph{lagged encoder}, respectively, which are intended to capture the left and right singular subspaces.
Since a Koopman operator, as a special example of canonical dependence kernels, always has the top singular functions $\phi_1(\xv)\equiv 1$ and $\psi_1(\xv')\equiv 1$ with singular value 1, we can simply set $f_1(\xv)\gets 1$ and $g_1(\xv')\gets 1$~\citep{Ryu--Xu--Erol--Bu--Zheng--Wornell2024}.
We refer to this operation as \emph{centering}, as it ensures that the remaining modes are \emph{centered}, i.e., orthogonal to the constant modes.
We adopt this parameterization by default, although many existing methods do not follow this practice.
Properly handling the constant modes is crucial, as mishandling them can lead to misleading results; see Appendix~\ref{app:related_work:lora} for further discussion.

In a variational optimization framework for SVD, the key quantities are \emph{second-moment matrices}. 
We thus introduce the following shorthand notation.
The \emph{second-moment matrix} of vector-valued functions $\hv_1(\cdot)$ and $\hv_2(\cdot)$ with respect to a distribution $\rho$ is denoted as
\[\secondmoment{\rho}{\hv_1,\hv_2} \defeq \E_{\rho(\xv)}[\hv_1(\xv)\hv_2(\xv)^\intercal],
\]
and we write $\secondmoment{\rho}{\hv} \defeq \secondmoment{\rho}{\hv,\hv}$ for shorthand.
The \emph{joint second-moment matrix} of $\hv_1(\cdot)$ and $\hv_2(\cdot)$ over the joint distribution $\rho_0(\xv)p(\xv'\mid\xv)$ is defined and denoted as
\[
\jointmoment{\hv_1,\hv_2} \defeq \E_{\rho_0(\xv)p(\xv'\mid\xv)}[\hv_1(\xv)\hv_2(\xv')^\intercal],
\]
which satisfies the identities $\jointmoment{\hv_1,\hv_2} = \secondmoment{\rho_0}{\hv_1,\Kop\hv_2} = \secondmoment{\rho_1}{\Kopadj\hv_1,\hv_2}$.

In what follows, we will define a \emph{population} objective function using the population second moment matrices
$\secondmoment{\rho_0}{\fv},
\secondmoment{\rho_1}{\gv}$, and $\jointmoment{\fv,\gv}$. 
In practice, these quantities
are to be estimated with  trajectories, and their empirical estimates are denoted with hats ($~\hat{}~$), i.e., $\empsecondmoment{\rho_0}{\fv}$, $\empsecondmoment{\rho_1}{\gv}$, and $\empjointmoment{\fv,\gv}$.

\subsubsection{VAMPnet}
For an integer $r\ge 1$, \citet{Wu--Noe2020} introduce the \emph{VAMP-$r$ objective}\footnote{We note that this expression corresponds to the \emph{maximal} VAMP-$r$ score in the original paper~\citep{Wu--Noe2020}, but we call this VAMP-$r$ objective, which is a slight abuse, as this is the training objective to train neural network basis.}
\begin{align*}
\Lc_{\mathsf{vamp}\text{-}r}(\fv,\gv)\defeq -\Bigl\|
(\secondmoment{\rho_0}{\fv})^{-\frac{1}{2}}
\jointmoment{\fv,\gv}
(\secondmoment{\rho_1}{\gv})^{-\frac{1}{2}}
\Bigr\|_r^r.
\numberthis\label{eq:vampnet}
\end{align*}
Here, $\|\cdot\|_r$ denotes the Schatten $r$-norm. 
The variational principle behind the VAMP-$r$ objective is explained in Appendix~\ref{app:vampnet} for completeness.
In practice with finite samples, to avoid the numerical instability in computing the inverse matrices, $\hat{\Lc}_{\mathsf{vamp}\text{-}r}^{(\lambda)}(\fv,\gv)\defeq \|
(\empsecondmoment{\rho_0}{\fv}+\lambda\Imat)^{-\frac{1}{2}}
\empjointmoment{\fv,\gv}
(\empsecondmoment{\rho_1}{\gv}+\lambda\Imat)^{-\frac{1}{2}}
\|_r^r$.
Tuning $\lambda$ in practice can be done by a cross validation.
In the literature, the use of $r=1$~\citep{Mardt--Pasquali--Wu--Noe2018vampnets} or $r=2$~\citep{Wu--Noe2020} has been advocated.

\subsubsection{DPNet}
\citet{Kostic--Novelli--Grazzi--Lounici--Pontil2024iclr} proposed an alternative objective, called the \emph{deep projection} (DP) objective,
\begin{align*}
\Lc_{\mathsf{dp}}^{(\gamma)}(\fv,\gv)\defeq 
-\Bigl\|
(\secondmoment{\rho_0}{\fv})^{-\frac{1}{2}}
\jointmoment{\fv,\gv}
(\secondmoment{\rho_1}{\gv})^{-\frac{1}{2}}
\Bigr\|_{\mathrm{F}}^2 
+\gamma(\Rc(\secondmoment{\rho_0}{\fv}) + \Rc(\secondmoment{\rho_1}{\gv})),
\numberthis\label{eq:DPNet}
\end{align*}
where they further introduced the metric distortion loss $\Rc\colon \Sb_+^L \to \Real_+$ defined as $\Rc(\Mm)\defeq \tr(\Mm^2 - \Mm - \ln \Mm)$ for $\Mm\succeq0$. 
Note that with $\gamma=0$, the objective becomes equivalent to the VAMP-2 objective.
To detour the potential numerical instability of the first term, which is the VAMP-2 objective, the authors further proposed a relaxed objective called the DP-relaxed objective 
\begin{align*}
\Lc_{\mathsf{dpr}}^{(\gamma)}(\fv,\gv)\defeq 
-\frac{\|\jointmoment{\fv,\gv}\|_{\mathrm{F}}^2}{\|\secondmoment{\rho_0}{\fv}\|_{\mathrm{op}} \|\secondmoment{\rho_1}{\gv}\|_{\mathrm{op}}} + \gamma(\Rc(\secondmoment{\rho_0}{\fv}) + \Rc(\secondmoment{\rho_1}{\gv})),
\numberthis\label{eq:DPNet-relaxed}
\end{align*}
where $\|\cdot\|_{\mathrm{op}}$ denotes the operator norm.
In both cases, \citet{Kostic--Novelli--Grazzi--Lounici--Pontil2024iclr} argued that using $\gamma>0$ is crucial for improving the quality of the learned subspaces. 
In contrast, the scheme we propose below does not require such regularization, and we empirically found that it offers no benefit.

\subsection{Practical Limitations of VAMPnet and DPNet} 
Although these objectives are well-founded in the population limit for characterizing the desired singular subspaces, they do not permit efficient optimization via modern mini-batch-based training.  

To compute the VAMP-1 objective~\citep{Mardt--Pasquali--Wu--Noe2018vampnets}, one must evaluate the matrix square root inverse of $\empsecondmoment{\rho_0}{\fv}$ and $\empsecondmoment{\rho_1}{\gv}$, as well as the nuclear norm of the matrix $(\empsecondmoment{\rho_0}{\fv})^{-1/2} \empjointmoment{\fv,\gv} (\empsecondmoment{\rho_1}{\gv})^{-1/2}$. These computations involve numerical linear algebra operations such as eigenvalue decomposition and singular value decomposition of empirical second-moment matrices.
Such inverses may be ill-defined or numerically unstable when $\mathsf{span}(\fv)$ is nearly rank-deficient during optimization, and further, backpropagating through these numerical operations can introduce instability during training.\footnote{In the PyTorch implementation, functions such as \texttt{lstsq}, \texttt{eigh}, and \texttt{matrix\_norm} from the \texttt{torch.linalg} package are used; see, e.g., the PyTorch implementations of VAMPnet in \href{https://github.com/deeptime-ml/deeptime/blob/126e443222234fefaf59c13ed3c47516fc898b79/deeptime/decomposition/deep/_vampnet.py\#L182}{\texttt{deeptime}} and \href{https://github.com/Machine-Learning-Dynamical-Systems/kooplearn/blob/fd3dfb0ba1f92c6d3503636bc1d6e26108362232/src/kooplearn/torch/nn/_functional.py\#L10}{\texttt{kooplearn}} libraries. For example, eigendecomposition (\texttt{torch.linalg.eigh}) may be numerically unstable when the matrices are ill-conditioned, since the gradients of eigenvectors scale inversely with eigenvalue gaps, which can lead to gradient explosions in nearly degenerate subspaces.} 
Moreover, since empirical second-moment matrices are estimated from mini-batch samples, the resulting gradients can be highly biased, potentially slowing convergence during optimization.

The VAMP-2 objective~\citep{Wu--Noe2020} suffers from similar issues, as it also requires computation of a matrix inside the Frobenius norm.  
The DPNet objective in Eq.~\eqref{eq:DPNet} introduces the additional metric distortion loss $\Rc(\Mm)$, which inherits both of the aforementioned issues.  
Similarly, the operator norms in the denominator of the DPNet-relaxed objective in Eq.~\eqref{eq:DPNet-relaxed} are subject to the same challenges.

\section{Proposed Methods}
In this section, we introduce an optimization framework based on the idea of \emph{low-rank approximation} (LoRA), which circumvents the issues in the existing proposals. 
Although LoRA has been recently explored in various contexts~\citep{Liu--Wen--Zhang2015,Wen--Yang--Liu--Zhang2016,Wang--Wu--Huang--Zheng--Xu--Zhang--Huang2019,Tsai--Zhao--Yamada--Morency--Salakhutdinov2020,HaoChen--Wei--Gaidon--Ma2021,Ryu--Xu--Erol--Bu--Zheng--Wornell2024,Xu--Zheng2024,Kostic--Pacreau--Turri--Novelli--Lounici--Pontil2024neurips}, its application to learning \emph{parametric singular functions} of the Koopman operator remains largely unexplored.
A detailed discussion of related formulations and prior work on LoRA is provided in Appendix~\ref{app:related_work}.
After describing the learning procedure, we investigate two inference methods that build upon the learned representations.

\subsection{Learning}
\label{sec:learning}
To bypass the numerical and optimization challenges, we propose to directly minimize the low-rank approximation error $\|\Kop - \sum_{i=1}^k f_i\otimes g_i\|_{\mathrm{HS}}^2$ to find the Koopman singular functions. 
Succinctly, the learning objective, whose derivation is provided in Appendix~\ref{app:lora_derivation}, can be expressed as
\begin{align*}
\loraobj(\fv, \gv)
&\defeq -2\tr(\jointmoment{\fv,\gv}) 
+ \tr(\secondmoment{\rho_0}{\fv}\secondmoment{\rho_1}{\gv}).
\numberthis\label{eq:lora_compact}
\end{align*}

The celebrated Eckart--Young--Mirsky theorem~\citep{Eckart--Young1936,Mirsky1960} (or, more precisely, Schmidt's theorem~\citep{Schmidt1907}) establishes that this objective precisely characterizes the singular subspaces of $\Kop$ as a global optimum.
\begin{proposition}[Optimality of LoRA loss; see, e.g., {\citep[Theorem~3.1]{Ryu--Xu--Erol--Bu--Zheng--Wornell2024}}]
\label{prop:lora_optimality}
Let $\Kop\suchthat \hilbert{\rho_1}{\Xc}\to\hilbert{\rho_0}{\Xc}$ be a compact operator having SVD $\sum_{i=1}^{\infty} \sigma_i \phi_i\otimes\psi_i$ with $\sigma_1\ge\sigma_2\ge \ldots\ge 0$.
Let $(\fv^\star,\gv^\star)$ denote a global minimizer of $\loraobj(\fv, \gv)$. 
If $\sigma_k>\sigma_{k+1}$, then 
$\sum_{i=1}^k f_i\otimes g_i=\sum_{i=1}^k \sigma_i \phi_i\otimes\psi_i$. 
\end{proposition}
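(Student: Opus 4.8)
The plan is to recognize $\loraobj$ as the rank-$k$ Hilbert--Schmidt reconstruction error of $\Kop$, up to an additive constant that does not depend on $(\fv,\gv)$, and then to invoke the Eckart--Young--Mirsky (Schmidt) theorem together with the spectral-gap hypothesis. Write $M \defeq \sum_{i=1}^k f_i\otimes g_i$, regarded as a finite-rank operator $\hilbert{\rho_1}{\Xc}\to\hilbert{\rho_0}{\Xc}$ with $(f\otimes g)(h)\defeq\langle g,h\rangle_{\rho_1}f$. First I would record two elementary identities. Using the stated relation $\jointmoment{\fv,\gv}=\secondmoment{\rho_0}{\fv,\Kop\gv}$ entrywise, $\tr(\jointmoment{\fv,\gv})=\sum_i\langle f_i,\Kop g_i\rangle_{\rho_0}=\langle \Kop,M\rangle_{\mathrm{HS}}$. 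Using $\langle f_i\otimes g_i,\,f_j\otimes g_j\rangle_{\mathrm{HS}}=\langle f_i,f_j\rangle_{\rho_0}\langle g_i,g_j\rangle_{\rho_1}=(\secondmoment{\rho_0}{\fv})_{ij}(\secondmoment{\rho_1}{\gv})_{ij}$ and summing, $\|M\|_{\mathrm{HS}}^2=\tr(\secondmoment{\rho_0}{\fv}\secondmoment{\rho_1}{\gv})$. Completing the square then gives
\begin{align*}
\loraobj(\fv,\gv)=\|M\|_{\mathrm{HS}}^2-2\langle\Kop,M\rangle_{\mathrm{HS}}=\|\Kop-M\|_{\mathrm{HS}}^2-\|\Kop\|_{\mathrm{HS}}^2,
\end{align*}
so that minimizing $\loraobj$ over $(\fv,\gv)$ is the same as minimizing $\|\Kop-M\|_{\mathrm{HS}}^2$ over operators $M$ of rank at most $k$ (every such operator admits a representation $\sum_{i=1}^k f_i\otimes g_i$).

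Second, I would invoke Schmidt's theorem / Eckart--Young--Mirsky: among operators of rank at most $k$, the truncated SVD $\Kop_k\defeq\sum_{i=1}^k\sigma_i\phi_i\otimes\psi_i$ attains the minimal Hilbert--Schmidt error, and when $\sigma_k>\sigma_{k+1}$ this minimizer is unique \emph{as an operator}. Hence any global minimizer $(\fv^\star,\gv^\star)$ of $\loraobj$ must satisfy $\sum_{i=1}^k f_i^\star\otimes g_i^\star=\Kop_k=\sum_{i=1}^k\sigma_i\phi_i\otimes\psi_i$, which is exactly the claim. I would emphasize that the proposition only asserts uniqueness of the \emph{sum} $\sum_i f_i^\star\otimes g_i^\star$, not of the individual $f_i^\star,g_i^\star$ (which are pinned down only up to an invertible change of basis and sign flips), so nothing further is needed there; existence of a minimizer is part of the hypothesis and is in any case witnessed by $\Kop_k$ itself.

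The main obstacle is the integrability bookkeeping hidden in the step "minimizing $\loraobj$ equals minimizing $\|\Kop-M\|_{\mathrm{HS}}^2$": the objective in Eq.~\eqref{eq:lora_compact} is finite for any $\fv,\gv$ with finite second moments, whereas $\|\Kop\|_{\mathrm{HS}}$ is finite only if $\Kop$ is Hilbert--Schmidt, which is stronger than the standing compactness assumption. I would handle this by fixing $M=\sum_{i=1}^k f_i\otimes g_i$ with range in a subspace $V$ and co-range in a subspace $W$ (both of dimension $\le k$), noting that $M=\proj_V M\proj_W$ and hence $\langle\Kop,M\rangle_{\mathrm{HS}}=\langle \proj_V\Kop\proj_W,M\rangle_{\mathrm{HS}}$ by self-adjointness of the projections and cyclicity of the (finite-rank) trace, where the compression $\proj_V\Kop\proj_W$ is automatically Hilbert--Schmidt. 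Completing the square against this compression yields $\loraobj(\fv,\gv)=\|M-\proj_V\Kop\proj_W\|_{\mathrm{HS}}^2-\|\proj_V\Kop\proj_W\|_{\mathrm{HS}}^2$, so the optimization reduces to $\max_{\dim V,\dim W\le k}\|\proj_V\Kop\proj_W\|_{\mathrm{HS}}^2$, which the Ky Fan / Courant--Fischer characterization of singular values solves, uniquely under $\sigma_k>\sigma_{k+1}$, via $V=\mathrm{span}\{\phi_i\}_{i\le k}$ and $W=\mathrm{span}\{\psi_i\}_{i\le k}$, with the optimal $M$ equal to $\proj_V\Kop\proj_W=\Kop_k$. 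When $\Kop$ is itself Hilbert--Schmidt --- the regime implicitly in force throughout the paper --- this reduction is unnecessary and one may cite \citep[Theorem~3.1]{Ryu--Xu--Erol--Bu--Zheng--Wornell2024} directly; the remaining details are the standard spectral content of Eckart--Young--Mirsky.
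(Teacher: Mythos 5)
Your proposal is correct and takes essentially the same route as the paper: your two trace identities and the completion of the square are exactly the content of the paper's Proposition~\ref{prop:alt_expression} (equating $\loraobj$ with the rank-$k$ Hilbert--Schmidt approximation error up to an additive constant), after which the paper likewise concludes by invoking Schmidt/Eckart--Young via \citep[Theorem~3.1]{Ryu--Xu--Erol--Bu--Zheng--Wornell2024}. Your additional compression step with $\proj_V \Kop \proj_W$ is a sensible refinement rather than a departure, since the paper's derivation tacitly subtracts $\|\Kop\|_{\mathrm{HS}}^2$ and thus implicitly assumes $\Kop$ is Hilbert--Schmidt, whereas the proposition as stated only assumes compactness.
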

Moreover, owing to the simple form of the objective, its learning-theoretic properties are amenable to analysis. 
In particular, under a mild boundedness assumption, the empirical objective ${\emploraobj}(\fv, \gv)$ converges to the population objective ${\loraobj}(\fv, \gv)$ at the rate of $O_{\mathbb{P}}(N^{-1/2})$, when $N$ denotes the number of pairs from the trajectory data.
We defer the statement of Theorem~\ref{thm:learning} to Appendix~\ref{app:learning_theoretic}.

A notable property of the LoRA objective, compared to the VAMPnet and DPNet objectives, is that it is expressed entirely as a \emph{polynomial} in the second-moment matrices.
As a result, its gradient can be naturally estimated in an unbiased manner, in contrast to VAMPnet and DPNet, which require backpropagation through numerical linear-algebra operations.
This property is particularly advantageous when optimizing large-scale models with moderately sized minibatches.

\textbf{Special Case: Reversible Continuous-Time Dynamics.}
As argued in \citep{Kostic--Novelli--Grazzi--Lounici--Pontil2024iclr}, we can also directly analyze reversible continuous-time dynamics, which includes an important example of (overdamped) Langevin dynamics.
In this special case, we can apply the spectral techniques under a weaker assumption than compactness; for example, we only require the largest eigenvalue to be separated from its essential spectrum; see, e.g., \citep[Section~III.4]{Kato1980}.
Our objective in Eq.~\eqref{eq:lora_compact} simplifies to
\begin{align}
\label{eq:lora_sa}
\loraobj^{\mathsf{sa}}(\fv)
&\defeq -2\tr(\secondmoment{\rho_0}{\fv,\Lop\fv}) 
+ \|\secondmoment{\rho_0}{\fv}\|_{\mathrm{F}}^2,
\end{align}
where $\secondmoment{\rho_0}{\fv,\Lop\fv}\defeq \E[\fv(\xv)(\Lop\fv)(\xv)^\intercal]$ is plugged in in place of $\jointmoment{\fv,\gv}=\secondmoment{\rho_0}{\fv,\Kop \gv}$.
\citet[Theorem~C.5]{Ryu--Xu--Erol--Bu--Zheng--Wornell2024} show that the LoRA objective applied on a (possibly non-compact) self-adjoint operator can find the positive eigenvalues that are above its essential spectrum.
We describe a special example of stochastic differential equations in Appendix~\ref{app:langevin}.

\textbf{Nesting Technique for Learning Ordered Singular Functions.}
As introduced in \citep{Ryu--Xu--Erol--Bu--Zheng--Wornell2024}, we can apply the \emph{nesting} technique to directly learn the ordered singular functions.
We note that learning the ordered singular functions is not an essential procedure, given that we only require well-learned singular subspaces during inference, as we explain below. 
We empirically found, however, that LoRA with nesting consistently improves overall downstream task performance. 
We conjecture that the nesting technique helps the parametric models to focus on the most important signals, and thus improves the overall convergence.

The key idea of nesting is to solve the LoRA problem for all dimensions $i\in[k]$ simultaneously.
There are two versions proposed in \citep{Ryu--Xu--Erol--Bu--Zheng--Wornell2024}, \emph{joint} and \emph{sequential} nesting. 
On one hand, in \emph{joint nesting}, we simply aim to minimize a single objective $\Lc_{\lora}^{\mathsf{joint}}(\fv,\gv)\defeq \sum_{i=1}^k \a_i\Lc_{\lora}(\fv_{1:i},\gv_{1:i})$ for any choice of positive weights $\a_1,\ldots,\a_k>0$, while the uniform weighting has been proven to be the most effective~\citep{Ryu--Xu--Erol--Bu--Zheng--Wornell2024}. The joint objective characterizes the ordered singular functions as its unique global optima.
On the other hand, \emph{sequential nesting} iteratively update the $i$-th function pair $(f_i,g_i)$, using their gradient from $\Lc_{\lora}(\fv_{1:i},\gv_{1:i})$, as if the previous modes $(\fv_{1:i-1},\gv_{1:i-1})$ were perfectly fitted to the top-$(i-1)$
singular-subspaces. 
Given that 
different modes are independently parameterized, one can use an inductive argument to show the convergence of sequential nesting.
Both joint and sequential nesting can be implemented efficiently, with almost no additional computational cost compared to the LoRA objective without nesting.
We defer the details to Appendix~\ref{app:implementation}.

\citet{Ryu--Xu--Erol--Bu--Zheng--Wornell2024} advocate using sequential nesting for separately parameterized networks, while recommending joint nesting otherwise.
In all our experiments, however, we employed two neural networks $\fv_\th$ and $\gv_\th$ (i.e., assuming joint parameterization), and empirically observed that sequential nesting achieved convergence comparable to joint nesting.
Thus, a practitioner may by default adopt joint nesting as a principled choice for jointly parameterized networks, while regarding sequential nesting as another option as a working heuristic.
In the molecular dynamics experiment below, we found that the nesting technique can provide a stabilization effect during training; see Section~\ref{sec:exps_md} and Appendix~\ref{app:chig}.

\textbf{Explicit Parameterization of Singular Values.}
An alternative, yet natural parameterization is to explicitly parameterize the singular values by learnable parameters $\boldsymbol{\gamma}\in\Real_{\ge 0}^k$, and plug in $\fv\gets \sqrt{\boldsymbol{\gamma}}\odot \fv$ and $\gv\gets \sqrt{\boldsymbol{\gamma}}\odot \gv$ to the LoRA objective in Eq.~\eqref{eq:lora_compact}, under the unit-norm constraints $\|f_i\|_{\rho_0} 
=\|g_i\|_{\rho_1} =1$ for all $i\in[k]$. Then, we get the explicit LoRA objective
$\loraobj^{\mathsf{explicit}}(\boldsymbol{\gamma},\fv, \gv)
\defeq -2\tr(\jointmoment{\sqrt{\boldsymbol{\gamma}}\odot\fv,\sqrt{\boldsymbol{\gamma}}\odot\gv}) 
+ \tr(\secondmoment{\rho_0}{\sqrt{\boldsymbol{\gamma}}\odot\fv}\secondmoment{\rho_1}{\sqrt{\boldsymbol{\gamma}}\odot\gv})$.
In a similar spirit to \citep{Kostic--Novelli--Grazzi--Lounici--Pontil2024iclr}, \citet{Kostic--Pacreau--Turri--Novelli--Lounici--Pontil2024neurips} proposed a regularized objective $\loraobj^{\mathsf{explicit}}(\boldsymbol{\gamma},\fv, \gv)+\lambda (\Rc_{\rho_0}'[\fv]+\Rc_{\rho_1}'[\gv])$, where $\Rc_{\rho_0}'[\fv]\defeq \|\secondmoment{\rho_0}{\fv}-\Imat\|_{\mathrm{F}}^2 + 2\|\E_{\rho_0}[\fv]\|^2$ and $\Rc_{\rho_1}'[\gv]$ is similarly defined.
An alternative implementation of the explicit parameterization involves $L_2$-batch normalization, as suggested by \citet{Deng--Shi--Zhu2022}. 
We experimented both formulations and empirically observed that the original parameterization in Eq.~\eqref{eq:lora_compact} performs well in practice, and no performance improvement was observed with these regularization approaches.

\subsection{Inference}
\label{sec:inference}
After fitting $\fv(\cdot)$ and $\gv(\cdot)$ to the top-$k$ singular subspaces, we can perform the downstream tasks such as (1) finding ordered singular functions, (2) performing eigen-analysis, and (3) multi-step prediction.
We describe two approaches, each of which is a slightly extended version from VAMPnet and DPNet, respectively.
The two approaches are based on rather different principles: Approach 1 (VAMPnet type) is based on estimating the Koopman operator by LoRA using both left and right singular functions, while Approach 2 (DPNet type) uses one set of basis and performs downstream tasks by projecting the Koopman operator onto the span of the basis.
In the experiments below, we will compare and discuss their pros and cons.
For simplicity, we describe the procedures using population quantities; in practice, they are replaced with empirical estimates (i.e., $\Mm \gets \hat{\Mm}$ and $\Tm \gets \hat{\Tm}$).

\begin{table}[t]
\centering
\caption{Summary of inference procedures given learned singular functions (or basis) $\fv,\gv$.}\vspace{.5em}
{\footnotesize
\begin{tabular}{l l l}
\toprule
& \textbf{Approach 1. CCA + LoRA} & \textbf{Approach 2. EDMD} ($\bv\in\{\fv,\gv\}$)\\
\midrule
{CCA step 1: Whitening} & \makecell[l]{$\tilde{\fv}(\xv)\defeq (\secondmoment{\rho_0}{\fv})^{-1/2}\fv(\xv)$\\ $\tilde{\gv}(\xv')\defeq (\secondmoment{\rho_1}{\gv})^{-1/2}\gv(\xv')$} & \makecell[c]{N/A}\\
{CCA step 2: SVD} & $\jointmoment{\tilde{\fv},\tilde{\gv}}
=\Um\Sigma\Vm^\intercal$ & \makecell[c]{N/A}\\
Ordered singular functions & \makecell[l]{$\tilde{\phiv}(\xv)=\Sigma^{1/2}\Um^\intercal\tilde{\fv}(\xv)$\\
$\tilde{\psiv}(\xv')=\Sigma^{1/2}\Vm^\intercal\tilde{\gv}(\xv')$} & \makecell[c]{N/A} \\
\midrule
Approximate Koopman matrix 
& \makecell[l]{${\Km}_{\tilde{\phiv},\tilde{\psiv}}^{\mathsf{right}}\defeq \secondmoment{\rho_1}{\tilde{\psiv},\tilde{\phiv}}$ (right)\\${\Km}_{\tilde{\phiv},\tilde{\psiv}}^{\mathsf{left}}\defeq \secondmoment{\rho_0}{\tilde{\psiv},\tilde{\phiv}}$ (left)} 
& \makecell[l]{$\Km_{\bv}^{\mathsf{ols}}\defeq (\secondmoment{\rho_0}{\bv})^+\jointmoment{\bv}$ (right)\\
$\Km_{\bv}^{+,\mathsf{ols}}\defeq (\secondmoment{\rho_1}{\bv})^+\jointmoment{\bv}$ (left)} 
\\
Forward $\E_{p(\xv_t\mid\xv_0)}[h(\xv_t)]$
& 
$\tilde{\phiv}(\xv_0)^\intercal
(
{\Km}_{\tilde{\phiv},\tilde{\psiv}}^{\mathsf{right}}
)^{t-1} 
\langle h,\tilde{\psiv}\rangle_{\rho_1}$
& $\bv(\xv_0)^\intercal (\Km_{\bv}^{\mathsf{ols}})^t
(\secondmoment{\rho_0}{\bv})^{+}\langle h,\bv\rangle_{\rho_0}$
\\
Backward $\E_{p(\xv_{0}\mid\xv_t)}[h(\xv_{0})]$
&
$ \tilde{\psiv}(\xv_t)^\intercal
(
{\Km}_{\tilde{\phiv},\tilde{\psiv}}^{\mathsf{left},\intercal}
)^{t-1} 
\langle h,\tilde{\phiv}\rangle_{\rho_0}$
& 
$\bv(\xv_0)^\intercal (\Km_{\bv}^{+,\mathsf{ols}})^t
(\secondmoment{\rho_1}{\bv})^{+}\langle h,\bv\rangle_{\rho_1}$
\\
\bottomrule
\end{tabular}}
\label{tab:inference}
\end{table}

\subsubsection{Approach 1: CCA + LoRA}
In a similar spirit to the inference procedure of VAMPnet~\citep{Wu--Noe2020}, we can perform a canonical correlation analysis (CCA)~\citep{Hotelling1936} to retrieve the (ordered) singular values and singular functions as follows, given that $\fv(\cdot)$ and $\gv(\cdot)$ capture left and right singular subspaces.
First, we define the \emph{whitened} basis functions $\tilde{\fv}(\xv)\defeq (\secondmoment{\rho_0}{\fv})^{-1/2}\fv(\xv)$ and $\tilde{\gv}(\xv')\defeq (\secondmoment{\rho_1}{\gv})^{-1/2}\gv(\xv')$.
Then, we perform the SVD of the joint second moment matrix
$%
\jointmoment{\tilde{\fv},\tilde{\gv}}
=\Um\Sigma\Vm^\intercal,$
where $\Um\in\Real^{k\times r},\Sigma\in\Real^{r\times r},\Vm\in\Real^{k\times r}$.
We define \emph{aligned} singular functions as 
\begin{align}
\tilde{\phiv}(\xv) &\defeq \Sigma^{1/2}
\Um^\intercal \tilde{\fv}(\xv)\in\Real^r
\quad\text{and}\quad
\tilde{\psiv}(\xv') \defeq \Sigma^{1/2}
\Vm^\intercal \tilde{\gv}(\xv)\in\Real^r,
\label{eq:aligned_singular_functions_from_svd}
\end{align}
and approximate the transition kernel as $k(\xv,\xv')=\frac{p(\xv'\mid\xv)}{\rho_1(\xv')}\approx \tilde{\phiv}(\xv)^\intercal\tilde{\psiv}(\xv')=\tilde{\fv}(\xv)^\intercal \jointmoment{\tilde{\fv},\tilde{\gv}}\tilde{\gv}(\xv').$
Hence, compared to the direct approximation $\fv(\xv)^\intercal\gv(\xv')$, 
the CCA procedure \emph{whitens} (by $\secondmoment{\rho_0}{\fv}$ and $\secondmoment{\rho_1}{\gv}$) and \emph{corrects} (by SVD of $\jointmoment{\tilde{\fv},\tilde{\gv}}$) the given basis. 
In a real-world scenario, the CCA alignment can always help $\fv,\gv$ better aligned, and improve the quality of singular value estimation.

Given this finite-rank approximation, the eigenfunctions can be reconstructed using the following approximate Koopman matrices, based on the theory developed for finite-rank Koopman operators in Appendix~\ref{app:finite_rank}. Specifically, the eigenvalues of $\Kop$ and the right eigenfunctions can be approximated using the matrix ${\Km}_{\tilde{\phiv},\tilde{\psiv}}^{\mathsf{right}} \defeq \secondmoment{\rho_1}{\tilde{\psiv},\tilde{\phiv}}$; see Theorem~\ref{thm:right_eigen}. Similarly, the left eigenfunctions and corresponding eigenvalues can be approximated using ${\Km}_{\tilde{\phiv},\tilde{\psiv}}^{\mathsf{left}} \defeq \secondmoment{\rho_0}{\tilde{\psiv},\tilde{\phiv}}$; see Theorem~\ref{thm:left_eigen}.
We note that this eigen-analysis using the finite-rank approximation is new compared to \citep{Wu--Noe2020}.

Lastly, given the LoRA $\frac{p(\xv'\mid\xv)}{\rho_1(\xv')}\approx \tilde{\phiv}(\xv)^\intercal\tilde{\psiv}(\xv')$, the conditional expectation can be approximated as
\begin{align*}
\E_{p(\xv'\mid\xv)}[h(\xv')] = \E_{\rho_1(\xv')}\biggl[\frac{p(\xv'\mid\xv)}{\rho_1(\xv')}h(\xv')\biggr] \approx \sum_{i=1}^k \tilde{\phi}_i(\xv) \langle \tilde{\psi}_i,h\rangle_{\rho_1}
=\tilde{\phiv}(\xv)^\intercal \langle h,\tilde{\psiv}\rangle_{\rho_1},
\end{align*}
where we let
$\langle h, \tilde{\psiv}\rangle_{\rho_1}
\defeq \begin{bmatrix}
\langle h,\tilde{\psi}_1\rangle_{\rho_1}
,\ldots,
\langle h,\tilde{\psi}_k\rangle_{\rho_1}
\end{bmatrix}^\intercal\in\Real^{k}$.
Extending the reasoning, we can obtain an approximate multi-step prediction as
\begin{align}
\!\!\!
\E_{p(\xv_t\mid\xv_0)}[h(\xv_t)]
&\approx \tilde{\phiv}(\xv_0)^\intercal
(
{\Km}_{\tilde{\phiv},\tilde{\psiv}}^{\mathsf{right}}
)^{t-1} 
\langle h,\tilde{\psiv}\rangle_{\rho_1}
\nonumber\\&
= %
{
\tilde{\fv}(\xv_0)^\intercal \jointmoment{\tilde{\fv},\tilde{\gv}} 
(\secondmoment{\rho_1}{\tilde{\gv},\tilde{\fv}}
\jointmoment{\tilde{\fv},\tilde{\gv}})^{t-1} \langle h,\tilde{\gv}\rangle_{\rho_1}
}.
\label{eq:multi_step_via_svd}
\end{align}
If $\secondmoment{\rho_1}{\tilde{\gv},\tilde{\fv}}
\jointmoment{\tilde{\fv},\tilde{\gv}}$ is diagonalizable, we can perform its EVD to make the matrix power computation more efficient.
Similarly, we can perform the multi-step \emph{backward} prediction using ${\Km}_{\tilde{\phiv},\tilde{\psiv}}^{\mathsf{left}}$ as follows:
\begin{align}
\!\!\!
\E_{p(\xv_{0}\mid\xv_t)}[h(\xv_{0})]
\approx
{
\tilde{\gv}(\xv_0)^\intercal \jointmoment{\tilde{\gv},\tilde{\fv}} 
(\secondmoment{\rho_0}{\tilde{\fv},\tilde{\gv}}
\jointmoment{\tilde{\gv},\tilde{\fv}})^{t-1} \langle h,\tilde{\fv}\rangle_{\rho_0}
}.\!\!
\label{eq:multi_step_via_svd_backward}
\end{align}

\subsubsection{Approach 2: Extended DMD} 
Once a good subspace $\mathsf{span}(\bv)$ has been learned using some basis function $\bv\colon\Xc\to\Real^{k}$,
\citet{Kostic--Novelli--Grazzi--Lounici--Pontil2024iclr} proposed to perform the \emph{operator regression}~\citep{Kostic--Novelli--Maurer--Ciliberto--Rosasco--Ponti2022} (also known as \emph{principal component regression}~\citep{Kostic--Lounici--Novelli--Pontil2023}), which is essentially the EDMD~\citep{williams_edmd_2014}. 
{The EDMD approximates the Koopman operator by ${\Km}_{\bv}^{\mathsf{ols}}\defeq (\secondmoment{\rho_0}{\bv})^{+} \jointmoment{\bv}\in\Real^{k\times k}$, which can be understood as the best finite-dimensional approximation of the Koopman operator $\Kop$ restricted on $\mathsf{span}(\bv)$, in the sense that $(\Kop \bv)(\xv) = \E_{p(\xv'\mid\xv)}[\bv(\xv')] \approx \Km\bv(\xv)$.
Now, given a function $h\in\mathsf{span}(\bv)$, we can again choose the least-squares solution $\zv_{\bv}^{\mathsf{ols}}(h)\defeq (\secondmoment{\rho_0}{\bv})^{+}\langle h,\bv\rangle_{\rho_0}$ to find the best $\zv$ such that $h(\xv)\approx \zv^\intercal \bv(\xv)$ for $\xv\sim\rho_0(\xv)$. 
Given this, we can finally approximate the multi-step prediction as 
\begin{align}    
\E_{p(\xv_t\mid\xv_0)}[h(\xv_t)]
\approx 
\bv(\xv_0)^\intercal (\Km_{\bv}^{\mathsf{ols}})^t\zv_{\bv}^{\mathsf{ols}}(h)
=
\bv(\xv_0)^\intercal ((\secondmoment{\rho_0}{\bv})^{+} \jointmoment{\bv})^t
(\secondmoment{\rho_0}{\bv})^{+}\langle h,\bv\rangle_{\rho_0}
.
\end{align}

Applying the same logic to the adjoint operator $\Kopadj$, we can perform the backward prediction as
\begin{align}    
\E_{p(\xv_0\mid\xv_t)}[h(\xv_0)]
\approx 
\bv(\xv_t)^\intercal ((\secondmoment{\rho_1}{\bv})^{+} \jointmoment{\bv}^\intercal)^t
(\secondmoment{\rho_1}{\bv})^{+}\langle h,\bv\rangle_{\rho_1}
.
\end{align}
We defer a more detailed derivation of the EDMD predictions to Appendix~\ref{app:edmd_multistep}.
We note that \citet{Kostic--Novelli--Grazzi--Lounici--Pontil2024iclr} proposed to use the \emph{left} singular basis $\fv$ for the basis $\bv$, whereas we empirically found that using the \emph{right} singular basis $\gv$ yields comparable results.

\section{Experiments}

We demonstrate the efficacy of the proposed techniques using the experimental suite of \citep{Kostic--Novelli--Grazzi--Lounici--Pontil2024iclr}.
Unless stated otherwise, all experimental settings are mostly identical to those in \citep{Kostic--Novelli--Grazzi--Lounici--Pontil2024iclr} except the molecular dynamics experiment. All technical details and configurations are provided in Appendix~\ref{app:exp}.
The appendix also includes an additional experiment on an instance of a 1D noisy logistic map, whose Koopman operator has finite rank.
We defer this result to the appendix, as most methods perform comparably in this simple setting.
Our PyTorch~\citep{pytorch2019} implementation is publicly available at \url{https://github.com/MinchanJeong/NeuralKoopmanSVD}.

\subsection{Ordered MNIST}
We considered the ordered MNIST example, a synthetic experiment which was first considered in \citep{Kostic--Novelli--Maurer--Ciliberto--Rosasco--Ponti2022}: given an MNIST image $\xv_t$ with digit $y_t\in\{0,\ldots,4\}$, $\xv_{t+1}$ is drawn at random from the MNIST images of digit $y_{t+1} = y_t+1 ~(\text{mod}~5)$.
While this process is not time-reversible, the process is clearly normal. 
We tested VAMPnet-1, DPNet, DPNet-relaxed, and sequentially and jointly nested version of LoRA.
For each method, we trained singular basis parameterized by convolutional neural networks with 100 epochs using 10 random seeds.

We evaluated the multi-step prediction performance of each method, by computing (1) the accuracy using an oracle classifier similar to \citep{Kostic--Novelli--Grazzi--Lounici--Pontil2024iclr}, and (2) root mean squared error (RMSE) of the prediction evaluated on the test data.
The multi-step prediction performed with EDMD$(\gv)$ is reported in the first row of Figure~\ref{fig:ordered_MNIST}.
We highlight that LoRA and its variants consistently outperform the other methods, exhibiting robust RMSE performance over a range of prediction steps $t\in\{-15,\ldots,15\}$.
Notably, the nesting techniques helped improve the RMSE, especially the joint nesting worked best.
We also remark in passing that, unlike the significant failure reported in \citep{Kostic--Novelli--Grazzi--Lounici--Pontil2024iclr}, the VAMPnet-1 prediction quality is comparable to DPNet in both metrics.

In the second row, we showed the performance of different prediction methods with the basis learned with \jntlora{}. 
We note that the quality is very close to each other, and CCA+LoRA prediction seems to follow the trend of EDMD($\fv$) for forward prediction and EDMD($\gv$) for backward prediction. 
We also remark that the prediction quality with CCA+LoRA when $t=1$ is particularly bad, which we conjecture to be caused by the absence of the action of Koopman matrices; see Eq.~\eqref{eq:multi_step_via_svd} and Eq.~\eqref{eq:multi_step_via_svd_backward}.

\begin{figure}[thb]
    \centering
    \includegraphics[width=.9\linewidth]{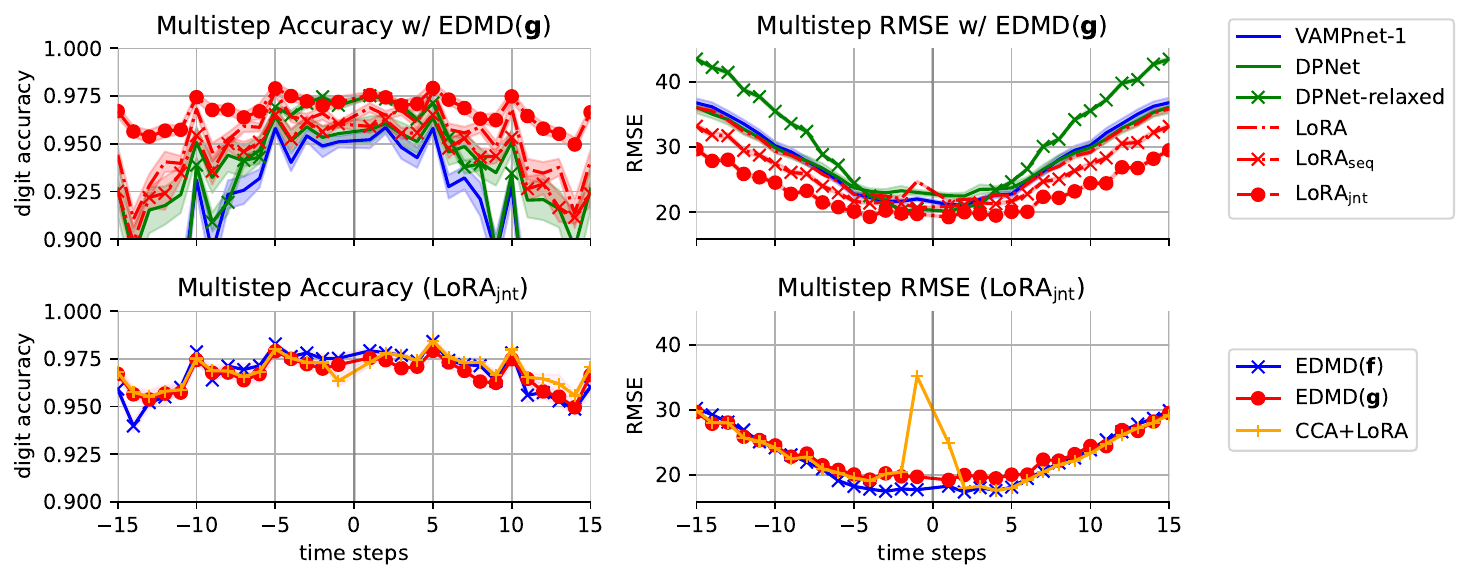}
    \caption{Summary of the MNIST experiment. The shaded area indicates $\pm 1$ standard deviation.\vspace{-1em}}
    \label{fig:ordered_MNIST}
\end{figure}

\subsection{Langevin Dynamics}
We also tested the performance of \seqlora{} with the objective in Eq.~\eqref{eq:lora_sa} to learn the eigenfunctions of a 1D Langevin dynamics, which is a continuous-time, time-reversible process; see Appendix~\ref{app:langevin} for the stochastic differential equation used in the experiment.

We drew a sample trajectory of length $7\times 10^4$ by the Euler--Maruyama scheme and used it for training.
We parameterized the top-10 eigenfunctions using a single MLP with hidden units \texttt{128-128-128} and CeLU activation.
We trained the network using Adam optimizer with learning rate $10^{-3}$ for 50,000 iterations with batch size 128.
The differentiation operation in the generator was computed by the autodiff feature of PyTorch.
Exponential moving average with decay 0.995 was applied to result in a smoother result.
In Figure~\ref{fig:langevin}, we report the first eight eigenfunctions and the convergence behavior of the estimated eigenvalues, without any postprocessing other than sign alignment.
As demonstrated, the learned eigenfunctions and eigenvalues converge to the ground truths reliably with \seqlora{}.

We also implemented the DPNet objective $\tilde{\Lc}_{\mathsf{dp}}^{(\gamma)}(\fv)\defeq-\tr(\secondmoment{\rho_0}{\fv}^{\dagger}\secondmoment{\rho_0}{\fv,\Lop\fv})+\gamma\Rc(\secondmoment{\rho_0}{\fv})$, following the recommended configuration in \citep{Kostic--Novelli--Grazzi--Lounici--Pontil2024iclr}, but we were unable to obtain successful results.
This example demonstrates the effectiveness of the LoRA framework for a continuous-time dynamics.

\begin{figure}[t]
    \centering
    \includegraphics[scale=0.625]{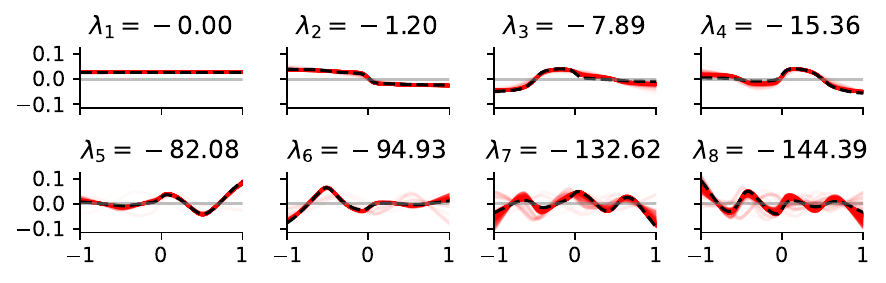}%
    \includegraphics[scale=0.575]{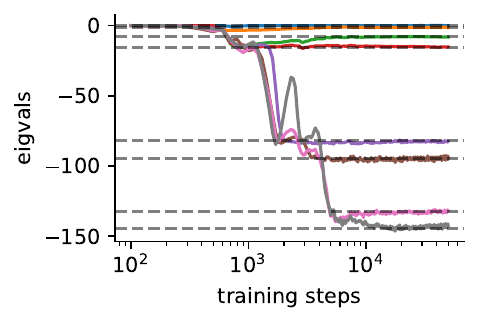}
    \caption{Visualization of the eigenfunctions of the 1D Langevin dynamics, learned by \seqlora{}. In the first panel, the learned eigenfunctions across training iterations are overlaid, with later iterations displayed with higher opacity (red). The dashed lines indicate the true eigenfunctions.}
    \label{fig:langevin}
    \vspace{-1em}
\end{figure}

\subsection{Chignolin Molecular Dynamics}
\label{sec:exps_md}

To assess scalability on high-dimensional data, we apply our method to the analysis of \emph{chignolin} molecular dynamics. Chignolin is an artificial mini-protein that is considered a standard model for studying rapid folding dynamics due to its complex and fast transitions~\citep{Lindorff-Larsen_2011, Bonati2021, Kostic--Novelli--Grazzi--Lounici--Pontil2024iclr}.
The underlying physical system is time-reversible, and thus the Koopman operator is self-adjoint. The slowest decay mode in chignolin is associated with the folding-unfolding transition, which occurs on a microsecond timescale~\citep{Lindorff-Larsen_2011}.
Unlike \citep{Kostic--Novelli--Grazzi--Lounici--Pontil2024iclr}, we use the public dataset of \citep{Marshall2024}.
To isolate algorithmic stability from variance due to data sampling, we fix a data split chosen to balance the root mean square distance statistics across splits.
All experimental details are provided in Appendix~\ref{app:chig}.

We first remark on the overall quality of the learned models.
As reported in~\citep{Kostic--Novelli--Grazzi--Lounici--Pontil2024iclr}, we observe that VAMPnet-2 and DPNet diverge.
However, we empirically find that models trained with VAMPnet-1 can converge, albeit with several caveats.
First, we observe that VAMPnet-1 is sensitive to low-level numerical implementations in \texttt{torch.linalg}, and diverges under a specific PyTorch version.
Moreover, we observe that VAMPnet-1 diverges for certain alternative train-test splits that are not reported here.
Consequently, the only methods that consistently exhibit convergence in our experiments are our LoRA variants.
We nonetheless report VAMPnet-1 results to demonstrate that our method outperforms it even in regimes where VAMPnet-1 converges.

\textbf{Operator Approximation Performance.} Table~\ref{tab:chig-unified-results} reports the VAMP-E scores which measure the operator approximation error in the Hilbert--Schmidt norm. In the low-rank case with $k=16$, LoRA variants consistently outperform baselines, with \seqlora{} achieving the highest score and lowest variance. We further assessed stability under a high-rank setting with $k=64$. 
As shown in the right columns of Table~\ref{tab:chig-unified-results}, VAMPnet-1 exhibits significant performance degradation in the restricted-batch setting.
We attribute this behavior to numerical instability arising from ill-conditioned moment matrices when using small batches.
In contrast, our LoRA variants maintain strong performance across all regimes, demonstrating superior robustness.

\begin{table}[t]
  \centering
  \caption{Test VAMP-E scores on the 300 K chignolin dataset~\cite{Marshall2024} reported as the mean $\pm$ 95\% confidence interval (CI). The low-rank case reports results over 5 random seeds. 
  In the high-rank case, CI is calculated with the final 5 checkpoints of a single run.
  (\textit{Note:} We omit the numerically unstable baselines VAMPnet-2 and DPNet.)}
  \label{tab:chig-unified-results}
  
  \vspace{0.5em}

  \small 
  \setlength{\tabcolsep}{6pt}
  
  \begin{tabular}{l c cc}
    \toprule
    & Low-rank ($k=16$) & \multicolumn{2}{c}{High-rank ($k=64$)} \\
    \cmidrule(lr){2-2} \cmidrule(lr){3-4}
    Algorithm  &  $H=64, B=384$ & $H=128, B=384$ & $H=64, B=96$ \\
    \midrule
    DPNet-relaxed       & 7.36$_{\pm \text{0.40}}$ & 7.85$_{\pm \text{0.24}}$ & 6.97$_{\pm \text{0.31}}$ \\
    VAMPnet-1           & 9.54$_{\pm \text{0.31}}$ & 34.76$_{\pm \text{0.50}}$ & 19.71$_{\pm \text{0.59}}$ \\
    \midrule
    LoRA       & 10.27$_{\pm \text{0.31}}$ & 38.89$_{\pm \text{0.29}}$ & 37.74$_{\pm \text{0.95}}$ \\
    LoRA$_{\text{jnt}}$ & 10.74$_{\pm \text{0.35}}$ & 39.37$_{\pm \text{0.24}}$ & \textbf{38.50}$_{\pm \text{0.83}}$ \\
    LoRA$_{\text{seq}}$ & \textbf{12.29}$_{\pm \text{0.07}}$ & \textbf{42.44}$_{\pm \text{0.35}}$ & 37.33$_{\pm \text{1.66}}$ \\
    \bottomrule
  \end{tabular}
  \vspace{-1em}
\end{table}

\textbf{Test-Time Orthogonality Analysis.} As another performance measure, we report the test-time orthogonality of the learned basis. To this end, we compute the Gram matrix of the whitened basis with respect to the test data, formally calculated as $\secondmoment{\rho_0^\mathrm{train}}{\fv}^{-\frac{1}{2}} \secondmoment{\rho_0^\mathrm{test}}{\fv}\secondmoment{\rho_0^\mathrm{train}}{\fv}^{-\frac{1}{2}}$, and similarly for $\gv$. Ideally, this quantity should be an identity matrix.%

As illustrated in Figure~\ref{fig:whitened_orthogonality}, LoRA variants maintain a relatively cleaner diagonal structure than the baselines, which exhibit sensitivity to distribution shifts.
This confirms that the bases learned by LoRA more effectively capture the invariant geometry of the system.\vspace{-.5em}

\begin{figure}[h]
    \centering
    \includegraphics[width=\linewidth]{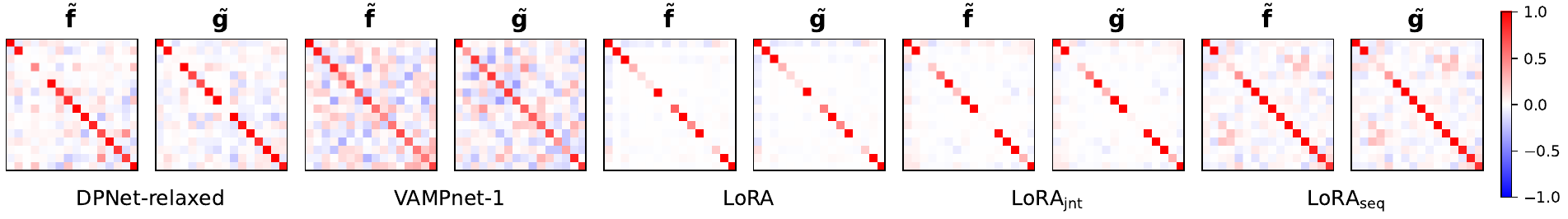} 
    \vspace{-1em}
    \caption{Test-set orthogonality evaluation of the learned bases whitened by training data. LoRA variants preserve a clear diagonal structure, demonstrating superior generalization.}
    \label{fig:whitened_orthogonality}
    \vspace{-1.em}
\end{figure}

\section{Concluding Remarks}
The evolution of linear-algebraic tools for Koopman analysis, such as DMD, extended DMD, kernel DMD, VAMPnet, and DPNet, mirrors that of CCA and its kernel and maximal-correlation variants. 
The LoRA framework for Koopman operator can be similarly understood as an adoption of the recent advances in correlation analysis and representation learning, which can naturally avoid unstable decompositions and train efficiently with standard mini-batch gradients. 
Across diverse benchmarks, LoRA variants deliver superior singular subspaces, eigenfunctions, long-horizon forecasts, and scalability, establishing a practical, reliable path for data-driven modeling of complex dynamics.

While the LoRA framework represents an important step toward scalable deep learning methods for analyzing stochastic dynamical systems, key limitations persist. For example, as shown in the chignolin experiment, the quality of learned dynamics is inherently constrained by the data's temporal resolution, which can prevent the recovery of the slowest physical processes. 
Furthermore, robustly identifying coherent structures in highly non-normal or chaotic systems remains an open challenge.

Future research should therefore focus on advancing the deep learning methodologies tailored for Koopman analysis.
This includes developing specialized neural architectures and establishing stronger theoretical foundations for learning from trajectories of dynamical systems. 
A deeper understanding of the landscape of the LoRA objective will also be crucial for advancing this line of work.

\newpage

\section*{Acknowledgments}
JJR and GWW were supported in part by the MIT-IBM Watson AI Lab under Agreement No. W1771646. MJ and SYY were supported by the Institute for Information \& Communications Technology Planning \& Evaluation (IITP) grant funded by the Korean government (MSIT) [No. RS-2022-II220311, Development of Goal-Oriented Reinforcement Learning
Techniques for Contact-Rich Robotic Manipulation of Everyday Objects, 80\%], [No. RS-2024-00457882, AI Research Hub Project, 10\%], and [No. RS-2019-II190075, Artificial Intelligence Graduate School Program (KAIST), 10\%].

\bibliographystyle{plainnat}
\bibliography{main}

\clearpage
\appendix
\addtocontents{toc}{\protect\StartAppendixEntries}
\listofatoc

\ifPREPRINT
\else
\section*{List of Typos and Corrections}
\begin{tcolorbox}[colback=gray!05,colframe=gray!40,sharp corners]
\textbf{Errata}:
We have found that the following corrections need to be made in the main manuscript.
We apologize for any confusion, but we emphasize that these do not affect the primary conclusions of this work.
We will update the manuscript accordingly.
\\

In ``Example: Chignolin Molecular Dynamics'' (starting from line 329),
\begin{enumerate}
    \item \textbf{Dataset Split:} The number of randomly selected training trajectories per group is 13, not 11 as previously stated.
    \item \textbf{Relaxation Time Formula:} The relaxation time $\tau$ is calculated as $\tau = -\Delta t / \ln|\lambda_1|$, where the sampling interval $\Delta t = 0.1~\text{ns}$ (100~ps), yielding units of ns, not ps.
    \item \textbf{Table~\ref{tab:chig-vamp2-scores} Column Labels:} The column labels in Table~\ref{tab:chig-vamp2-scores} were reversed. Updated from ``UnFolded'' and ``Folded'' to ``Folded Init'' and ``Unfolded Init'' respectively.
    \item \textbf{Figure~\ref{fig:chig-timescales} Reference Temperature:} The reference $\tau_{tp}$ value from \citet{Lindorff-Larsen_2011} shown in Figure~\ref{fig:chig-timescales} corresponds to simulations at 340~K. Our dataset from \citet{Marshall2024} was simulated at 300~K. The rationale for this comparison is detailed in Section~\ref{app:chig:tpt_temperature}.
\end{enumerate}
\end{tcolorbox}

\clearpage
\fi

\section{Related Work}
\label{app:related_work}
In this section, we discuss some related work to better contextualize our contribution.

\subsection{Deep Learning Approaches for Koopman Operator Estimation}
Deep learning models are increasingly being integrated into the Koopman theory framework to enable the learning of complex system dynamics by leveraging the expressive power of neural networks.
One of the earliest examples is Koopman Autoencoders (KAEs) for forecasting sequential data~\cite{williams_kedmd_2015,takeishi_learning_2017,lusch_deep_2018,azencot2020consistent}.
The KAE approach in general aims to learn a latent embedding and a global, time-invariant linear operator, guided by reconstruction and latent-space constraints.
While these methods laid important groundwork, more recent developments~\cite{Mardt--Pasquali--Wu--Noe2018vampnets,Kostic--Lounici--Novelli--Pontil2023,Kostic--Novelli--Grazzi--Lounici--Pontil2024iclr} have demonstrated superior performance. 
Consequently, our experiments focus on these more advanced techniques.

Recently, various deep learning architectures for learning Koopman operator have been proposed to handle complex real-world environments, such as non-stationarity in time series data.
For example, \citet{liu_koopa_2023} introduce Fourier filters for disentanglement and distinct Koopman predictors for time-variant and invariant components.
Another example is Koopman Neural Forecaster~\cite{wang2023knf}, which utilizes predefined measurement functions, a transformer-based local operator, and a feedback loop.
We remark, however, that while these methods demonstrate impressive empirical results on some datasets, they mainly prioritize forecasting accuracy rather than eigen-analysis essential for understanding the underlying system and achieving robust prediction, which lies at the core of the Koopman framework. In this work, we focus on this core aspect, leaving forecasting extensions for future study.

\subsection{On Low-Rank Approximation}
\label{app:related_work:lora}
The idea of LoRA as a variational characterization for SVD of a matrix or operator has a century-long history; see 
\citep{Stewart1993} for an overview.
Despite its simple and elegant formulation, 
dominated by the common Rayleigh quotient maximization framework and its variants,
its wide adoption as a variational framework for computing singular subspaces based on optimization started only recently.
For example, see \citep{Liu--Wen--Zhang2015,Wen--Yang--Liu--Zhang2016} from the numerical optimization literature, \citep{HaoChen--Wei--Gaidon--Ma2021,Wang--Luo--Li--Zhu--Scholkopf2022} in representation learning, and \citep{Wang--Wu--Huang--Zheng--Xu--Zhang--Huang2019,Xu--Zheng2024,Kostic--Pacreau--Turri--Novelli--Lounici--Pontil2024neurips} for correlation analysis. 
We refer the reader to \citep[Appendix~B]{Ryu--Xu--Erol--Bu--Zheng--Wornell2024} for a more detailed overview of the literature.

Our learning technique closely follows the low-rank-approximation-based framework dubbed as \emph{NeuralSVD} in \citep{Ryu--Xu--Erol--Bu--Zheng--Wornell2024}, including the nesting techniques.
\citet{Ryu--Xu--Erol--Bu--Zheng--Wornell2024} proposed the LoRA framework with nesting as a generic tool to perform SVD of a general compact operator using neural networks including scientific simulation and representation learning.
While they also considered decomposing the density ratio  $\frac{p(\yv\mid\xv)}{p(\yv)}$, which they call the \emph{canonical dependence kernel} (CDK), as a special case, they did not explicitly consider its application to dynamical systems.
We note in passing that there exists a similar squared loss studied in density ratio estimation~\citep{Kanamori--Hido--Sugiyama2009,Tsai--Zhao--Yamada--Morency--Salakhutdinov2020}, but it becomes only related to spectral decomposition when the underlying density ratio is in the form of CDK $\frac{p(x,y)}{p(x)p(y)}$.

More recently, in a concurrent work, \citet{Turri--Bonati--Zhu--Pontil--Novelli2025} also proposed to use the LoRA loss, emphasizing a self-supervised learning perspective on learning singular functions. Specifically, they parameterize the encoder-lagged-encoder pair by defining the latter as $\gv(\xv')\defeq \Pm\fv(\xv')$, where $\Pm\in\Real^{k\times k}$ is a learnable matrix and $\fv(\cdot)$ is a neural network encoder.
For a fixed encoder $\fv(\cdot)$, they show that the optimal $\Pm$ under the LoRA objective coincides with a least-squares estimator for the CDK.
A systematic study of the benefits of this parameterization is an interesting direction for future work.

During the course of our independent study, we also identified an unpublished objective named \texttt{EYMLoss} in the GitHub repository of \citet{Kostic--Novelli--Grazzi--Lounici--Pontil2024iclr}, which effectively corresponds to the LoRA loss investigated in this work; see \url{https://github.com/Machine-Learning-Dynamical-Systems/kooplearn/blob/ca71864469576b39621e4d4e93c0439682166d1e/kooplearn/nn/losses.py#L79C7-L79C14}. The associated commit message indicates that the authors were unable to obtain satisfactory results with this formulation. We attribute this failure to a subtle but critical oversight in handling the constant modes.
Recall that a Koopman operator, as a special case of CDK operators, admits constant functions as its leading singular functions. In our implementation, we explicitly account for this structure by prepending constant components (i.e., appending 1's) to the outputs of both the encoder and the lagged encoder. This ensures that the constant modes are appropriately represented and preserved throughout training and inference.
In contrast, \texttt{EYMLoss} incorporates the constant modes by projecting out their contribution from the observables. Specifically, it replaces $\fv(\xv)$ with $\fv(\xv) - \E_{\rho_0(\xv)}[\fv(\xv)]$ in the LoRA loss, where the expectation is approximated empirically using minibatch samples. This approach implicitly enforces orthogonality to constant modes by requiring the learned singular functions to be zero-mean with respect to the data distribution.
While this treatment is valid in principle, it necessitates that the encoded features remain centered not only during training but also at inference time. Moreover, the constant singular functions must be explicitly included in the final model representation.
These additional steps do not appear to have been fully implemented in the \texttt{kooplearn} codebase, which may have contributed to the method's lack of empirical success.

Finally, we remark in passing that a recent work studies another unconstrained variational optimization framework, called the \emph{orbital minimization method}, for decomposing positive definite (PD) operators using neural networks~\citep{Ryu--Zhou--Wornell2025}.
As the framework is restricted to PD operators, it is not directly applicable to the setup considered in the present manuscript.
Developing an extension of this method for analyzing stochastic dynamical systems would be of theoretical interest.

\subsection{On Nesting Techniques}
The sequential nesting technique proposed in \citep{Ryu--Xu--Erol--Bu--Zheng--Wornell2024} can be understood as a function-space version of the \emph{triangularized orthogonalization-free method} studied in \citep{Gao--Li--Lu2022}. 
The joint nesting technique was first proposed specifically for CDK by \citet{Xu--Zheng2024}, and later extended as a generic tool in \citet{Ryu--Xu--Erol--Bu--Zheng--Wornell2024}. 
For a more discussion on the history of nesting, we refer to \citep{Ryu--Xu--Erol--Bu--Zheng--Wornell2024}.

\section{Variational Principles for VAMPnet and DPNet}

In this section, we review the principles behind VAMPnet and DPNet in more details.

\subsection{VAMPnet}
\label{app:vampnet}
VAMPnet~\cite{Wu--Noe2020,Mardt--Pasquali--Wu--Noe2018vampnets} is based on 
the following trace-maximization-type characterization of the top-$k$ singular subspaces.
\begin{theorem}
Let $\Top\suchthat \hilbert{\rho_y}{\Yc}\to\hilbert{\rho_x}{\Xc}$ be a compact operator with singular triplets $\{(\sigma_i,\phi_i(\cdot),\psi_i(\cdot))\}_{i=1}^\infty$, i.e., $(\Top\psi_i)(x)=\sigma_i\phi_i(x)$, $(\Top^*\phi_i)(y)=\sigma_i\psi_i(y)$, and $\sigma_1\ge \sigma_2\ge \ldots \ge 0$.
Then, for any fixed $r=1,2,\ldots$, the optimizers of the following optimization problem
\begin{align*}
\max_{\{(f_i,g_i)\}_{i=1}^k\subset \hilbert{\rho_x}{\Xc}\times\hilbert{\rho_y}{\Yc}} &~~~ \sum_{i=1}^k \langle f_i,\Top g_i\rangle_{\rho_x}^r\\
\text{subject to} &~~~ \secondmoment{\rho_x}{\fv}=\secondmoment{\rho_y}{\gv}=\Imat_{k}
\numberthis\label{eq:vamp_r}
\end{align*}
characterize the top-$k$ singular functions of $\Top$ up to an orthogonal transformation.
\end{theorem}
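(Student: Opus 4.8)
The plan is to show that the optimal value of \eqref{eq:vamp_r} equals $\sum_{i=1}^k\sigma_i^r$, that it is attained at $f_i=\phi_i$ and $g_i=\psi_i$, and then to recover all maximizers by tracing the equality conditions of the bound. For the setup, note first that one may assume $c_i\defeq\langle f_i,\Top g_i\rangle_{\rho_x}\ge0$: flipping the sign of any $f_i$ leaves the constraint $\secondmoment{\rho_x}{\fv}=\Imat_k$ intact and can only increase the objective when $r$ is odd, while for even $r$ only $|c_i|$ enters. Using compactness of $\Top$, extend $\{\psi_j\}_{j\ge1}$ by an orthonormal basis of $\ker\Top$ and $\{\phi_j\}_{j\ge1}$ by an orthonormal basis of $(\overline{\operatorname{ran}\Top})^{\perp}$, and write $f_i=\sum_j a_{ij}\phi_j$, $g_i=\sum_j b_{ij}\psi_j$. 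The constraints $\secondmoment{\rho_x}{\fv}=\secondmoment{\rho_y}{\gv}=\Imat_k$ then say exactly that the (infinite) matrices $A=(a_{ij})$ and $B=(b_{ij})$ have orthonormal rows, and $\Top\psi_j=\sigma_j\phi_j$ gives $c_i=\sum_j\sigma_j a_{ij}b_{ij}$.

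The core of the argument is a two-stage bound. For each $i$, the chain $c_i\le\sum_j\sigma_j|a_{ij}||b_{ij}|\le\sum_j\sigma_j u_{ij}$ with $u_{ij}\defeq\tfrac12(a_{ij}^2+b_{ij}^2)$ holds by AM--GM, and $\sum_j u_{ij}=1$, so $(u_{ij})_j$ is a probability vector; convexity of $t\mapsto t^r$ on $[0,\infty)$ together with Jensen's inequality then yields $c_i^r\le\sum_j\sigma_j^r u_{ij}$. Summing over $i$, $\sum_{i=1}^k c_i^r\le\sum_j\sigma_j^r U_j$ where $U_j\defeq\sum_{i=1}^k u_{ij}$. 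Because $A$ and $B$ have orthonormal rows, $\sum_i a_{ij}^2\le1$ and $\sum_i b_{ij}^2\le1$, so $0\le U_j\le1$, while $\sum_j U_j=\tfrac12(\|A\|_{\mathrm F}^2+\|B\|_{\mathrm F}^2)=k$; since $\sigma_1^r\ge\sigma_2^r\ge\cdots\ge0$, a water-filling argument gives $\sum_j\sigma_j^r U_j\le\sum_{j=1}^k\sigma_j^r$. Plugging $f_i=\phi_i$, $g_i=\psi_i$ gives $c_i=\sigma_i$ and attains this value, so the maximum is $\sum_{i=1}^k\sigma_i^r$.

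It remains to characterize the maximizers, which I expect to be the main obstacle. Assuming the spectral gap $\sigma_k>\sigma_{k+1}$ (with $\sigma_k>0$), equality in the water-filling step forces $U_j=1$ for $j\le k$ and $U_j=0$ for $j>k$; the latter gives $a_{ij}=b_{ij}=0$ for all $j>k$, so $f_i\in\operatorname{span}\{\phi_1,\dots,\phi_k\}$ and $g_i\in\operatorname{span}\{\psi_1,\dots,\psi_k\}$, and the $k\times k$ blocks of $A$ and $B$ are orthogonal matrices. Equality in the sign and AM--GM steps forces $a_{ij}b_{ij}\ge0$ and $|a_{ij}|=|b_{ij}|$ for $j\le k$, so these two orthogonal blocks coincide; calling the common block $\mathsf{Q}\in O(k)$, the maximizers are $\fv=\mathsf{Q}\,\phiv_{1:k}$ and $\gv=\mathsf{Q}\,\psiv_{1:k}$, i.e.\ the top-$k$ singular functions up to a common orthogonal transformation. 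The delicate points are that the equality analysis must be carried out separately for $r=1$ (where Jensen is vacuous and the pinning comes entirely from the aggregate quantities $U_j$) and for $r\ge2$ (where Jensen additionally forces each $(u_{ij})_j$ to be supported on indices sharing a common singular value, restricting $\mathsf{Q}$ to a block structure), and that without the gap $\sigma_k>\sigma_{k+1}$ the optimal spans are no longer unique, so the statement must then be read as identifying \emph{some} top-$k$ singular subspace.
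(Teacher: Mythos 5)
The paper does not actually prove this theorem: it states it as the variational principle behind VAMPnet and explicitly defers the proof to \citet{Wu--Noe2020}, so there is no in-paper argument to compare against. Judged on its own merits, your proof is essentially correct and gives a self-contained, elementary route: expanding $f_i,g_i$ in the (suitably extended) singular bases, bounding each $c_i=\langle f_i,\Top g_i\rangle_{\rho_x}$ via AM--GM, applying Jensen with the row weights $u_{ij}$, and finishing with the column-sum (``water-filling'') bound $0\le U_j\le 1$, $\sum_j U_j=k$ correctly yields the optimal value $\sum_{i=1}^k\sigma_i^r$, its attainment at $(\phiv_{1:k},\psiv_{1:k})$, and, under the gap $\sigma_k>\sigma_{k+1}$, the confinement of any maximizer to the top-$k$ singular subspaces with a common orthogonal mixing matrix. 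This is a majorization-flavored argument carried out at the level of coefficients, somewhat more elementary than the operator-theoretic treatment in the cited reference, and your closing remarks about the degenerate case $\sigma_k=\sigma_{k+1}$ and about the extra rigidity imposed by Jensen when $r\ge 2$ are exactly the right caveats (indeed, for $r\ge2$ only block-orthogonal mixings within groups of equal singular values remain optimal, so your analysis is sharper than the statement's loose phrase ``up to an orthogonal transformation'').

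One point to tighten: your opening reduction to $c_i\ge 0$ is harmless for computing the optimal value, but for even $r$ it silently discards maximizers when you later assert a \emph{common} orthogonal transformation. For even $r$, e.g.\ $(-\phi_1,\phi_2,\dots,\phi_k;\psi_1,\dots,\psi_k)$ attains the optimum but is not of the form $(\mathsf{Q}\,\phiv_{1:k},\mathsf{Q}\,\psiv_{1:k})$; your own equality conditions only force row $i$ of the $A$-block to equal $\pm$ row $i$ of the $B$-block, so the correct conclusion is $\fv=\mathsf{D}\mathsf{Q}\,\phiv_{1:k}$, $\gv=\mathsf{Q}\,\psiv_{1:k}$ with $\mathsf{D}$ a diagonal sign matrix (for odd $r$ the signs are forced and $\mathsf{D}=\Imat$). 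This is a bookkeeping issue rather than a gap in the main argument, and it is consistent with the informal way the theorem is phrased.
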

We refer an interested reader to the proof in \citep{Wu--Noe2020}.
For the particular case of $\Top$ being a Koopman operator $\Kop\colon\hilbert{\rho_1}{\Xc}\to\hilbert{\rho_0}{\Xc}$, \citet{Wu--Noe2020} named the objective 
\begin{align}
\mathcal{R}_r[\fv,\gv]\defeq \sum_{i=1}^k \langle f_i,\Kop g_i\rangle_{\rho_0}^r
\end{align}
as the \emph{VAMP-$r$ score} of $\fv$ and $\gv$.
It is noted that VAMP-1 score is identical to the objective of deep CCA~\citep{Andrew--Arora--Bilmes--Livescu2013deepcca}. 
\citet{Wu--Noe2020,Mardt--Pasquali--Wu--Noe2018vampnets} advocated the use of VAMP-2 score, based on its relation to the $L_2$-approximation error.

In the original VAMP framework~\citep{Wu--Noe2020,Mardt--Pasquali--Wu--Noe2018vampnets}, neural networks are not directly trained by the VAMP-$r$ score.
Instead, they considered a two-stage procedure as follows:
\begin{itemize}
\item \textbf{Basis learning}: They parameterize the singular functions as 
\begin{align*}
\fv_{\Um,\th}(\xv)\defeq \Um^\intercal\fv_\th(\xv),\\
\gv_{\Vm,\th}(\xv')\defeq \Vm^\intercal\gv_\th(\xv'),
\end{align*}
where $\fv_\th(\xv)\defeq (f_{\th,1}(\xv),\ldots,f_{\th,k_0}(\xv))^\intercal \in\Real^{k_0}$ and $\gv_\th(\xv')\defeq (g_{\th,1}(\xv'),\ldots,g_{\th,k_1}(\xv'))^\intercal \in\Real^{k_1}$ denote some basis functions, which in the trainable basis case is typically parameterized by neural networks, and $\Um\in\Real^{k_0\times k}$ and $\Vm\in\Real^{k_1\times k}$ denote the projection matrix to further express the singular functions using the given basis.
Since the jointly optimizing over all $\th,\Um,\Vm$ is not practical, 
to train $\fv_\th$ and $\gv_\th$, they plug-in $\fv_{\Um,\th}(\xv)$ and $\gv_{\Vm,\th}(\xv')$ into Eq.~\eqref{eq:vamp_r}, and consider the best $\Um$ and $\Vm$ by solving the partial maximization problem, i.e.,
\begin{align*}
\max_{\Um,\Vm} \Rc_r(\phiv,\psiv) =
\Bigl\|
(\secondmoment{\rho_0}{\fv})^{-\frac{1}{2}}
\jointmoment{\fv,\gv}
(\secondmoment{\rho_1}{\gv})^{-\frac{1}{2}}
\Bigr\|_r^r.
\end{align*}
Here, $\|\cdot\|_r$ denotes the Schatten $r$-norm.
This objective is what we call the (maximal) VAMP-$r$ score in our paper, which is the objective in the VAMP framework that trains the trainable basis functions.

\item \textbf{Inference}: Once $\fv_\th$ and $\gv_\th$ are trained,
Eq.~\eqref{eq:vamp_r} become an optimization problem over $\Um$ and $\Vm$:
\begin{align*}
\max_{\Um,\Vm}~ &~ \Rc_r(\Um,\Vm)\\
\text{subject to~} & ~\Um^\intercal \secondmoment{\rho_0}{\fv} \Um=I\\
& ~\Vm^\intercal \secondmoment{\rho_1}{\gv}\Vm=I,
\end{align*}
where
\begin{align*}
\Rc_r(\Um,\Vm) \defeq \sum_{i=1}^k (\uv_i^\intercal \jointmoment{\fv,\gv}\vv_i)^r.
\end{align*}
\citet{Wu--Noe2020} proposed to solve this using the (linear) CCA algorithm~\citep{Hotelling1936}, and calling the final algorithm \emph{feature TCCA}.
To avoid the degeneracy, basis functions need to be whitened.
Strictly speaking, however, the feature TCCA algorithm has nothing to do with the optimization problem.
We note that our CCA+LoRA approach is built upon this inference method in the VAMP framework. 
\end{itemize}

Let $\hat{\sigma}_1,\ldots,\hat{\sigma}_k$ denote the top-$k$ singular values of the approximate \emph{Koopman matrix} $\hat{\Km}\defeq (\secondmoment{\rho_0}{\fv})^{-\frac{1}{2}}\jointmoment{\fv,\gv}(\secondmoment{\rho_1}{\gv})^{-\frac{1}{2}}$. 
After CCA, we can consider the rank-$k$ approximation of the underlying Koopman operator $\Kop$ as we considered in our Approach 1 in Section~\ref{sec:inference}.
If we call the $\hat{\Kop}$ denote the corresponding approximate operator, \citet{Wu--Noe2020} called the (shifted and negated) approximation error in the Hilbert--Schmidt norm 
\begin{align*}
\Rc_{\mathrm{E}}\defeq 
\|\Kop
\|_{\mathrm{HS}}^2-\|
\hat{\Kop}-\Kop
\|_{\mathrm{HS}}^2,
\end{align*}
which is called the \emph{VAMP-E score}.
While this looks essentially identical to the low-rank approximation error we consider in this paper, \citet{Wu--Noe2020} and \citet{Mardt--Pasquali--Wu--Noe2018vampnets} used this metric only for evaluation, but not considered for training.
In Appendix~\ref{app:chig}, we provide a precise definition of the VAMP-E score for completeness.

\subsection{DPNet}
As introduced in the main text, the DPNet objective is given as
\begin{align*}
\Lc_{\mathsf{dp}}^{(\gamma)}(\fv,\gv)\defeq 
-\Bigl\|
(\secondmoment{\rho_0}{\fv})^{-\frac{1}{2}}
\jointmoment{\fv,\gv}
(\secondmoment{\rho_1}{\gv})^{-\frac{1}{2}}
\Bigr\|_{\mathrm{F}}^2 
+\gamma(\Rc(\secondmoment{\rho_0}{\fv}) + \Rc(\secondmoment{\rho_1}{\gv})).
\end{align*}
Here, the \emph{metric distortion loss} $\Rc\colon \Sb_+^L \to \Real_+$ is defined as $\Rc(\Mm)\defeq \tr(\Mm^2 - \Mm - \ln \Mm)$ for $\Mm\succeq0$. 
If $\gamma=0$ and $\secondmoment{\rho_0}{\fv}$ and $\secondmoment{\rho_1}{\gv}$ are nonsingular, the objective becomes equivalent to the VAMP-2 score.

To detour the potential numerical instability in the first term of the DPNet objective, which is essentially the VAMP-2 objective, \citet{Kostic--Novelli--Grazzi--Lounici--Pontil2024iclr} further proposed a relaxed objective
\begin{align*}
\Lc_{\mathsf{dpr}}^{(\gamma)}(\fv,\gv)\defeq 
-\frac{\|\jointmoment{\fv,\gv}\|_{\mathrm{F}}^2}{\|\secondmoment{\rho_0}{\fv}\|_{\mathrm{op}} \|\secondmoment{\rho_1}{\gv}\|_{\mathrm{op}}} + \gamma(\Rc(\secondmoment{\rho_0}{\fv}) + \Rc(\secondmoment{\rho_1}{\gv})).
\end{align*}

\citet{Kostic--Novelli--Grazzi--Lounici--Pontil2024iclr} proved the following statement:
\begin{theorem}[Consistency of DPNet objectives]
Let $\gamma\ge 0$.
If $\Top$ is compact, 
\[
\Lc_{\mathsf{dpr}}^{(\gamma)}(\fv,\gv)\ge \Lc_{\mathsf{dp}}^{(\gamma)}(\fv,\gv)
\ge
-\sum_{i=1}^k \sigma_i(\Top)^2.
\]
The equalities are attained when $\fv(\xv)$ and $\gv(\yv)$ are the top-$k$ singular functions of $\Top$.
If $\Top$ is Hilbert--Schmidt and $\sigma_L(\Top)>\sigma_{k+1}(\Top)$ and $\gamma>0$, the equalities are attained only if $\fv(\xv)$ and $\gv(\yv)$ are orthogonal rotations of top-$k$ singular functions.
\end{theorem}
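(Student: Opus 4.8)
I would establish the chain from the outside in — first $\Lc_{\mathsf{dpr}}^{(\gamma)}\ge\Lc_{\mathsf{dp}}^{(\gamma)}$, then $\Lc_{\mathsf{dp}}^{(\gamma)}\ge-\sum_{i=1}^k\sigma_i(\Top)^2$ — and afterwards read off when each link is tight. Recall from the main text the whitened features $\tilde{\fv}\defeq(\secondmoment{\rho_0}{\fv})^{-1/2}\fv$ and $\tilde{\gv}\defeq(\secondmoment{\rho_1}{\gv})^{-1/2}\gv$, so that $(\secondmoment{\rho_0}{\fv})^{-1/2}\jointmoment{\fv,\gv}(\secondmoment{\rho_1}{\gv})^{-1/2}=\jointmoment{\tilde{\fv},\tilde{\gv}}$ (using pseudoinverses if a moment matrix is singular; since the row and column spaces of $\jointmoment{\fv,\gv}=\secondmoment{\rho_0}{\fv,\Top\gv}$ lie inside the ranges of $\secondmoment{\rho_0}{\fv}$ and $\secondmoment{\rho_1}{\gv}$, the relevant expressions stay finite). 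I will also use the scalar fact that $\Rc(\Mm)=\sum_j(\lambda_j^2-\lambda_j-\ln\lambda_j)$ over the eigenvalues $\lambda_j>0$ of $\Mm$, and since $x\mapsto x^2-x-\ln x$ is strictly convex on $(0,\infty)$ and uniquely minimized at $x=1$ with value $0$, we get $\Rc(\Mm)\ge0$ with equality iff $\Mm=\Imat$.

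\textbf{The two inequalities.} For the first, the $\gamma$-terms cancel, so it reduces to $\|(\secondmoment{\rho_0}{\fv})^{-1/2}\jointmoment{\fv,\gv}(\secondmoment{\rho_1}{\gv})^{-1/2}\|_{\mathrm{F}}^2\ge\|\jointmoment{\fv,\gv}\|_{\mathrm{F}}^2/(\|\secondmoment{\rho_0}{\fv}\|_{\mathrm{op}}\|\secondmoment{\rho_1}{\gv}\|_{\mathrm{op}})$, which I get by writing the left side as $\tr((\secondmoment{\rho_0}{\fv})^{-1}\jointmoment{\fv,\gv}(\secondmoment{\rho_1}{\gv})^{-1}\jointmoment{\fv,\gv}^\intercal)$ and bounding below via $(\secondmoment{\rho_0}{\fv})^{-1}\succeq\|\secondmoment{\rho_0}{\fv}\|_{\mathrm{op}}^{-1}\Imat$, $(\secondmoment{\rho_1}{\gv})^{-1}\succeq\|\secondmoment{\rho_1}{\gv}\|_{\mathrm{op}}^{-1}\Imat$, and $\tr(AX)\ge\lambda_{\min}(A)\tr(X)$ for $X\succeq0$ (a singular moment matrix makes the left side $+\infty$, so there is nothing to prove). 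For the second, since $\gamma(\Rc(\secondmoment{\rho_0}{\fv})+\Rc(\secondmoment{\rho_1}{\gv}))\ge0$ it suffices to prove the VAMP-2 bound $\|\jointmoment{\tilde{\fv},\tilde{\gv}}\|_{\mathrm{F}}^2\le\sum_{i=1}^k\sigma_i(\Top)^2$, and here I would pass to operators: with $P_{\fv},P_{\gv}$ the orthogonal projections onto $\mathsf{span}(\tilde{\fv})\subseteq\hilbert{\rho_0}{\Xc}$ and $\mathsf{span}(\tilde{\gv})\subseteq\hilbert{\rho_1}{\Xc}$, orthonormality of $\tilde{\fv},\tilde{\gv}$ gives $\|\jointmoment{\tilde{\fv},\tilde{\gv}}\|_{\mathrm{F}}^2=\sum_{i,j}\langle\tilde{f}_i,\Top\tilde{g}_j\rangle_{\rho_0}^2=\|P_{\fv}\Top P_{\gv}\|_{\mathrm{HS}}^2$; then $\|P_{\fv}\Top P_{\gv}\|_{\mathrm{HS}}\le\|P_{\fv}\Top\|_{\mathrm{HS}}$ because $P_{\gv}$ is a contraction, and $\|P_{\fv}\Top\|_{\mathrm{HS}}^2=\tr(P_{\fv}\Top\Top^*P_{\fv})\le\sum_{i=1}^k\sigma_i(\Top)^2$ by Ky Fan's maximum principle applied to the rank-$\le k$ projection $P_{\fv}$ and the self-adjoint operator $\Top\Top^*$. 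Combining the two gives the whole chain.

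\textbf{Equality.} Sufficiency is a substitution: if $\fv=\phi_{1:k}$, $\gv=\psi_{1:k}$ (or any orthogonal rotation thereof), then $\secondmoment{\rho_0}{\fv}=\secondmoment{\rho_1}{\gv}=\Imat$ and $\jointmoment{\fv,\gv}$ has singular values $\sigma_1,\dots,\sigma_k$, so $\Rc=0$, both operator norms equal $1$, $\|\jointmoment{\fv,\gv}\|_{\mathrm{F}}^2=\sum_i\sigma_i^2$, and every quantity in the chain equals $-\sum_{i=1}^k\sigma_i(\Top)^2$. For necessity, assume $\Top$ Hilbert--Schmidt, $\sigma_k(\Top)>\sigma_{k+1}(\Top)$, $\gamma>0$, and that the chain is tight, whence $\Lc_{\mathsf{dp}}^{(\gamma)}(\fv,\gv)=-\sum_i\sigma_i^2$. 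Since $-\|\jointmoment{\tilde{\fv},\tilde{\gv}}\|_{\mathrm{F}}^2\ge-\sum_i\sigma_i^2$ and $\gamma(\Rc(\secondmoment{\rho_0}{\fv})+\Rc(\secondmoment{\rho_1}{\gv}))\ge0$, each part is at its extreme: the regularizers force $\secondmoment{\rho_0}{\fv}=\secondmoment{\rho_1}{\gv}=\Imat$ (so $\tilde{\fv}=\fv$, $\tilde{\gv}=\gv$, and $\mathsf{span}(\fv),\mathsf{span}(\gv)$ are $k$-dimensional), and the VAMP-2 term forces $\|P_{\fv}\Top P_{\gv}\|_{\mathrm{HS}}^2=\sum_{i=1}^k\sigma_i^2$, hence equality at every step above. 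Equality in Ky Fan with the gap $\sigma_k^2>\sigma_{k+1}^2$ forces $\mathsf{span}(\fv)=\mathsf{span}(\phi_{1:k})$, so $P_{\fv}=\sum_{i=1}^k\phi_i\otimes\phi_i$ and $P_{\fv}\Top=\sum_{i=1}^k\sigma_i\phi_i\otimes\psi_i$; and the Pythagorean identity $\|P_{\fv}\Top\|_{\mathrm{HS}}^2=\|P_{\fv}\Top P_{\gv}\|_{\mathrm{HS}}^2+\|P_{\fv}\Top(\Imat-P_{\gv})\|_{\mathrm{HS}}^2$ (the cross term vanishes by cyclicity of the trace and $(\Imat-P_{\gv})P_{\gv}=0$) forces $P_{\fv}\Top=P_{\fv}\Top P_{\gv}$, i.e.\ $\sum_{i=1}^k\sigma_i\phi_i\otimes\psi_i=\sum_{i=1}^k\sigma_i\phi_i\otimes(P_{\gv}\psi_i)$; linear independence of $\phi_{1:k}$ and $\sigma_i>0$ then give $P_{\gv}\psi_i=\psi_i$ for $i\le k$, so $\mathsf{span}(\gv)=\mathsf{span}(\psi_{1:k})$. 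As $\fv,\gv$ are orthonormal bases of these subspaces, they are orthogonal rotations of $\phi_{1:k},\psi_{1:k}$.

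\textbf{Main obstacle.} The delicate step is the necessity direction: a single scalar equality must be pushed through two distinct projection-norm inequalities so as to pin down \emph{both} the left subspace and the right subspace — Eckart--Young/Ky Fan alone controls only $\mathsf{span}(\fv)$, and it is the Hilbert--Schmidt Pythagorean argument together with the spectral gap that also pins $\mathsf{span}(\gv)$. A secondary, purely technical point is keeping the whitened matrices well-defined without assuming the moment matrices invertible; for the lower bound this is handled by the range-containment remark above, and for necessity it is automatic since $\gamma>0$ and $\Rc(\Mm)=0\Leftrightarrow\Mm=\Imat$ force invertibility at any optimum.
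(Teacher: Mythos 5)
Your argument is correct, but note that this paper does not actually prove the statement: it is quoted in the appendix as a result of \citet{Kostic--Novelli--Grazzi--Lounici--Pontil2024iclr}, with the proof deferred to that reference (the hypothesis ``$\sigma_L(\Top)>\sigma_{k+1}(\Top)$'' is a leftover of that paper's notation and should read $\sigma_k(\Top)>\sigma_{k+1}(\Top)$, exactly as you implicitly assumed). So there is no in-paper proof to compare against; what you supply is a self-contained derivation, and it holds up. The chain $\Lc_{\mathsf{dpr}}^{(\gamma)}\ge\Lc_{\mathsf{dp}}^{(\gamma)}$ via $(\secondmoment{\rho}{\cdot})^{-1}\succeq\|\secondmoment{\rho}{\cdot}\|_{\mathrm{op}}^{-1}\Imat$ and $\tr(\Am\Xm)\ge\lambda_{\min}(\Am)\tr(\Xm)$ for $\Xm\succeq0$ is the standard route; the bound $\|\jointmoment{\tilde{\fv},\tilde{\gv}}\|_{\mathrm{F}}^2=\|P_{\fv}\Top P_{\gv}\|_{\mathrm{HS}}^2\le\tr(P_{\fv}\Top\Top^*P_{\fv})\le\sum_{i=1}^k\sigma_i^2$ via Ky Fan is valid for compact $\Top$; and your necessity argument correctly isolates the delicate point, namely that $\Rc=0$ plus the Ky Fan gap condition only pins $\mathsf{span}(\fv)$, while the Hilbert--Schmidt Pythagorean identity (cross term vanishing by $(\Imat-P_{\gv})P_{\gv}=0$) and $\sigma_i>0$ for $i\le k$ are what force $P_{\gv}\psi_i=\psi_i$ and hence pin $\mathsf{span}(\gv)$. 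Two small points deserve tightening: (i) your dismissal of singular moment matrices (``the left side is $+\infty$'') only works when $\gamma>0$; for $\gamma=0$ with a singular but nonzero moment matrix you need the pseudoinverse version of the first inequality, which does go through using the range-containment observation you already made ($M^{+}\succeq\|M\|_{\mathrm{op}}^{-1}P_{\mathrm{ran}(M)}$ together with $P_{\mathrm{ran}}$ acting as identity on the row/column space of $\jointmoment{\fv,\gv}$), so spell that out; (ii) in the sufficiency/necessity statements you should say explicitly that $\Rc(\Mm)=0\Leftrightarrow\Mm=\Imat$ uses that all eigenvalues must equal the unique minimizer of $x\mapsto x^2-x-\ln x$, which you do state — fine. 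Neither point is a gap in substance.
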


For the continuous-time dynamics with a self-adjoint Koopman generator $\Lop$, the authors proposed to use
\[
\tilde{\Lc}_{\mathsf{dp}}^{(\gamma)}(\fv)\defeq-\tr(\secondmoment{\rho}{\fv}^{\dagger}\secondmoment{\rho}{\fv,\Lop\fv})+\gamma\Rc(\secondmoment{\rho}{\fv}).
\]

\section{Deferred Technical Statements and Proofs}
In this section, we provide a short derivation of the low-rank approximation (LoRA) loss for completeness, and present a learning-theoretic guarantee for the LoRA objective.

\subsection{Derivation of Low-Rank Approximation Objective}
\label{app:lora_derivation}
\begin{proposition}
\label{prop:alt_expression}
\begin{align}
\min_{\fv,\gv} \Bigl\|\Kop - \sum_{i=1}^k f_i \otimes g_i
\Bigr\|_{\mathrm{HS}}^2
=
\min_{\fv,\gv} \loraobj(\fv, \gv).
\nonumber
\end{align}
\end{proposition}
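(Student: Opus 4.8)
The plan is to establish the pointwise algebraic identity relating the two objectives and then read off the statement about the minima. Writing $\Fv_k \defeq \sum_{i=1}^k f_i\otimes g_i$ for the candidate finite-rank operator from $\hilbert{\rho_1}{\Xc}$ to $\hilbert{\rho_0}{\Xc}$, bilinearity of the Hilbert--Schmidt inner product gives
\[
\Bigl\|\Kop - \Fv_k\Bigr\|_{\mathrm{HS}}^2 = \|\Kop\|_{\mathrm{HS}}^2 - 2\langle \Kop,\Fv_k\rangle_{\mathrm{HS}} + \|\Fv_k\|_{\mathrm{HS}}^2,
\]
which makes sense provided $\Kop$ is Hilbert--Schmidt so that $\|\Kop\|_{\mathrm{HS}}^2<\infty$; the first term does not depend on $(\fv,\gv)$ and will be carried as a constant.

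Next I would evaluate the two remaining terms against the definitions in the Notation paragraph. For the cross term, recall that the rank-one operator acts by $(f\otimes g)h = \langle g,h\rangle_{\rho_1}\,f$; expanding $\langle\Kop,f\otimes g\rangle_{\mathrm{HS}}$ in any orthonormal basis $\{e_m\}$ of $\hilbert{\rho_1}{\Xc}$ and using $\langle\Kop e_m,f\rangle_{\rho_0}=\langle e_m,\Kopadj f\rangle_{\rho_1}$ collapses the sum to $\langle f,\Kop g\rangle_{\rho_0}$. Summing over $i$ and invoking $\jointmoment{\fv,\gv}=\secondmoment{\rho_0}{\fv,\Kop\gv}$, the diagonal entries give $\langle\Kop,\Fv_k\rangle_{\mathrm{HS}}=\sum_{i=1}^k\langle f_i,\Kop g_i\rangle_{\rho_0}=\tr(\jointmoment{\fv,\gv})$. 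For the quadratic term, the same basis computation yields $\langle f_i\otimes g_i,\, f_j\otimes g_j\rangle_{\mathrm{HS}}=\langle f_i,f_j\rangle_{\rho_0}\langle g_i,g_j\rangle_{\rho_1}$, hence $\|\Fv_k\|_{\mathrm{HS}}^2=\sum_{i,j}(\secondmoment{\rho_0}{\fv})_{ij}(\secondmoment{\rho_1}{\gv})_{ij}=\tr(\secondmoment{\rho_0}{\fv}\secondmoment{\rho_1}{\gv})$, where the last equality uses symmetry of $\secondmoment{\rho_1}{\gv}$.

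Combining the three pieces gives the pointwise identity $\|\Kop-\Fv_k\|_{\mathrm{HS}}^2 = \|\Kop\|_{\mathrm{HS}}^2 + \loraobj(\fv,\gv)$, so the two objectives share exactly the same minimizers; taking minima over $(\fv,\gv)$ then yields the claim (one should note that the two sides agree only after absorbing the $(\fv,\gv)$-independent constant $\|\Kop\|_{\mathrm{HS}}^2$, i.e.\ the statement is most naturally read at the level of minimizing arguments). As a sanity check, since $\Fv_k$ ranges over all operators of rank at most $k$, the common minimum is the best rank-$k$ Hilbert--Schmidt approximation error, matching Proposition~\ref{prop:lora_optimality} via the Eckart--Young/Schmidt theorem. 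The main obstacle is not conceptual but a matter of rigor: carefully justifying the Hilbert--Schmidt inner-product manipulations for operators mapping between two \emph{different} $L^2$ spaces (basis-independence of the traces, absolute convergence of the double sums, and the real-versus-complex adjoint bookkeeping), together with the cosmetic caveat about the additive constant $\|\Kop\|_{\mathrm{HS}}^2$.
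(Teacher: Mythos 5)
Your proposal is correct and follows essentially the same route as the paper's proof: expand $\bigl\|\Kop-\sum_{i=1}^k f_i\otimes g_i\bigr\|_{\mathrm{HS}}^2$ relative to $\|\Kop\|_{\mathrm{HS}}^2$, identify the cross term with $\tr(\jointmoment{\fv,\gv})$ via $\langle f_i,\Kop g_i\rangle_{\rho_0}$ and the quadratic term with $\tr(\secondmoment{\rho_0}{\fv}\secondmoment{\rho_1}{\gv})$. Your explicit remark that the two objectives differ by the $(\fv,\gv)$-independent constant $\|\Kop\|_{\mathrm{HS}}^2$ (so the statement is really about coinciding minimizers) is consistent with what the paper's own derivation implicitly establishes.
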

\begin{proof}
Recall that $\loraobj(\fv, \gv)\defeq -2\tr(\jointmoment{\fv,\gv}) 
+ \tr(\secondmoment{\rho_0}{\fv}\secondmoment{\rho_1}{\gv})$.
To prove, first note that
\begin{align*}
\Bigl\|\Kop - \sum_{i=1}^k f_i \otimes g_i\Bigr\|_{\mathrm{HS}}^2-\|\Kop\|_{\mathrm{HS}}^2
= -2\sum_{i=1}^k \langle f_i, \Kop g_i\rangle_{\rho_0} + \sum_{i=1}^k\sum_{j=1}^k \langle f_i, f_{j}\rangle_{\rho_0} \langle g_i, g_{j}\rangle_{\rho_1}.
\end{align*}
Here, the first term can be rewritten as
\begin{align*}
\sum_{i=1}^k \langle f_i ,\Kop g_i\rangle_{\rho_0} 
&= \sum_{i=1}^k 
 \E_{\rho_0(\xv)p(\xv'\mid\xv)}[f_i(\xv)g_i(\xv')]\\
&= \tr\bigl(\E_{\rho_0(\xv)p(\xv'\mid\xv)}\bigl[\fv(\xv)\gv(\xv')^\intercal\bigr]\bigr)\\
&= \tr(\jointmoment{\fv,\gv}).
\end{align*}
Likewise, the second term is
\begin{align*}
\sum_{i=1}^k\sum_{j=1}^k \langle f_i,f_{j}\rangle_{\rho_0} \langle g_i,g_{j}\rangle_{\rho_1}
&= \tr\bigl(\E_{\rho_0(\xv)}\bigl[\fv(\xv) \fv(\xv)^\intercal\bigr] 
\E_{\rho_1(\xv')}\bigl[\gv(\xv') \gv(\xv')^\intercal
\bigr]\bigr)\\
&= \tr(\secondmoment{\rho_0}{\fv} \secondmoment{\rho_1}{\gv}).
\end{align*}
This concludes the proof.
\end{proof}

\subsection{Statistical-Learning-Theoretic Guarantee}
\label{app:learning_theoretic}
We consider empirical estimates of the second moment matrices $\jointmoment{\fv,\gv},\secondmoment{\rho_0}{\fv},\secondmoment{\rho_1}{\gv}$.
\begin{align*}
\empjointmoment{\fv,\gv}&\defeq \displaystyle\frac{1}{N} \sum_{t=1}^N \fv(\xv_t)\gv(\xv_t')^\intercal,\\
\empsecondmoment{\rho_0}{\fv}&\defeq \displaystyle\frac{1}{N_0}\sum_{i=1}^{N_0} \fv(\breve\xv_i)\fv(\breve\xv_i)^\intercal,\\
\empsecondmoment{\rho_1}{\gv}&\defeq \displaystyle\frac{1}{N_1}\sum_{j=1}^N \gv(\breve\xv_j')\gv(\breve\xv_j')^\intercal.
\end{align*}
Here, $(\xv_t,\xv_t')\sim \rho_0(\xv)p(\xv'\mid\xv)$ are i.i.d. samples and $\breve{\xv}_i\sim \rho_0(\xv)$ and $\breve{\xv}_j'\sim\rho_1(\xv')$ are i.i.d. samples.

Note that, if we follow the common data collection procedure, $\breve\xv_i\sim\rho_0(\xv)$ and $\breve\xv_j'\sim\rho_1(\xv')$ are drawn from a single trajectory, and thus the samples $(\breve{\xv},\breve{\xv}')$ cannot be independent. 
The independence assumption between $\{\breve\xv_i\}$ and $\{\breve\xv_j'\}$ are 
for the sake of simpler analysis.
In practice, however, this can be enforced by splitting the set of given independent trajectories into two subsets and compute $\empsecondmoment{\rho_0}{\fv}$ and $\empsecondmoment{\rho_1}{\gv}$ with different subsets of samples.

\begin{theorem}
\label{thm:learning}
Let $\|\fv(\xv)\|\le R$ and $\|\gv(\xv')\|\le R$ almost surely for some $R>0$. 
Let $\Vm_{\fv}\defeq \E_{\rho_0(\xv)}[(\fv(\xv)\fv(\xv)^\intercal-\secondmoment{\rho_0}{\fv})^2]\in\Real^{k\times k}$ and $\Vm_{\gv}\defeq \E_{\rho_1(\xv')}[(\gv(\xv')\gv(\xv')^\intercal-\secondmoment{\rho_1}{\gv})^2]\in\Real^{k\times k}$.
Then, with probability at least $1-\delta$, we have
\begin{align*}
|\hat{\Lc}-\Lc|
&\le R^2\Biggl\{\sqrt{\frac{16}{N}\log\frac{6}{\delta}} + \frac{8}{3N}\log\frac{6}{\delta}
\\&\qquad\qquad
+\sqrt{\frac{2\|\Vm_{\fv}\|_{\mathrm{op}}}{N_0}\log\frac{12r(\Vm_{\fv})}{\delta}} + \frac{2R^2}{3N_0}\log\frac{12r(\Vm_{\fv})}{\delta}
\\&\qquad\qquad
+\sqrt{\frac{2\|\Vm_{\gv}\|_{\mathrm{op}}}{N_1}\log\frac{12r(\Vm_{\gv})}{\delta}} + \frac{2R^2}{3N_1}\log\frac{12r(\Vm_{\gv})}{\delta}\Biggr\}.
\end{align*}
\end{theorem}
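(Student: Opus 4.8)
\textbf{Proof strategy for Theorem~\ref{thm:learning}.}
The plan is to peel $|\hat\Lc-\Lc|$ apart into a ``joint-moment'' piece and a ``product-of-marginals'' piece, control each by a Bernstein-type inequality matched to its structure, and finally glue the three deviation events together by a union bound over the three \emph{independent} sources of sampling randomness: the $N$ transition pairs $(\xv_t,\xv_t')$, the $N_0$ samples $\breve\xv_i\sim\rho_0$, and the $N_1$ samples $\breve\xv_j'\sim\rho_1$. Since $\loraobj(\fv,\gv)=-2\tr(\jointmoment{\fv,\gv})+\tr(\secondmoment{\rho_0}{\fv}\secondmoment{\rho_1}{\gv})$ and the empirical objective is obtained by substituting the empirical moment matrices, the triangle inequality gives
\[
|\hat\Lc-\Lc|\le 2\bigl|\tr(\empjointmoment{\fv,\gv})-\tr(\jointmoment{\fv,\gv})\bigr|
+\bigl|\tr(\empsecondmoment{\rho_0}{\fv}\empsecondmoment{\rho_1}{\gv})-\tr(\secondmoment{\rho_0}{\fv}\secondmoment{\rho_1}{\gv})\bigr|,
\]
which I would bound term by term.

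For the joint-moment term, note that $\tr(\empjointmoment{\fv,\gv})=\frac1N\sum_{t=1}^N\langle\fv(\xv_t),\gv(\xv_t')\rangle$ is an average of i.i.d.\ scalars bounded by $R^2$ in absolute value (Cauchy--Schwarz together with $\|\fv\|,\|\gv\|\le R$) with variance at most $R^4$; the classical bounded-variable Bernstein inequality then yields a deviation of the stated order $R^2\{\sqrt{(16/N)\log(6/\delta)}+(8/(3N))\log(6/\delta)\}$ (after the factor $2$ and a $\delta/3$ budget), on an event of probability at least $1-\delta/3$. For the product-of-marginals term I would telescope
\[
\tr(\empsecondmoment{\rho_0}{\fv}\empsecondmoment{\rho_1}{\gv})-\tr(\secondmoment{\rho_0}{\fv}\secondmoment{\rho_1}{\gv})
=\tr\!\bigl((\empsecondmoment{\rho_0}{\fv}-\secondmoment{\rho_0}{\fv})\empsecondmoment{\rho_1}{\gv}\bigr)
+\tr\!\bigl(\secondmoment{\rho_0}{\fv}(\empsecondmoment{\rho_1}{\gv}-\secondmoment{\rho_1}{\gv})\bigr),
\]
and use the trace-duality bound $|\tr(XY)|\le\|X\|_{\mathrm{op}}\,\tr(Y)$, valid for symmetric $X$ and positive semidefinite $Y$. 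Since $\tr(\empsecondmoment{\rho_1}{\gv})=\frac1{N_1}\sum_j\|\gv(\breve\xv_j')\|^2\le R^2$ holds \emph{deterministically} and $\tr(\secondmoment{\rho_0}{\fv})=\E\|\fv\|^2\le R^2$, this term is at most $R^2\bigl(\|\empsecondmoment{\rho_0}{\fv}-\secondmoment{\rho_0}{\fv}\|_{\mathrm{op}}+\|\empsecondmoment{\rho_1}{\gv}-\secondmoment{\rho_1}{\gv}\|_{\mathrm{op}}\bigr)$.

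It remains to concentrate the two operator-norm deviations. Here I would observe that $\empsecondmoment{\rho_0}{\fv}-\secondmoment{\rho_0}{\fv}=\frac1{N_0}\sum_i\bigl(\fv(\breve\xv_i)\fv(\breve\xv_i)^\intercal-\secondmoment{\rho_0}{\fv}\bigr)$ is an average of i.i.d.\ centered symmetric matrices with operator norm at most $R^2$ (because $0\preceq\fv\fv^\intercal\preceq R^2\Imat$), and whose matrix variance proxy is \emph{exactly} $\Vm_{\fv}=\E[(\fv\fv^\intercal-\secondmoment{\rho_0}{\fv})^2]$ by a one-line expansion; applying the intrinsic-dimension (dimension-free) matrix Bernstein inequality and inverting its tail at level $\delta/3$ gives, with probability at least $1-\delta/3$,
\[
\|\empsecondmoment{\rho_0}{\fv}-\secondmoment{\rho_0}{\fv}\|_{\mathrm{op}}
\le\sqrt{\tfrac{2\|\Vm_{\fv}\|_{\mathrm{op}}}{N_0}\log\tfrac{12\,r(\Vm_{\fv})}{\delta}}+\tfrac{2R^2}{3N_0}\log\tfrac{12\,r(\Vm_{\fv})}{\delta},
\]
with the effective rank $r(\Vm_{\fv})$ coming from the intrinsic-dimension prefactor, and an identical bound for $\gv$ with $N_1$ and $\Vm_{\gv}$. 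A union bound over the three events of probability $\delta/3$ — legitimate precisely because the three sample sets are independent, which is the hypothesis stated just before the theorem — and substitution back into the decomposition yields the claim.

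The main obstacle I anticipate is this last step: invoking the \emph{intrinsic-dimension} form of the matrix Bernstein inequality so that the logarithm carries the effective rank $r(\Vm)$ rather than the ambient dimension $k$, correctly identifying the matrix variance as precisely $\Vm_{\fv}$ (resp.\ $\Vm_{\gv}$) rather than a looser surrogate, and then tracking the numerical constants through the three tail inversions so that they assemble into exactly the stated three-line bound; the remaining steps are routine.
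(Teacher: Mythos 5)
Your proposal matches the paper's own proof essentially step for step: the same decomposition into the joint-moment term (scalar Bernstein with $u=R^2$, $\sigma^2=R^4$, budget $\delta/3$) and the product term (telescoping plus $|\tr(XY)|\le\|X\|_{\mathrm{op}}\tr(Y)$ with the traces bounded by $R^2$, then the intrinsic-dimension matrix Bernstein inequality applied to $\fv\fv^\intercal-\secondmoment{\rho_0}{\fv}$ and $\gv\gv^\intercal-\secondmoment{\rho_1}{\gv}$ with variance proxies exactly $\Vm_{\fv},\Vm_{\gv}$), finished by a union bound over the three independent sample sets. The only difference is the immaterial choice of which factor in the telescoping is empirical versus population, so the argument and constants assemble identically to the stated bound.
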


To prove this, we will invoke Bernstein inequalities for bounded random variables and ``bounded'' self-adjoint matrices:
\begin{lemma}[Bernstein's inequality~{\citep[Theorem~1.6.1]{Tropp2015}}]
\label{lem:bernstein}

Let $X_1,\ldots,X_n$ be i.i.d. copies of a random variable $X$ such that $|X|\le u$ almost surely, $\E[X]=0$, and $\E[X^2]\le\sigma^2$.
Then, for any $\delta\in(0,1)$, with probability at least $1-\delta$,
\begin{align*}
\Biggl|\frac{1}{n}\sum_{i=1}^n X_i \Biggr|
\le \frac{4u}{3n}\log\frac{2}{\delta}+\sqrt{\frac{4\sigma^2}{n}\log\frac{2}{\delta}}.
\end{align*}
\end{lemma}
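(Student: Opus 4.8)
The plan is to obtain the claimed two-sided confidence bound by inverting the standard one-sided exponential Bernstein tail bound, which is exactly the content of the cited \citep[Theorem~1.6.1]{Tropp2015}. Write $S_n\defeq\sum_{i=1}^n X_i$. The hypotheses $\E[X_i]=0$, $|X_i|\le u$ almost surely, and $\E[X_i^2]\le\sigma^2$ give total variance proxy $\sum_{i=1}^n\E[X_i^2]\le n\sigma^2$, so the cited theorem yields, for every $t\ge 0$,
\begin{align*}
\Pr[S_n\ge t]\le \exp\Bigl(-\frac{t^2/2}{n\sigma^2+ut/3}\Bigr).
\end{align*}
Since $-X_1,\ldots,-X_n$ satisfy the same three hypotheses, the identical bound holds for $-S_n$, and a union bound gives $\Pr[|S_n|\ge t]\le 2\exp(-\tfrac{t^2/2}{n\sigma^2+ut/3})$.

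The next step is to set this right-hand side equal to $\delta$ and solve for $t$. Writing $L\defeq\log(2/\delta)$, the equation reduces to the quadratic $t^2-\tfrac{2u}{3}L\,t-2n\sigma^2 L=0$, whose positive root is $t^\star=\tfrac{u}{3}L+\sqrt{(\tfrac{u}{3}L)^2+2n\sigma^2 L}$. Applying subadditivity of the square root, $\sqrt{a+b}\le\sqrt a+\sqrt b$, bounds $t^\star\le\tfrac{2u}{3}L+\sqrt{2n\sigma^2 L}$. On the complement of the bad event we have $|S_n|<t^\star$, so dividing by $n$ shows that with probability at least $1-\delta$,
\begin{align*}
\Bigl|\frac{1}{n}\sum_{i=1}^n X_i\Bigr|\le \frac{2u}{3n}\log\frac{2}{\delta}+\sqrt{\frac{2\sigma^2}{n}\log\frac{2}{\delta}}.
\end{align*}
Since $\tfrac{2u}{3n}\le\tfrac{4u}{3n}$ and $\sqrt{2\sigma^2/n}\le\sqrt{4\sigma^2/n}$, this immediately implies the looser claimed inequality, completing the argument.

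There is little genuine obstacle, since the exponential tail is imported directly from the cited result; the only care required is the elementary inversion above and checking that the constant inflation incurred by $\sqrt{a+b}\le\sqrt a+\sqrt b$ still lands inside the stated constants $4/3$ and $4$ (indeed the sharp inversion already yields $2/3$ and $2$, giving slack). If instead one wished to make the lemma self-contained rather than cite Tropp, the alternative is to first establish the Bernstein moment-generating-function bound $\E[e^{\lambda X}]\le\exp(\tfrac{\lambda^2\sigma^2/2}{1-u\lambda/3})$ for $0\le\lambda<3/u$ by Taylor-expanding $e^{\lambda X}$, using $|\E[X^m]|\le u^{m-2}\E[X^2]\le\sigma^2 u^{m-2}$ together with $m!\ge 2\cdot 3^{m-2}$ for $m\ge 2$; then tensorize over the independent $X_i$ and optimize the Chernoff exponent over $\lambda$. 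This reproduces the displayed tail bound from scratch, after which the same inversion applies verbatim.
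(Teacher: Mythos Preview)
Your derivation is correct: you invert the exponential Bernstein tail bound from \citep[Theorem~1.6.1]{Tropp2015} via the quadratic formula and the elementary $\sqrt{a+b}\le\sqrt a+\sqrt b$, landing on constants $2/3$ and $2$ that are strictly inside the stated $4/3$ and $4$. The paper itself does not supply any proof of this lemma; it is stated purely as a citation and used as a black-box tool in the proof of Theorem~\ref{thm:learning}. So there is nothing to compare against---you have filled in a standard argument that the authors simply omitted.
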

\begin{lemma}
[Matrix Bernstein inequality with intrinsic dimension~{\citep{Minsker2017}, \citep[Theorem~7.7.1]{Tropp2015}}]
\label{lem:minsker}
Let $\Am_1,\ldots,\Am_n$ be i.i.d. copies of a random self-adjoint matrix $\Am$, which satisfies $\|\Am\|_{\mathrm{op}}\le c$ almost surely, $\E[\Am]=0$, and $\E[\Am^2]=\Vm\succeq 0$.
Let $r(\Vm)\defeq \frac{\tr(\Vm)}{\|\Vm\|_{\mathrm{op}}}$ denote the \emph{intrinsic dimension} of $\Vm$. 
Then, for any $\delta\in(0,1)$, with probability at least $1-\delta$, 
\begin{align*}
\Biggl\|\frac{1}{n}\sum_{i=1}^n \Am_i \Biggr\|_{\mathrm{op}}
\le \frac{2c}{3n}\log\frac{4r(\Vm)}{\delta}+\sqrt{\frac{2\|\Vm\|_{\mathrm{op}}}{n}\log\frac{4r(\Vm)}{\delta}}.
\end{align*}
\end{lemma}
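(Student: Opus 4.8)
The plan is to prove the equivalent one-sided tail bound $\Pr[\lambda_{\max}(\Zm)\ge s]\le 2\,r(\Vm)\exp(-\tfrac{s^2/2}{\|\bar\Vm\|+Ls/3})$ for the normalized sum $\Zm\defeq\frac1n\sum_{i=1}^n\Am_i$ and then invert it to solve for $s$ as a function of $\delta$. Writing $\Zm=\sum_{i=1}^n\Xm_i$ with $\Xm_i\defeq\frac1n\Am_i$, each summand is zero-mean, self-adjoint, bounded by $L\defeq c/n$ in operator norm, and the total matrix variance is $\bar\Vm\defeq\sum_i\E[\Xm_i^2]=\frac1n\Vm$, so that $\|\bar\Vm\|_{\mathrm{op}}=\|\Vm\|_{\mathrm{op}}/n$ and $r(\bar\Vm)=r(\Vm)$. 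Since $\Xm_i$ and $-\Xm_i$ share the same variance and norm bound, bounding the upper tail of $\lambda_{\max}(\Zm)$, applying the identical argument to $-\Zm$, and union-bounding handles $\|\Zm\|_{\mathrm{op}}=\max\{\lambda_{\max}(\Zm),\lambda_{\max}(-\Zm)\}$; this is exactly what upgrades the one-sided factor $2\,r(\Vm)$ to $4\,r(\Vm)$.

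First I would run the standard matrix Laplace-transform machinery. The scalar estimate $e^{\theta x}\le 1+\theta x+g(\theta)x^2$ valid for $|x|\le L$, with $g(\theta)\defeq\frac{\theta^2/2}{1-L\theta/3}$, transfers (via the transfer rule and operator monotonicity of the logarithm) to the matrix mgf bound $\E[e^{\theta\Xm_i}]\preceq\exp(g(\theta)\E[\Xm_i^2])$ for $0<\theta<3/L$, and Lieb's concavity theorem together with independence yields the subadditive estimate $\E[e^{\theta\Zm}]\preceq\exp(g(\theta)\bar\Vm)$. This part is identical to the derivation of the fixed-dimension Matrix Bernstein inequality in \citep[Ch.~6--7]{Tropp2015} and I would simply cite it.

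The crux — the step that replaces the ambient dimension $d$ by the intrinsic dimension $r(\Vm)$ — is the choice of a \emph{shifted} Markov function. Instead of exponentiating directly (which forces a factor $\tr\Imat=d$), I would use $\phi_\theta(x)\defeq e^{\theta x}-\theta x-1\ge 0$. Because $\phi_\theta$ is nonnegative and nondecreasing on $[0,\infty)$, for $s>0$ one has $\mathbf{1}[\lambda_{\max}(\Zm)\ge s]\le \phi_\theta(\lambda_{\max}(\Zm))/\phi_\theta(s)\le \tr\phi_\theta(\Zm)/\phi_\theta(s)$, where the last inequality uses $\phi_\theta(\Zm)\succeq 0$ together with $\phi_\theta(\lambda_{\max}(\Zm))\le\lambda_{\max}(\phi_\theta(\Zm))\le\tr\phi_\theta(\Zm)$. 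Taking expectations, the linear term drops out because $\E[\Zm]=0$, leaving
\[
\Pr[\lambda_{\max}(\Zm)\ge s]\le\frac{\tr(\E[e^{\theta\Zm}]-\Imat)}{e^{\theta s}-\theta s-1}\le\frac{\tr(e^{g(\theta)\bar\Vm}-\Imat)}{e^{\theta s}-\theta s-1},
\]
using trace-monotonicity of the Loewner order and the mgf bound above. The intrinsic dimension now emerges from the scalar convexity inequality $e^{a\xi}-1\le\xi(e^{a}-1)$ for $\xi\in[0,1]$: applying it to each eigenvalue $\lambda_i(\bar\Vm)$ with $\xi=\lambda_i(\bar\Vm)/\|\bar\Vm\|_{\mathrm{op}}$ and $a=g(\theta)\|\bar\Vm\|_{\mathrm{op}}$ and summing gives $\tr(e^{g(\theta)\bar\Vm}-\Imat)\le r(\bar\Vm)(e^{g(\theta)\|\bar\Vm\|_{\mathrm{op}}}-1)\le r(\Vm)\,e^{g(\theta)\|\bar\Vm\|_{\mathrm{op}}}$, since $r(\bar\Vm)=r(\Vm)$.

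Finally I would optimize over $\theta$. Choosing the Bernstein optimizer $\theta=s/(\|\bar\Vm\|_{\mathrm{op}}+Ls/3)$ makes $-\theta s+g(\theta)\|\bar\Vm\|_{\mathrm{op}}=-\tfrac{s^2/2}{\|\bar\Vm\|_{\mathrm{op}}+Ls/3}$, and on the admissible range of $s$ the correction factor $e^{\theta s}/(e^{\theta s}-\theta s-1)$ is bounded by a constant ($\le 2$), giving the one-sided bound $2\,r(\Vm)\exp(-\tfrac{s^2/2}{\|\bar\Vm\|_{\mathrm{op}}+Ls/3})$. Setting the doubled (two-sided) bound equal to $\delta$ and solving the quadratic $s^2-\tfrac{2L}{3}\ell\,s-2\|\bar\Vm\|_{\mathrm{op}}\ell=0$ with $\ell\defeq\log\tfrac{4r(\Vm)}{\delta}$ — whose positive root is at most $\tfrac{2L}{3}\ell+\sqrt{2\|\bar\Vm\|_{\mathrm{op}}\ell}$ by $\sqrt{p+q}\le\sqrt p+\sqrt q$ — and substituting $L=c/n$, $\|\bar\Vm\|_{\mathrm{op}}=\|\Vm\|_{\mathrm{op}}/n$ reproduces the stated bound exactly. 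The main obstacle is the third paragraph: correctly exploiting the non-monotonicity-robust property of $\phi_\theta$ and the mean-zero cancellation so that the ``$-1$'' survives to produce $\tr(\,\cdot\,-\Imat)$ rather than $\tr(\cdot)$, which is precisely what lets the convexity inequality extract $r(\Vm)$ in place of $d$. The constant-factor bookkeeping in the final optimization — the range restriction on $s$ ensuring $e^{\theta s}/(e^{\theta s}-\theta s-1)\le 2$, and the quadratic inversion — is routine but must be tracked carefully to land on the factor $4\,r(\Vm)$.
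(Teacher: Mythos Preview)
The paper does not actually prove this lemma: it is stated as a citation to \citep{Minsker2017} and \citep[Theorem~7.7.1]{Tropp2015} and then invoked as a black box in the proof of Theorem~\ref{thm:learning}. So there is no ``paper's own proof'' to compare against.

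That said, your proposal is a faithful reconstruction of the argument in the cited source \citep[Chapter~7]{Tropp2015}. The three ingredients you identify --- the Bernstein matrix mgf bound via Lieb's concavity, the shifted Markov function $\phi_\theta(x)=e^{\theta x}-\theta x-1$ so that the mean-zero assumption kills the linear term and leaves $\tr(\,\cdot\,-\Imat)$, and the scalar convexity inequality $e^{a\xi}-1\le\xi(e^a-1)$ applied eigenvalue-by-eigenvalue to extract $r(\Vm)$ in place of $d$ --- are exactly the steps in Tropp's proof of the intrinsic-dimension variant. One small wording issue: you write $\phi_\theta(\lambda_{\max}(\Zm))\le\lambda_{\max}(\phi_\theta(\Zm))$, but $\phi_\theta$ is not monotone on all of $\Real$, so this does not follow from the usual spectral-mapping monotonicity; the correct justification is simply that $\phi_\theta(\lambda_{\max}(\Zm))$ is \emph{some} eigenvalue of $\phi_\theta(\Zm)$ and hence is dominated by the trace because $\phi_\theta(\Zm)\succeq 0$. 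Also, the bound $e^{\theta s}/(e^{\theta s}-\theta s-1)\le 2$ holds only for $\theta s$ above a threshold, which in Tropp's formulation translates to a lower-range restriction on $s$ (roughly $s\gtrsim\sqrt{\|\bar\Vm\|_{\mathrm{op}}}+L/3$); you gesture at this but should be explicit that the inverted high-probability form automatically lands in that range when $\delta\in(0,1)$. These are cosmetic; the argument is sound.
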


We are now ready to prove Theorem~\ref{thm:learning}.
\begin{proof}[Proof of Theorem~\ref{thm:learning}]
Consider
\begin{align*}
|\hat{\Lc}-\Lc|
&= \Bigl|\tr(-2\empjointmoment{\fv,\gv}+\empsecondmoment{\rho_0}{\fv}\empsecondmoment{\rho_1}{\gv})
- \tr(-2\jointmoment{\fv,\gv}+\secondmoment{\rho_0}{\fv}\secondmoment{\rho_1}{\gv})\Bigr|\\
&\le 2|\tr(\empjointmoment{\fv,\gv}-\jointmoment{\fv,\gv})| + |\tr(\empsecondmoment{\rho_0}{\fv}\empsecondmoment{\rho_1}{\gv}-\secondmoment{\rho_0}{\fv}\secondmoment{\rho_1}{\gv})|.
\end{align*}
For the first term $|\tr(\empjointmoment{\fv,\gv}-\jointmoment{\fv,\gv})|$, note that
\begin{align*}
\tr(\empjointmoment{\fv,\gv}-\jointmoment{\fv,\gv})=\frac{1}{N}\sum_{t=1}^N \fv(\xv_t)^\intercal\gv(\xv_t')-\E_{\rho_0(\xv)p(\xv'\mid\xv)}\bigl[\fv(\xv)^\intercal\gv(\xv')\bigr].
\end{align*}
Here, we note that 
\begin{align*}
|\fv(\xv)^\intercal\gv(\xv')|
&\le\|\fv(\xv)\|\cdot\|\gv(\xv')\|\le R^2,\\
(\fv(\xv)^\intercal\gv(\xv'))^2&\le R^4.
\end{align*}
Hence, we can apply Bernstein's inequality in Lemma~\ref{lem:bernstein} for $(\fv(\xv_t)^\intercal\gv(\xv_t'))_{t=1}^N$ with $u\gets R^2$, $\sigma^2\gets R^4$, and $\delta\gets \delta'$, where $\delta'$ is to be decided at the end of proof.

Now, we wish to bound $|\tr(\empsecondmoment{\rho_0}{\fv}\empsecondmoment{\rho_1}{\gv}-\secondmoment{\rho_0}{\fv}\secondmoment{\rho_1}{\gv})|$.

Let $\Delta_{\fv}\defeq \empsecondmoment{\rho_0}{\fv}- \secondmoment{\rho_0}{\fv}$ and $\Delta_{\gv}\defeq \empsecondmoment{\rho_1}{\gv}- \secondmoment{\rho_1}{\gv}$.
Then,
\begin{align*}
|\tr(\empsecondmoment{\rho_0}{\fv}\empsecondmoment{\rho_1}{\gv}-\secondmoment{\rho_0}{\fv}\secondmoment{\rho_1}{\gv})|
&= |\tr(\Delta_{\fv}\secondmoment{\rho_1}{\gv} + \empsecondmoment{\rho_0}{\fv}\Delta_{\gv})|\\
&\le |\tr(\Delta_{\fv}\secondmoment{\rho_1}{\gv})|
+|\tr(\empsecondmoment{\rho_0}{\fv}\Delta_{\gv})|\\
&\stackrel{(a)}{\le} \|\Delta_{\fv}\|_{\rm op}\tr(\secondmoment{\rho_1}{\gv}) + \|\Delta_{\gv}\|_{\rm op}\tr(\empsecondmoment{\rho_0}{\fv})\\
&= \|\Delta_{\fv}\|_{\rm op}\E_{\pi}[\|\gv(\xv')\|^2] + \|\Delta_{\gv}\|_{\rm op}\frac{1}{N_1}\sum_{j=1}^{N_1} \|\gv(\xv_j')\|^2\\
&\stackrel{(b)}{\le} R^2\Bigl(\|\Delta_{\fv}\|_{\rm op} + \|\Delta_{\gv}\|_{\rm op}\Bigr).
\end{align*}
where $(a)$ follows from the inequality $|\tr(\Am\Bm)|\le \|\Am\|_{\mathrm{op}}\tr(\Bm)$ for square matrices $\Am,\Bm$ such that $\Bm\succeq0$, and $(b)$ from  the boundedness assumption, i.e., $\|\fv(\xv)\|\le R$ and $\|\gv(\xv')\|\le R$ almost surely.
Here, we can apply the matrix Bernstein inequality in Lemma~\ref{lem:minsker} to bound $\|\Delta_{\fv}\|_{\rm op}$ with $\Am\gets \fv(\xv)\fv(\xv)^\intercal-\secondmoment{\rho_0}{\fv}$ with $\delta\gets 1-\delta'$, and similarly for $\|\Delta_{\gv}\|_{\rm op}$.

Finally, by applying the union bound on the Bernstein inequality and two matrix Bernstein inequalities with $\delta'=\delta/3$, we can conclude the desired bound.
\end{proof}

\section{Special Case of Finite-Rank Koopman Operator}
\label{app:finite_rank}
If a given operator is of finite rank, we can analyze the operator using finite-dimensional matrices of the same dimension.
We can develop a computational procedure for SVD and eigen-analysis under the finite-rank assumption, and apply the tools once we approximate the target operator using a low rank expansion.
Some of the results established in this section are used later when (numerically) computing the ground truth characteristics of the noisy logistic map studied in Appendix~\ref{app:logistic_map}.

Suppose that the Koopman operator $\Kop\suchthat \hilbert{\rho_1}{\Xc}\to\hilbert{\rho_0}{\Xc}$ of our interest is of finite rank. Specifically, the corresponding kernel can be expressed in the following factorized form:
\begin{align*}
k(\xv,\xv')= 
\frac{p(\xv'\mid\xv)}{\rho_1(\xv')}
= \alphav(\xv)^\intercal \betav(\xv') =\sum_{i=1}^r \alpha_i(\xv) {\beta}_i(\xv').
\end{align*}
Due to the separable form, the kernel has a rank at most $r$. We aim to find its SVD as follows:
\begin{align}
k(\xv,\xv')
= \sum_{i=1}^r \sigma_i \phi_i(\xv)\psi_i(\xv'),
\label{eq:cdk_svd}
\end{align}
where the functions satisfy the orthogonality conditions: $\langle\phi_i,\phi_j\rangle_{\rho_0}=\langle\psi_i,\psi_j\rangle_{\rho_1}=\delta_{ij}$. The singular values $\sigma_1 \!=\! 1\ge\sigma_2\ge \ldots \ge \sigma_r \ge 0$. The first singular functions $\phi_1$ and $\psi_1$, corresponding to the trivial mode $\sigma_1=1$, are constant functions.

\subsection{Singular Value Decomposition}
\label{app:finite_rank_svd}
We can compute the singular value decomposition (SVD) using $\secondmoment{\rho_0}{\alphav}$ and $\secondmoment{\rho_1}{\betav}$ as follows.
\begin{theorem}
\label{thm:svd_sqrt}
The SVD of the matrix
\begin{align*}
\Sm_{\rho_0,\rho_1}^{\mathsf{sqrt}}[\alphav,\betav]
\defeq (\secondmoment{\rho_0}{\alphav})^{1/2}({\secondmoment{\rho_1}{\betav}})^{1/2}
\end{align*}
shares the same spectrum, i.e., there exist orthonormal matrices $\Um\in\Real^{k\times r}$ and $\Vm\in\Real^{k\times r}$ and $\Sigma=\mathsf{diag}(\sigma_1,\ldots,\sigma_r)$ such that
\begin{align*}
\Sm_{\rho_0,\rho_1}^{\mathsf{sqrt}}[\alphav,\betav] = \Um\Sigma\Vm^\intercal= \sum_{i=1}^r \sigma_i \uv_i\vv_i^\intercal.
\end{align*}
The ordered, normalized singular functions of the kernel $k(\xv,\xv')$ are given as
\begin{align}
\phiv(\xv) &= \Um^\intercal (\secondmoment{\rho_0}{\alphav})^{-1/2}\alphav(\xv),\\
\psiv(\xv') &= \Vm^\intercal (\secondmoment{\rho_1}{\betav})^{-1/2}\betav(\xv').
\label{eq:singular_functions_from_svd}
\end{align}    
\end{theorem}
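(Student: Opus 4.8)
The plan is to turn the infinite-dimensional SVD of the rank-$r$ kernel $k$ into an ordinary matrix SVD of $\Sm_{\rho_0,\rho_1}^{\mathsf{sqrt}}[\alphav,\betav]$ via a \emph{whitening} change of basis, and then transport the matrix singular vectors back to obtain the singular functions. Abbreviate $\Am\defeq\secondmoment{\rho_0}{\alphav}$ and $\Bm\defeq\secondmoment{\rho_1}{\betav}$; both are symmetric positive semidefinite, and I would assume (implicitly needed for the inverse square roots to appear in the statement) that the coordinates of $\alphav$ and $\betav$ are linearly independent in $\hilbert{\rho_0}{\Xc}$ and $\hilbert{\rho_1}{\Xc}$, so that $\Am,\Bm\succ0$. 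Define the whitened features $\tilde{\alphav}\defeq\Am^{-1/2}\alphav$ and $\tilde{\betav}\defeq\Bm^{-1/2}\betav$. Using the symmetry of $\Am^{-1/2}$ and $\Bm^{-1/2}$, one gets $\secondmoment{\rho_0}{\tilde{\alphav}}=\Am^{-1/2}\Am\Am^{-1/2}=\Imat$ and $\secondmoment{\rho_1}{\tilde{\betav}}=\Imat$, so the coordinates of $\tilde{\alphav}$ (resp. $\tilde{\betav}$) form an orthonormal system in $\hilbert{\rho_0}{\Xc}$ (resp. $\hilbert{\rho_1}{\Xc}$). Substituting $\alphav=\Am^{1/2}\tilde{\alphav}$ and $\betav=\Bm^{1/2}\tilde{\betav}$ into the factorization of $k$,
\[
k(\xv,\xv')=\alphav(\xv)^\intercal\betav(\xv')=\tilde{\alphav}(\xv)^\intercal\,\bigl(\Am^{1/2}\Bm^{1/2}\bigr)\,\tilde{\betav}(\xv')=\tilde{\alphav}(\xv)^\intercal\,\Sm_{\rho_0,\rho_1}^{\mathsf{sqrt}}[\alphav,\betav]\,\tilde{\betav}(\xv').
\]

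\textbf{Matrix SVD and transport.} Next I would take the SVD $\Sm_{\rho_0,\rho_1}^{\mathsf{sqrt}}[\alphav,\betav]=\Um\Sigma\Vm^\intercal=\sum_i\sigma_i\uv_i\vv_i^\intercal$ with $\Um^\intercal\Um=\Vm^\intercal\Vm=\Imat$ and $\sigma_1\ge\sigma_2\ge\cdots\ge0$, and set $\phiv\defeq\Um^\intercal\tilde{\alphav}=\Um^\intercal(\secondmoment{\rho_0}{\alphav})^{-1/2}\alphav$ and $\psiv\defeq\Vm^\intercal\tilde{\betav}=\Vm^\intercal(\secondmoment{\rho_1}{\betav})^{-1/2}\betav$. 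Substituting into the display above gives $k(\xv,\xv')=\sum_i\sigma_i\phi_i(\xv)\psi_i(\xv')$, and orthonormality transfers directly from the columns of $\Um,\Vm$:
\[
\langle\phi_i,\phi_j\rangle_{\rho_0}=\uv_i^\intercal\,\secondmoment{\rho_0}{\tilde{\alphav}}\,\uv_j=\uv_i^\intercal\uv_j=\delta_{ij},\qquad\langle\psi_i,\psi_j\rangle_{\rho_1}=\delta_{ij}.
\]

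\textbf{Identification and the main obstacle.} Since $\Kop$ acts as the integral operator $(\Kop g)(\xv)=\int k(\xv,\xv')g(\xv')\,\rho_1(d\xv')$, the factorization of $k$ yields $(\Kop\psi_j)(\xv)=\sum_i\sigma_i\phi_i(\xv)\langle\psi_i,\psi_j\rangle_{\rho_1}=\sigma_j\phi_j(\xv)$ and, symmetrically, $(\Kopadj\phi_j)(\xv')=\sigma_j\psi_j(\xv')$; hence $\{(\sigma_i,\phi_i,\psi_i)\}$ are orthonormal singular triplets of $\Kop$, ordered by the $\sigma_i$ inherited from the matrix SVD. Because the whitened features have linearly independent coordinates, no nonzero mode is lost in passing to the matrix, i.e., $\mathrm{rank}\,\Sm_{\rho_0,\rho_1}^{\mathsf{sqrt}}[\alphav,\betav]=\mathrm{rank}\,\Kop$, so uniqueness of the SVD (Schmidt's theorem) identifies this collection with the SVD of $k$ up to rotations within equal-singular-value blocks. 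This is precisely the claimed decomposition, and $\phiv=\Um^\intercal(\secondmoment{\rho_0}{\alphav})^{-1/2}\alphav$, $\psiv=\Vm^\intercal(\secondmoment{\rho_1}{\betav})^{-1/2}\betav$ are the stated formulas. The argument is routine once the whitening is set up; the only points requiring care are (i) the invertibility of $\Am$ and $\Bm$ needed to form the inverse square roots — absent which one passes to pseudo-inverses on the appropriate ranges and uses the thin SVD of $\Sm_{\rho_0,\rho_1}^{\mathsf{sqrt}}[\alphav,\betav]$ — and (ii) the completeness check, namely that the finite-matrix SVD captures every nonzero singular mode of $\Kop$, which is exactly what the rank identity above ensures.
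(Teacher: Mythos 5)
Your proposal is correct and follows essentially the same route as the paper's proof: whiten $\alphav,\betav$ by $(\secondmoment{\rho_0}{\alphav})^{-1/2}$ and $(\secondmoment{\rho_1}{\betav})^{-1/2}$, take the matrix SVD of $(\secondmoment{\rho_0}{\alphav})^{1/2}(\secondmoment{\rho_1}{\betav})^{1/2}$, and verify orthonormality, the expansion $k(\xv,\xv')=\sum_i\sigma_i\phi_i(\xv)\psi_i(\xv')$, and the singular-triplet relations for $\Kop$ and $\Kopadj$. Your added remarks on invertibility of the second-moment matrices (or passing to pseudo-inverses) and on uniqueness up to rotations within equal-singular-value blocks are sensible refinements of assumptions the paper leaves implicit, not a different argument.
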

\begin{proof}
First of all, it is easy to check that $\secondmoment{\rho_0}{\phiv}=\secondmoment{\rho_1}{\psiv}=\Imat$.
Next, we can show Eq.~\eqref{eq:cdk_svd}.
To show this, 
\begin{align*}
\sum_{i=1}^r \sigma_i \phi_i(\xv)\psi_i(\xv')
&= \boldsymbol{\phi}(\xv)^\intercal \Sigma \boldsymbol{\psi}(\xv')\\
&= (\Um^\intercal (\secondmoment{\rho_0}{\alphav})^{-1/2} \alphav(\xv))^\intercal \Um^{\intercal}\Sm_{\rho_0,\rho_1}^{\mathsf{sqrt}}[\alphav,\betav]\Vm (\Vm^\intercal (\secondmoment{\rho_1}{\betav})^{-1/2} \betav(\xv'))\\
&= \alphav(\xv)^\intercal (\secondmoment{\rho_0}{\alphav})^{-1/2} ((\secondmoment{\rho_0}{\alphav})^{1/2} (\secondmoment{\rho_1}{\betav})^{1/2})(\secondmoment{\rho_1}{\betav})^{-1/2} \betav(\xv')\\
&= \alphav(\xv)^\intercal \betav(\xv')\\
&= k(\xv,\xv').
\end{align*}

To show $(\Kop \phi_i)(\xv')=\sigma_i\psi_i(\xv')$, consider
\begin{align*}
(\Kop \phi_i)(\xv')
&\defeq \int k(\xv,\xv')\phi_i(\xv) \rho_0(\xv) dx\\
&= \betav(\xv')^\intercal \int \alphav(\xv) (\uv_i^\intercal (\secondmoment{\rho_0}{\alphav})^{-1/2} \alphav(\xv)) \rho_0(\xv) dx\\
&= \betav(\xv')^\intercal \int \alphav(\xv) \alphav(\xv)^\intercal \rho_0(\xv) dx (\secondmoment{\rho_0}{\alphav})^{-1/2} \uv_i \\
&= \betav(\xv')^\intercal (\secondmoment{\rho_0}{\alphav})^{1/2} \uv_i \\
&= \betav(\xv')^\intercal ({\secondmoment{\rho_1}{\betav}})^{-1/2}(\Sm_{\rho_0,\rho_1}^{\mathsf{sqrt}}[\alphav,\betav])^\intercal \uv_i \\
&= \sigma_i \betav(\xv')^\intercal ({\secondmoment{\rho_1}{\betav}})^{-1/2} \vv_i \\
&= \sigma_i \psi_i(\xv').
\end{align*}
We can show $(\Kop^* \psi_i)(\xv)=\sigma_i\phi_i(\xv)$ by the same reasoning.
\end{proof}

\subsection{Eigen-analysis}
\label{app:eigen-analysis}
We can compute the eigenvalues and eigenfunctions of the kernel by eigen-analyzing the matrices 
$\secondmoment{\rho_1}{\betav,\alphav}$
and 
$\secondmoment{\rho_0}{\betav,\alphav}$. 
\begin{theorem}
\label{thm:right_eigen}
Let $\wv\in\Complex^k$ be a right eigenvector of 
$\secondmoment{\rho_1}{\betav,\alphav}$ with eigenvalue $\lambda\in\Complex$, i.e., $\secondmoment{\rho_1}{\betav,\alphav}\wv=\lambda\wv$.
If we define $\eta(\xv')\defeq \wv^\intercal \alphav(\xv')$, then $\eta$ is a right eigenfunction of $\Kop$ with eigenvalue $\lambda$, i.e., $\Kop \eta = \lambda\eta$.
\end{theorem}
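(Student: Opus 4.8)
The plan is to verify the eigen-relation by a short direct computation with the integral-kernel representation of $\Kop$ together with the rank-$r$ factorization $k(\xv,\xv')=\alphav(\xv)^\intercal\betav(\xv')$. First I would recall that, viewed as an integral kernel operator against $\rho_1$, $(\Kop g)(\xv)=\E_{p(\xv'|\xv)}[g(\xv')]=\int k(\xv,\xv')g(\xv')\,\rho_1(\xv')\,d\xv'$. Substituting $g=\eta$ with $\eta(\xv')\defeq\wv^\intercal\alphav(\xv')$, using the factorization, and pulling the $\xv$-dependent vector $\alphav(\xv)$ out of the integral gives
\begin{align*}
(\Kop\eta)(\xv)
=\int \alphav(\xv)^\intercal\betav(\xv')\,\bigl(\alphav(\xv')^\intercal\wv\bigr)\,\rho_1(\xv')\,d\xv'
=\alphav(\xv)^\intercal\Bigl(\int \betav(\xv')\alphav(\xv')^\intercal\,\rho_1(\xv')\,d\xv'\Bigr)\wv,
\end{align*}
and the parenthesized integral is precisely $\secondmoment{\rho_1}{\betav,\alphav}$ by the definition of the second-moment matrix.

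The second and final step is to invoke the hypothesis $\secondmoment{\rho_1}{\betav,\alphav}\wv=\lambda\wv$: this yields $(\Kop\eta)(\xv)=\lambda\,\alphav(\xv)^\intercal\wv=\lambda\,\eta(\xv)$, i.e., $\Kop\eta=\lambda\eta$, which is the claim. I expect the companion statement for left eigenfunctions (Theorem~\ref{thm:left_eigen}) to follow by the identical argument applied to the adjoint $\Kopadj$, whose kernel against $\rho_0$ exchanges the roles of the two marginals and produces $\secondmoment{\rho_0}{\betav,\alphav}$ in place of $\secondmoment{\rho_1}{\betav,\alphav}$.

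I do not anticipate a genuine obstacle here: since $\alphav$ and $\betav$ have only finitely many components, interchanging the finite sum over modes with the integral defining $\Kop$ is immediate, and the operator is finite-rank by construction, so no convergence or limiting argument is needed. The only point I would be careful to state is well-posedness of the assertion itself — namely that $\eta$, being a linear combination of the coordinate functions of $\alphav$, lies in the range of $\Kop$ (hence in $\hilbert{\rho_0}{\Xc}$), that we implicitly take the $\alpha_i$ to lie in $\hilbert{\rho_1}{\Xc}$ as well so that $\Kop\eta$ is defined and the eigen-equation is meaningful, and that $\lambda$ together with the entries of $\wv$ may be complex, which is why the statement is phrased over $\Complex^k$ and $\Complex$.
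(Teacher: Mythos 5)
Your proof is correct and coincides with the paper's own argument: both expand $(\Kop\eta)(\xv)$ as an integral against $\rho_1$, use the factorization $k(\xv,\xv')=\alphav(\xv)^\intercal\betav(\xv')$ to pull out $\alphav(\xv)^\intercal$ and identify the remaining integral as $\secondmoment{\rho_1}{\betav,\alphav}$, then apply the eigenvector hypothesis. The extra well-posedness remarks are fine but not needed beyond what the paper assumes.
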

\begin{proof}
Since $k(\xv,\xv')=\alphav(\xv)^\intercal\betav(\xv')$, we have
\begin{align*}
(\Kop \eta)(\xv)
&= \int k(\xv,\xv') \eta(\xv')\rho_1(\xv') d\xv'\\
&= \alphav(\xv)^\intercal \int \betav(\xv')\alphav(\xv')^\intercal \wv\rho_1(\xv')d\xv'\\
&= \alphav(\xv)^\intercal \secondmoment{\rho_1}{\betav,\alphav} \wv\\
&= \lambda \alphav(\xv)^\intercal \wv\\
&= \lambda \eta(\xv).\qedhere
\end{align*}
\end{proof}

\begin{theorem}
\label{thm:left_eigen}
Let $\zv\in\Complex^k$ be a left eigenvector of 
$\secondmoment{\rho_0}{\betav,\alphav}$ with eigenvalue $\lambda\in\Complex$, i.e., $\zv^*\secondmoment{\rho_0}{\betav,\alphav}=\lambda\zv^*$.
If we define $\zeta(\xv')\defeq \zv^\intercal \betav(\xv')$, then $\zeta$ is a left eigenfunction of $\Kop$ with eigenvalue $\lambda$, i.e., $\Kop^* \zeta = \overline{\lambda} \zeta$.
\end{theorem}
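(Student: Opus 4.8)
The plan is to follow the proof of Theorem~\ref{thm:right_eigen} almost verbatim, but with the adjoint $\Kop^*$ in place of $\Kop$, while being careful about complex conjugation since $\zv$ and $\lambda$ are allowed to be complex. The starting point is the integral form of the adjoint: from the factorization $k(\xv,\xv')=\alphav(\xv)^\intercal\betav(\xv')$ together with the defining relation $\langle\Kop g,f\rangle_{\rho_0}=\langle g,\Kop^* f\rangle_{\rho_1}$, one reads off $(\Kop^* f)(\xv')=\int k(\xv,\xv')f(\xv)\,\rho_0(\xv)\,d\xv$, which is exactly the backward-predictor form $(\Kop^* f)(\xv')=\E_{q(\xv|\xv')}[f(\xv)]$ recorded in the problem setting.

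Next I would substitute $\zeta(\xv)=\zv^\intercal\betav(\xv)=\betav(\xv)^\intercal\zv$ into this formula and, using that $\betav(\xv')^\intercal\alphav(\xv)$ and $\betav(\xv)^\intercal\zv$ are scalars, pull the $\xv'$-dependence out of the integral:
\[
(\Kop^*\zeta)(\xv')
=\int \betav(\xv')^\intercal\alphav(\xv)\,\betav(\xv)^\intercal\zv\,\rho_0(\xv)\,d\xv
=\betav(\xv')^\intercal\Bigl(\int \alphav(\xv)\betav(\xv)^\intercal\rho_0(\xv)\,d\xv\Bigr)\zv
=\betav(\xv')^\intercal\,\secondmoment{\rho_0}{\alphav,\betav}\,\zv.
\]
Then I would note $\secondmoment{\rho_0}{\alphav,\betav}=\secondmoment{\rho_0}{\betav,\alphav}^\intercal$ and invoke the hypothesis: taking the conjugate transpose of $\zv^*\secondmoment{\rho_0}{\betav,\alphav}=\lambda\zv^*$ gives $\secondmoment{\rho_0}{\betav,\alphav}^*\zv=\overline{\lambda}\zv$, and since $\secondmoment{\rho_0}{\betav,\alphav}$ is a \emph{real} matrix (a second-moment matrix of real-valued feature maps), this reads $\secondmoment{\rho_0}{\betav,\alphav}^\intercal\zv=\overline{\lambda}\zv$, i.e., $\secondmoment{\rho_0}{\alphav,\betav}\zv=\overline{\lambda}\zv$. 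Plugging back yields $(\Kop^*\zeta)(\xv')=\overline{\lambda}\,\betav(\xv')^\intercal\zv=\overline{\lambda}\,\zeta(\xv')$, which is the claim.

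I do not anticipate a genuine obstacle here; the only point that must be handled with care is the transpose-versus-conjugate-transpose bookkeeping. The substantive observation is that a Hermitian-sense left eigenvector of a real matrix $\Mm$ (satisfying $\zv^*\Mm=\lambda\zv^*$) is an ordinary right eigenvector of $\Mm^\intercal$ with the conjugated eigenvalue $\overline{\lambda}$, and this is precisely what produces the $\overline{\lambda}$ in $\Kop^*\zeta=\overline{\lambda}\zeta$, consistent with the general fact that the spectrum of an adjoint is the complex conjugate of the original spectrum. As an alternative I would remark that one can instead apply Theorem~\ref{thm:right_eigen} directly to $\Kop^*$, whose kernel against $\rho_0$ is $\tilde{k}(\xv',\xv)=\betav(\xv')^\intercal\alphav(\xv)$ --- so the roles of $\alphav,\betav$ and of $\rho_0,\rho_1$ are interchanged --- and then translate ``right eigenvector of $\secondmoment{\rho_0}{\alphav,\betav}$'' into ``left eigenvector of $\secondmoment{\rho_0}{\betav,\alphav}$'' by transposition, again using reality to exchange $^*$ and $^\intercal$.
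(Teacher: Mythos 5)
Your proposal is correct and follows essentially the same route as the paper's proof: expand $(\Kop^*\zeta)(\xv')$ via the integral kernel, substitute $\zeta=\zv^\intercal\betav$, pull out $\betav(\xv')^\intercal$ to obtain $\betav(\xv')^\intercal(\secondmoment{\rho_0}{\betav,\alphav})^\intercal\zv$, and apply the left-eigenvector hypothesis to get the factor $\overline{\lambda}$. In fact you make explicit the conjugate-transpose versus transpose bookkeeping (using that the real-valued feature maps make $\secondmoment{\rho_0}{\betav,\alphav}$ real) which the paper's proof uses implicitly, so no gap remains.
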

\begin{proof}
We have
\begin{align*}
(\Kop^* \zeta)(\xv)
&= \int k(\xv,\xv') \zeta(\xv)\rho_0(\xv) d\xv\\
&= \betav(\xv')^\intercal \int \alphav(\xv)\betav(\xv)^\intercal \zv\rho_0(\xv)d\xv\\
&= \betav(\xv')^\intercal (\secondmoment{\rho_0}{\betav,\alphav})^\intercal \zv\\
&= \overline{\lambda} \betav(\xv')^\intercal \zv\\
&= \overline{\lambda} \zeta(\xv').\qedhere
\end{align*}
\end{proof}

Interestingly, this implies that spectrum of $\secondmoment{\rho_1}{\betav,\alphav}$ and $\secondmoment{\rho_0}{\betav,\alphav}$ must be identical. 

Moreover, if $\yv_o$ denotes the Perron--Frobenius eigenvector of $\secondmoment{\rho_0}{\betav,\alphav}$ (i.e., the left eigenvector with eigenvalue 1), then $g_o(\xv)\defeq \yv_o^\intercal\betav(\xv)$ is the stationary distribution.

\section{Extended DMD and Multi-Step Prediction}
\label{app:edmd_multistep}
In this section, we illustrate the extended DMD (EDMD) procedure~\citep{williams_edmd_2014} using basic ideas as elementary as linear regression.
{Suppose that we are given top-$k$ left and right singular functions $\fv(\cdot)$ (encoder) and $\gv(\cdot)$ (lagged encoder), respectively.
If $h\in\mathsf{span}(\fv)\defeq \mathsf{span}\{f_1,\ldots,f_k\}$, then there exists some $\zv\in\Real^k$ such that $h(\xv)=\zv^\intercal\fv(\xv)$.
Then, we denote  by $\Kop_{~\fv}$ the restriction of the Koopman operator onto the span, which acts on $h$ as
\begin{align*}
(\Kop_{~\fv} h)(\xv) = \fv(\xv)^\intercal\Km_{\fv} \zv 
\end{align*}
for some $\Km_{\fv}\in\Real^{k\times k}$.
With finite data, there are two issues in this picture: (1) How can we compute $\Km_{\fv}$ from data? (2) Given $h\in\mathsf{span}(\fv)$, how can we compute the corresponding $\zv\in\Real^k$?
\begin{itemize}
\item For the first question, 
the EDMD aims to find the best finite-dimensional approximation of the Koopman operator $\Kop$ restricted on $\mathsf{span}(\fv)$, in the sense that $(\Kop \fv)(\xv) = \E_{p(\xv'\mid\xv)}[\fv(\xv')] \approx \Km\fv(\xv)$.
As a natural choice, we can choose the ordinary least square solution that solves $\fv(\xv')=\Km\fv(\xv)$ for $(\xv,\xv')\sim \rho_0(\xv)p(\xv'\mid\xv)$, where 
\begin{align*}
\hat{\Km}_{\fv}^{\mathsf{ols}}
\defeq \Fm^+\Fm'
=(\Fm^\intercal\Fm)^{-1} \Fm^\intercal\Fm'
= (\empsecondmoment{\rho_0}{\fv})^{-1}\empjointmoment{\fv},
\end{align*}
where we denote the data matrices by
\[
\Fm
\defeq [\fv(\xv_1) \ldots \fv(\xv_N)]^\intercal\in\Real^{N\times k}
\quad\text{and}\quad
\Fm'
\defeq [\fv(\xv_1') \ldots \fv(\xv_N')]^\intercal\in\Real^{N\times k}.
\]
Note that $\empsecondmoment{\rho_0}{\fv}$ and $\empjointmoment{\fv}$ are the empirical estimates of the second moment matrices $\secondmoment{\rho_0}{\fv}$ and $\jointmoment{\fv}$.

\item For the second point, we can again view this as a linear regression problem, since 
\[
h(\xv_i)=\zv^\intercal \fv(\xv_i)\quad\text{for }i=1,\ldots,N.
\]
Hence, we can define the best $\zv$ again as the OLS solution ${\zv}_{h}^{\mathsf{ols}}[\fv]\defeq \Fm^+ {\Hm} = (\empsecondmoment{\rho_0}{\fv})^{-1}\Fm^\intercal{\Hm}$, where
\begin{align*}
{\Hm}\defeq \begin{bmatrix}
h(\xv_1),\ldots,h(\xv_N)
\end{bmatrix}^\intercal\in\Real^N.
\end{align*}
\end{itemize}
Now, suppose that we already computed $\empjointmoment{\fv,\gv}$ and $\hat{\zv}_h$ from data.
Then, we can approximate the multi-step prediction as
\begin{align}
\E_{p(\xv_t\mid\xv_0)}[h(\xv_t)]\approx \fv(\xv_0)^\intercal \empjointmoment{\fv,\gv}^t \hat{\zv}_{h}.
\label{eq:multi_step_default}
\end{align}
The same logic applies to the subspace spanned by the right singular functions $\psi(\cdot)$.}

\section{On Implementation}
\label{app:implementation}
In this section, we explain how we can implement nesting with automatic differentiation and provide a pseudocode in PyTorch~\citep{pytorch2019}.

\subsection{Implementation of Nesting Techniques}

We can implement the sequential nesting by computing the derivative of the following objective using automatic differentiation:
\begin{align*}
\loraobj^{\mathsf{seq}}(\fv,\gv)\defeq -2\tr(\seqsecondmoment{\rho_0}{\fv,\Kop\gv}) 
+ \tr(\seqsecondmoment{\rho_0}{\fv}\seqsecondmoment{\rho_1}{\gv}),
\end{align*}
where we define a partially stop-gradient second moment matrix 
\begin{align*}
\seqsecondmoment{\rho}{\fv,\gv}
\defeq 
\begin{bmatrix}
\langle f_1,f_1\rangle_{\rho} & \langle \stopgrad{f_1},f_2\rangle_{\rho} 
& \langle\stopgrad{f_1},f_3\rangle_{\rho}
& \cdots & \langle\stopgrad{f_1},f_k\rangle_{\rho} \\
\langle f_2,\stopgrad{f_1}\rangle_{\rho} & \langle f_2,f_2\rangle_{\rho} & \langle\stopgrad{f_2},f_3\rangle_{\rho}
& \cdots & \langle\stopgrad{f_2},f_k\rangle_{\rho} \\
\langle f_3,\stopgrad{f_1}\rangle_{\rho} & \langle f_3,\stopgrad{f_2}\rangle_{\rho} & \langle f_3,f_3\rangle_{\rho}
& \cdots & \langle\stopgrad{f_3},f_k\rangle_{\rho} \\
\vdots & \vdots & \ddots & \vdots \\
\langle f_k,\stopgrad{f_1}\rangle_{\rho} & \langle f_k,\stopgrad{f_2}\rangle_{\rho} & \langle f_k,\stopgrad{f_3}\rangle_{\rho} & \cdots & \langle f_k,f_k\rangle_{\rho}
\end{bmatrix}.
\end{align*}
This can be implemented efficiently with almost no additional computation.
We note that the implementation of the sequentially nested LoRA we provide here is a more direct and simpler version than the implementation of NeuralSVD in \citep{Ryu--Xu--Erol--Bu--Zheng--Wornell2024}, which implemented the sequential nesting by a custom gradient.

The joint nesting can also be implemented in an efficient manner via the matrix mask, as explained in \citep{Ryu--Xu--Erol--Bu--Zheng--Wornell2024}.
Define the \emph{matrix mask} $\Pm\in\Real^{k\times k}$ as $\Pm_{ij}=\mathsf{m}_{\max\{i,j\}}$ with $\mathsf{m}_i \defeq \sum_{j=i}^k \weight_j$. 
Then, $\Lc_{\lora}^{\mathsf{joint}}(\fv,\gv;\weights)\defeq \sum_{i=1}^k \a_i\Lc_{\lora}(\fv_{1:i},\gv_{1:i})$
Then, we can write
\begin{align*}
\Lc_{\lora}^{\mathsf{joint}}(\fv,\gv;\weights)
=\sum_{i=1}^k \weight_i\Lc_{\lora}(\fv_{1:i},\gv_{1:i})
=\tr\Bigl(\Pm\odot \Bigl(-2\secondmoment{\rho_0}{\fv,\Kop\gv} 
+ \secondmoment{\rho_0}{\fv}\secondmoment{\rho_1}{\gv}\Bigr)\Bigr).
\end{align*}

\subsection{Pseudocode for LoRA Loss with Nesting}
Based on the nesting techniques described above,
here we provide a simple and efficient PyTorch~\citep{pytorch2019} implementation of the NestedLoRA (i.e., LoRA with nesting) objective. 
\begin{lstlisting}[]
class NestedLoRALoss:
    def __init__(
        self, 
        use_learned_svals=False, 
        nesting=None, 
        n_modes=None, 
    ):
        self.use_learned_svals = use_learned_svals
        assert nesting in [None, 'seq', 'jnt']
        self.nesting = nesting
        if self.nesting == 'jnt':
            assert n_modes is not None
            self.vec_mask, self.mat_mask = get_joint_nesting_masks(
                weights=np.ones(n_modes) / n_modes, 
            )
        else:
            self.vec_mask, self.mat_mask = None, None
        self.kostic_regularization = kostic_regularization
        
    def __call__(self, f: torch.Tensor, g: torch.Tensor):
        # f: [b, k]
        # g: [b, k]

        if self.nesting == 'jnt':
            # \sum_{i=1}^k <f_i, T g_i >
            corr_term = -2 * (self.vec_mask.to(f.device) * f * g).mean(0).sum()
            # \sum_{i=1}^k \sum_{j=1}^k <f_i, f_j> <g_i, g_j>
            # = tr ( cov (f_{1:k}) * cov(g_{1:k}) )
            M_f = compute_second_moment(f)
            M_g = compute_second_moment(g)
            metric_term = (self.mat_mask.to(f.device) * M_f * M_g).sum()

        else:
            # \sum_{i=1}^k <f_i, T g_i >
            corr_term = -2 * (f * g).mean(0).sum()
            # \sum_{i=1}^k \sum_{j=1}^k <f_i, f_j> <g_i, g_j>
            # = tr ( cov (f_{1:k}) * cov(g_{1:k}) )
            M_f = compute_second_moment(f, seq_nesting=self.nesting == 'seq')
            M_g = compute_second_moment(g, seq_nesting=self.nesting == 'seq')
            metric_term = (M_f * M_g).sum()
        
        return corr_term + metric_term

def compute_second_moment(
        f: torch.Tensor,
        g: torch.Tensor = None,
        seq_nesting: bool = False
    ) -> torch.Tensor:
    """
    compute (optionally sequentially nested) second-moment matrix
        M_ij = <f_i, g_j>
    with partial stop-gradient handling when seq_nesting is True.

    args
    ----
    f : (n, k) tensor
    g : (n, k) tensor or None
    seq_nesting : bool           
    """
    if g is None:
        g = f
    n = f.shape[0]
    if not seq_nesting:
        return (f.T @ g) / n
    else:
        # partial stop gradient
        # lower-triangular: <f_i, sg[g_j]> for i > j
        lower = torch.tril(f.T @ g.detach(), diagonal=-1)
        # upper-triangular: <sg[f_i], g_j> for i < j
        upper = torch.triu(f.detach().T @ g, diagonal=+1)
        # diagonal:         <f_i, g_i>     (no stop-grad)
        diag  = torch.diag((f * g).sum(dim=0))
        return (lower + diag + upper) / n

def get_joint_nesting_masks(weights: np.ndarray):
    vector_mask = list(np.cumsum(list(weights)[::-1])[::-1])
    vector_mask = torch.tensor(np.array(vector_mask)).float()
    matrix_mask = torch.minimum(
        vector_mask.unsqueeze(1), vector_mask.unsqueeze(1).T
    ).float()
    return vector_mask, matrix_mask

\end{lstlisting}

\section{Deferred Details on Experiments}
\label{app:exp}
In this section, we provide details for each experiment.
See Table~\ref{tab:exp_overview} for an overview of the benchmark problems from \citep{Kostic--Novelli--Grazzi--Lounici--Pontil2024iclr}.
Our implementation builds upon the codebase of \citep{Kostic--Novelli--Grazzi--Lounici--Pontil2024iclr}\footnote{\url{https://github.com/pietronvll/DPNets}}, the \texttt{kooplearn} package\footnote{\url{https://kooplearn.readthedocs.io/}}, and the codebase of \citep{Ryu--Xu--Erol--Bu--Zheng--Wornell2024}\footnote{\url{https://github.com/jongharyu/neural-svd}}.

\begin{table}[htb]
    \caption{Overview of the experiments.}\vspace{.25em}
    \centering
    \small
    \begin{tabular}{llll}
    \toprule
    \textbf{Examples} & \textbf{Time} & \textbf{Spectral complexity} & 
    \textbf{Stationarity} \\
    \midrule
    Noisy logistic map & discrete & non-normal & (nearly) stationary\\
    Ordered MNIST & discrete & normal & stationary \\
    \midrule
    1D SDE & continuous & self-adjoint & (nearly) stationary\\
    \midrule
    Molecular dynamics & continuous (discretized) & normal & non-stationary\\
    \bottomrule
    \end{tabular}
    \label{tab:exp_overview}
\end{table}

\subsection{Noisy Logistic Map}
\label{app:logistic_map}
We consider a noisy logistic map defined as 
\[
X_{t + 1} = (rX_t(1 - X_t) + \xi_{t})~{\rm mod}~ 1
\]
for $\xi_{t}\sim p(\xi)$, where $p(\xi)\defeq C_N\cos^N(\pi \xi)$ is the order $N$ trigonometric noise over $\xi\in[-0.5,0.5]$~\citep{Ostruszka--Pakonski--Slomczynski--Zyczkowski2000}, where $C_N\defeq \pi/B(\frac{N+1}{2},\frac{1}{2})$ is the (reciprocal) normalization constant.

Although the dynamics is non-normal, the structure of the polynomial trigonometric noise ensures that the associated kernel is of finite rank $N+1$. 
Concretely, the transition density can be written as
\begin{align*}
p(x'|x) =\sum_{i=0}^N \alpha_i(x)\breve{\beta}_i(x'),
\end{align*}
where
\begin{align*}
\breve{\beta}_i(x) &\defeq \sqrt{C_N\binom{N}{i}} \cos^i(\pi x)\sin^{N-i}(\pi x),\\
\alpha_i(x) &\defeq \breve{\beta}_i(F(x)).
\end{align*}
Let $\beta_i(x')\defeq \frac{\breve{\beta}_i(x')}{\rho_1(x')}$ such that we can write $\frac{p(x'|x)}{\rho_1(x')} = \alphav(x)^\intercal\betav(x')$.\footnote{We note that in the original implementation of \citet{Kostic--Novelli--Grazzi--Lounici--Pontil2024iclr}, the singular values were computed as if $\frac{p(x'|x)}{\rho_1(x')} = \alphav(x)^\intercal\breve{\betav}(x')$, which led to wrong singular values.}

\textbf{Computation of Ground-Truth Properties.}
Since the operator is of finite rank, the underlying singular functions and eigenfunctions can be computed numerically, using the theory developed in Appendix~\ref{app:finite_rank}. 
Let
\begin{align*}
\secondmoment{\pi}{\phiv,\alphav}&\defeq \E_{\pi}[\phiv(\xv)\alphav(\xv)^\intercal],\\
\secondmoment{\pi}{\phiv,\betav}&\defeq \E_{\pi}[\phiv(\xv){\betav}(\xv)^\intercal].
\end{align*}
Note that
\begin{align*}
\secondmoment{\pi}{\phiv,\alphav}
\secondmoment{\pi}{\phiv,\betav}^\intercal
&= \E_{\rho_0(\xv)\rho_1(\xv')}\Bigl[
\phiv(\xv)\alphav(\xv)^\intercal
{\betav}(\xv')\phiv(\xv')^\intercal
\Bigr]\\
&= \E_{\rho_0(\xv)\rho_1(\xv')}\Bigl[
\phiv(\xv)k(\xv,\xv')\phiv(\xv')^\intercal
\Bigr]\\
&= \E_{\rho_0(\xv)p(\xv'\mid\xv)}[
\phiv(\xv)\phiv(\xv')^\intercal
]
=\jointmoment{\phiv}.
\end{align*}
Then we can compute the approximate Koopman matrix as the ordinary least square regression, which is equivalent to the EDMD:
\begin{align*}
\hat{\Km}_{\phiv}^{\mathsf{ols}}=(\secondmoment{\pi}{\phiv})^{-1} \jointmoment{\phiv}
=(\secondmoment{\pi}{\phiv})^{-1} \secondmoment{\pi}{\phiv,\alphav}
\secondmoment{\pi}{\phiv,\betav}^\intercal.
\end{align*}

\textbf{Experimental Setup.}
Setting $N=20$, we generated a random trajectory of length $16384$ to construct the pair data.
We used a multi-layer perceptron (MLP) with hidden units \texttt{64-128-64} and leaky ReLU activation for left and right parametric singular functions.
We evaluated VAMPnet-1 (blue), DPNet (green), DPNet-relaxed (green with marker \textsf{x}), and LoRA (red) across varying batch sizes $B\in\{256, 1024, 8192\}$ and learning rates $\mathsf{lr} \in \{10^{-3}, 10^{-4}\}$ using the Adam optimizer; see Figure~\ref{fig:logistic_map}. %
Each configuration was trained for 500 epochs. We conducted five independent runs per configuration with different random seeds and report the average values with standard deviation. We set the number of modes to $k=20$.

\textbf{Results.}
To evaluate the quality of the learned singular basis, we first estimated the singular values $\hat{\sigma}_i$ via CCA. Since the ground-truth spectrum is known in this setting, we computed the relative error in estimating the squared singular values, defined as $\frac{|\sigma_i^2 - \hat{\sigma}_i^2|}{\sigma_i^2}$ for each $i \in [k]$. 
As shown in the first row of Figure~\ref{fig:logistic_map}, LoRA consistently outperforms the other methods in capturing the singular subspaces according to this metric, except in the configuration with a large batch size $B = 8192$ and a small learning rate $10^{-4}$. We also note that the performance of VAMPnet-1 is highly sensitive to the batch size, whereas DPNets exhibit relatively robust performance across settings.

Next, we assessed the quality of the estimated eigenvalues. Given the CCA-aligned basis $\bv \in \{\tilde{\phiv}, \tilde{\psiv}\}$, we computed the Koopman matrix via EDMD using the first $i$ basis vectors $\bv_{1:i}$ and extracted the corresponding eigenvalues $(\hat{\lambda}_j)_{j=1}^i$. Since the true system has three dominant eigenvalues $\lambda_1, \lambda_2, \lambda_3$, we evaluated the estimation quality using the directed Hausdorff distance $\max_{i'\in[i]} \min_{j\in[3]} |\hat{\lambda}_{i'} - \lambda_j|$, following \citet{Kostic--Novelli--Grazzi--Lounici--Pontil2024iclr}. The results are presented in the second row of Figure~\ref{fig:logistic_map}.

As expected, increasing the number of singular functions improves the estimation quality across all methods. \citet{Kostic--Novelli--Grazzi--Lounici--Pontil2024iclr} reported a baseline value of $0.06_{\pm 0.05}$ achieved by DPNet-relaxed (indicated by the gray, dashed horizontal line), and nearly all our configurations outperform this baseline. We observe that DPNet-relaxed and LoRA yield comparable performance, with DPNet-relaxed occasionally achieving the best results. We attribute the discrepancy between singular subspace quality and eigenvalue estimation performance to the non-normal nature of the underlying dynamics.

\begin{figure}[thb]
    \centering
    \includegraphics[width=\linewidth]{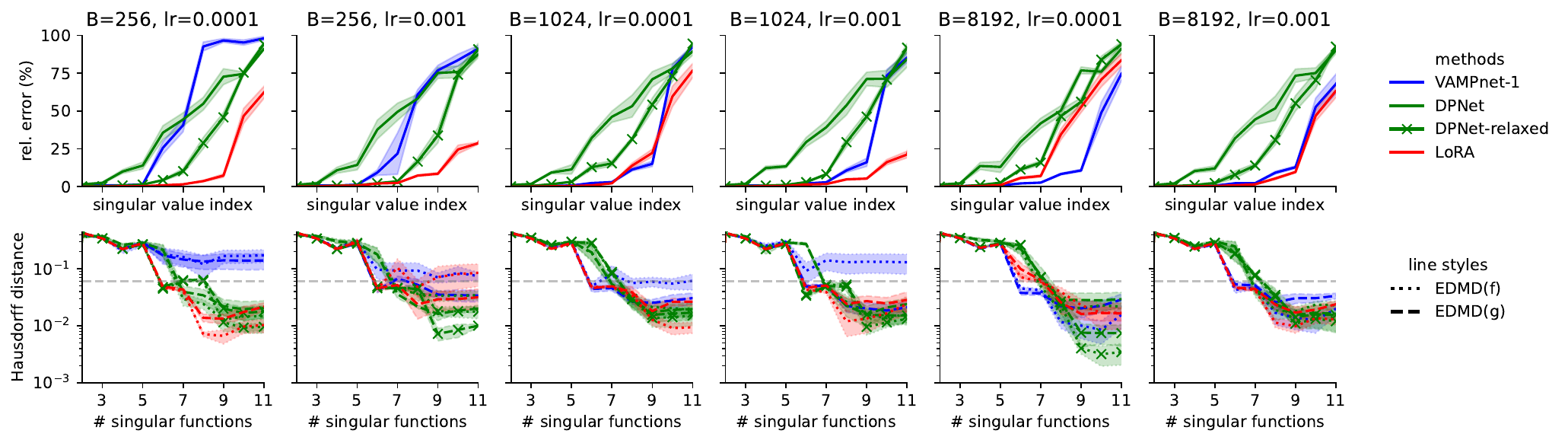}
    \caption{Summary of the noisy logistic map experiment. The first row reports relative error in singular values, and the second row the directed Hausdorff distance of the estimated eigenvalues to the three most dominant underlying eigenvalues. The shaded area indicates $\pm 0.5$ standard deviation.}
    \label{fig:logistic_map}
\end{figure}

\subsection{Ordered MNIST}
Following the setup of \citep{Kostic--Novelli--Grazzi--Lounici--Pontil2024iclr}, we generated two independent trajectories of length 1000. We used one trajectory for training and the other for evaluation.
We used Adam optimizer with learning rate $10^{-3}$, batch size 64, for 100 epochs.
The convolutional neural network we used is same as \citep{Kostic--Novelli--Grazzi--Lounici--Pontil2024iclr}, namely, 
\texttt{Conv2d[16]$\to$ReLU$\to$MaxPool[2]$\to$Conv2d[32]$\to$ReLU$\to$MaxPool[2]$\to$Linear[10]}.
We set the metric deformation loss coefficient to be 1 as suggested for DPNet and DPNet-relaxed.

\subsection{Langevin Dynamics}
\label{app:langevin}
Consider the following stochastic differential equation (SDE):
\begin{align}
\label{eq:sde}
d\Xv_t=\Am(\Xv_t)dt + \Bm(\Xv_t)d\Wv_t,
\end{align}
where $\Am\colon\Real^d\to\Real^d$ is a drift term, $\Bm\colon\Real^d\to\Real^{d\times m}$ is the diffusion term, and $\Wv_t$ is a $m$-dimensional Wiener process. Since the diffusion process can be modeled as a reversible stochastic differential equation (SDE) with respect to its stationary distribution $\pi$, the associated generator becomes self-adjoint within the Hilbert space endowed with the inner product $\langle \cdot, \cdot \rangle_{\pi}$~\cite{Oksendal_1995}.
For the It\^o SDE in Eq.~\eqref{eq:sde}, we can write the action of the generator $\Lc$ (also called the Kolmogorov backward operator, or the It\^o derivative) of the diffusion on a function $f\colon \Real^d\to\Real$ can be written as, by the It\^{o} formula~\citep{Oksendal_1995},
\begin{align*}
(\Lop f)(\xv)
\defeq  \nabla_{\xv}f(\xv)^\intercal \Am(\xv) + \frac{1}{2} \tr\Bigl(\Bm(\xv)^\intercal\nabla_\xv^2f(\xv) \Bm(\xv)\Bigr).
\end{align*}

\textbf{Special 1D Example.}
Consider a simple 1D case where 
\[
A(x) = -\frac{1}{\gamma}U'(x)
\quad\text{ and }\quad
B(x) = \sqrt{\frac{2k_B T}{\gamma}}.
\]
In this case, we can write
\begin{align*}
(\Lop f)(x)
= \frac{1}{\gamma}\Bigl(-U'(x)f'(x) + k_BTf''(x)\Bigr).
\end{align*}
For simplicity, we set $k_BT =1$ and $\gamma=0.1$.

We use the \emph{Schwantes potential} $ U(x) $ defined as
\[
U(x) = 4 \left( x^8 
+ 0.8  e^{-80 x^2} 
+ 0.2  e^{-80 (x - 0.5)^2} 
+ 0.5  e^{-40 (x + 0.5)^2} \right),
\]
whose derivative is given as
\[
U'(x) = 32 x^7 
- 512 x  e^{-80 x^2} 
- 128 (x - 0.5)  e^{-80 (x - 0.5)^2} 
- 160 (x + 0.5)  e^{-40 (x + 0.5)^2}.
\]

\newpage
\subsection{Chignolin Molecular Dynamics}
\label{app:chig}

This section provides the detail of our experimental setup including the data, as well as a supplementary analysis, which demonstrates the stabilization effect of the nesting techniques during training.

\textbf{Experimental Setup.}
Our work utilizes a publicly available trajectory dataset of the classic chignolin peptide (sequence: GYDPETGTWG, PDB: 1UAO) simulated at 300 K, provided by \citep{Marshall2024}. 
In contrast, contemporary studies \citep{Bonati2021,Kostic--Novelli--Grazzi--Lounici--Pontil2024iclr} analyze simulation data at 340K of the stabilized CLN025 variant (sequence: YDPETGTWY, PDB: 5AWL; \citep{doi:10.1021/ja8030533}) from \citep{Lindorff-Larsen_2011}, which is not publicly available. To maintain maximal consistency with the prior works, we selected trajectories from this dataset generated using the CHARM22* force field and TIP3P solvent, but with a halved sampling interval of 100 ps. Moreover, unlike the non-public data created from a single trajectory, the full dataset contains approximately 300,000 snapshots from 34 trajectories, initiated from either folded or unfolded states. 

Due to the short trajectory lengths, transition events are relatively rare in this dataset~\citep{Marshall2024}. 
Therefore, naive random splitting may lead to disjoint ensembles, for example, a training set lacking transition events while the test set contains them.
To mitigate this distribution shift and decouple data sampling variance from optimization variance, we employed a distribution-matching split strategy.
Specifically, we generated 200 candidate stratified splits and selected the partition that minimizes the 1-Wasserstein distance between the training and test set distributions of the radius of gyration, computed using the 10 C$\alpha$ atoms.
To ensure balanced initial conditions, we further constrained the split such that the 17 folded-initiated and 17 unfolded-initiated trajectories were each divided into 13 training and 4 testing trajectories.

After fixing the train-test split, all experiments were performed with five independent model training runs using this fixed dataset.
This allows us to evaluate numerical stability and convergence robustness of the learning algorithms.

We adopted the SchNet-based architecture and hyperparameters from \citep{Kostic--Novelli--Grazzi--Lounici--Pontil2024iclr}. Therefore, we trained the models for 100 epochs with the Adam optimizer whose learning rate is $10^{-3}$ , but doubled the batch size to 384 to maximize GPU memory utilization. We also used $\gamma=0.01$ for the regularization coefficient of DPNet-relaxed. The SchNet architecture used in our experiments is summarized in Table~\ref{tab:schnet_architecture}.

\begin{table}[thb]
\centering
\caption{Architecture and hyperparameter configuration of the \texttt{SchNetModel} used in our experiments. The model processes atomic structures to produce fixed-dimensional feature vectors. Key hyperparameters were adopted from \citep{Kostic--Novelli--Grazzi--Lounici--Pontil2024iclr} for consistency with prior work.}
\vspace{.5em}
\label{tab:schnet_architecture}
\resizebox{\textwidth}{!}{%
\begin{tabular}{@{}llc@{}}
\toprule
\textbf{Component / Layer} & \textbf{Key Parameter} & \textbf{Value in Experiments} \\
\midrule
\textbf{Input} & \multicolumn{2}{l}{Dictionary of atomic properties (positions, atomic numbers, etc.)} \\
\midrule
\texttt{PairwiseDistances} & Cutoff radius (\texttt{cutoff}) & 6.0 Å \\
\midrule
\texttt{GaussianRBF} & Number of radial basis functions (\texttt{n\_rbf}) & 20 \\
\midrule
\texttt{SchNet} Blocks & Feature dimension (\texttt{n\_atom\_basis}) & 64 \\
& Number of interaction blocks (\texttt{n\_interactions}) & 3 \\
& Cutoff function & \texttt{CosineCutoff} \\
\midrule
Output Layer (\texttt{nn.Sequential}) & & \\
1. \texttt{Linear Layer} & Output dimension (\texttt{n\_feature\_modes}) & 15 (for k=16 modes with centering) \\
2. \texttt{BatchNorm1d} & Use batch normalization (\texttt{use\_batchnorm}) & True \\
\midrule
\textbf{Output} & \multicolumn{2}{l}{Per-atom feature tensor of shape (Total Atoms, \texttt{n\_feature\_modes})} \\
\bottomrule
\end{tabular}%
}
\end{table}

Since we utilized batch normalization in the output layer, the learned feature vectors are constrained to have zero mean over the batch. This effectively enforces orthogonality to the constant eigenfunction. To explicitly account for this structure, we applied the \emph{centering} strategy (Section~\ref{subsec:vampnet_dpnet}) as a default for all algorithms, modeling the constant mode separately.
Consequently, the output dimension of the SchNet module was set to $k-1=15$. This differs from the 16 dimensions used by \citet{Kostic--Novelli--Grazzi--Lounici--Pontil2024iclr}, as we incorporate the fixed constant mode as an additional, separate feature. 

\textbf{Implementation of VAMP-E Score.} 
To evaluate the performance of the learned bases, we compute the VAMP-E score proposed by \citep{Wu--Noe2020} to evaluate the generalization capability of the learned Koopman operator. Unlike the VAMP-$r$ scores which measure the canonical correlations within a specific dataset, the VAMP-E score estimates the approximation error of the learned model with respect to the true Koopman operator in the Hilbert-Schmidt norm~\citep{Wu--Noe2020}. 

Specifically, maximizing the VAMP-E score is equivalent to minimizing the decomposition error $\|\hat{\mathcal{K}} - \mathcal{K}\|_{\mathrm{HS}}^2$ on unseen data. The squared error can be expanded as $\|\hat{\mathcal{K}} - \mathcal{K}\|_{\mathrm{HS}}^2 = \|\hat{\mathcal{K}}\|_{\mathrm{HS}}^2 - 2\langle \hat{\mathcal{K}}, \mathcal{K} \rangle_{\mathrm{HS}} + \|\mathcal{K}\|_{\mathrm{HS}}^2$. Since $\|\mathcal{K}\|_{\mathrm{HS}}^2$ is an unknown constant dependent only on the system, minimizing the error is equivalent to maximizing the score $\mathcal{R}_E \defeq 2\langle \hat{\mathcal{K}}, \mathcal{K} \rangle_{\mathrm{HS}} - \|\hat{\mathcal{K}}\|_{\mathrm{HS}}^2$. Here, the first term measures the consistency between the learned model and the actual test dynamics, while the second term serves as a regularization term penalizing the model complexity on the test distribution.

In our implementation, we calculate the VAMP-E score using a cross-validation strategy where the model parameters are fixed based on the training data, and the score is evaluated on the test moments. Let $\empsecondmoment{\rho_0^\mathrm{test}}{\fv}, \empsecondmoment{\rho_1^\mathrm{test}}{\gv}$ and $\testempjointmoment{\fv,\gv}$ denote the empirical second-moment matrices calculated from the test set. We first project these moments onto the singular subspace learned from the training data. 
Let $\Wm_0 \defeq (\secondmoment{\rho_0^\mathrm{train}}{\fv})^{-1/2}$ and $\Wm_1 \defeq (\secondmoment{\rho_1^\mathrm{train}}{\gv})^{-1/2}$ be the whitening matrices. Furthermore, let $\Um, \Vm$ and $\Sigma$ be the singular vectors and the diagonal matrix of singular values, respectively, obtained from CCA with respect to the training data. The projected test moments are defined as:
\begin{alignat*}{3}
    \Cm_{00} &\defeq \Um^\top \Wm_0 \empsecondmoment{\rho_0^\mathrm{test}}{\fv} \Wm_0 \Um  
    &&=\: \Sigma^{-1/2}\empsecondmoment{\rho_0^\mathrm{test}}{\tilde{\phiv}} \Sigma^{-1/2}, \\
    \Cm_{01} &\defeq \Um^\top \Wm_0 \testempjointmoment{\fv,\gv} \Wm_1 \Vm 
    &&=\: \Sigma^{-1/2} \testempjointmoment{\tilde{\phiv},\tilde{\psiv}} \Sigma^{-1/2}, \\
    \Cm_{11} &\defeq \Vm^\top \Wm_1 \empsecondmoment{\rho_1^\mathrm{test}}{\gv} \Wm_1 \Vm 
    &&=\: \Sigma^{-1/2}\empsecondmoment{\rho_1^\mathrm{test}} {\tilde{\psiv}} \Sigma^{-1/2}.
\end{alignat*}

Using the singular value matrix $\Sigma$ computed from CCA with training data, the VAMP-E score is then defined as:
\begin{equation}
    \mathcal{R}_E \defeq 2 \underbrace{\operatorname{tr}(\Sigma \Cm_{01})}_{\approx \langle \hat{\mathcal{K}}, \mathcal{K} \rangle_{\mathrm{HS}}} - \underbrace{\operatorname{tr}(\Sigma \Cm_{00} \Sigma \Cm_{11})}_{\approx \|\hat{\mathcal{K}}\|_{\mathrm{HS}}^2} =  2 \operatorname{tr}(\testempjointmoment{\tilde{\phiv},\tilde{\psiv}}) - \operatorname{tr}(\empsecondmoment{\rho_0^\mathrm{test}} {\tilde{\phiv}} \empsecondmoment{\rho_1^\mathrm{test}} {\tilde{\psiv}}) .\
\end{equation}

Intuitively, this formulation penalizes the model if the singular values $\Sigma$ learned from training are inconsistent with the correlations observed in the test data ($\Cm_{01}$), or if the learned basis functions lose orthonormality on the test set ($\Cm_{00}, \Cm_{11} \neq \mathbf{I}$). Therefore, VAMP-E serves as a rigorous metric for validating both the learned singular subspaces and the estimated timescales~\citep{Mardt--Pasquali--Wu--Noe2018vampnets}.

\textbf{On the Stabilization Effect of Nesting Techniques.}

\begin{figure}[h]
    \centering\includegraphics[width=0.50\linewidth]{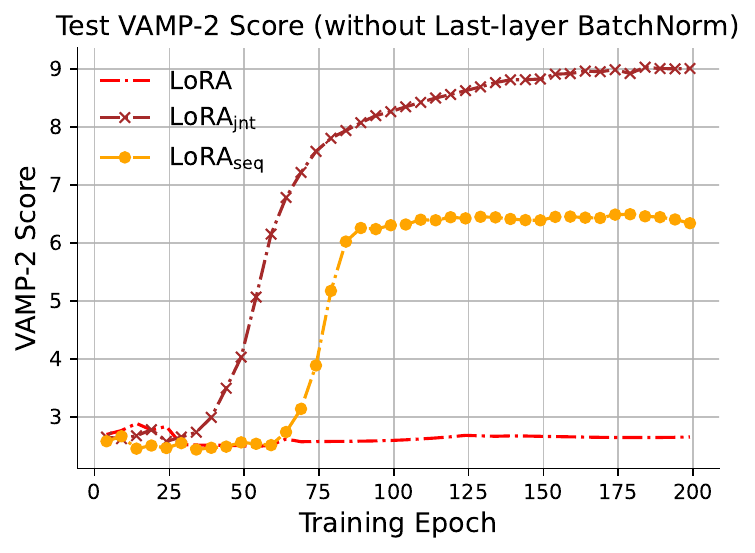}
    \caption{Test VAMP-2 score on the chignolin dataset for LoRA variants trained without the final batch normalization layer. While the standard LoRA objective without nesting fails to converge, both nesting techniques \jntlora{} and \seqlora{} achieve high scores, suggesting improved optimization behavior in this challenging setting.}
    \label{fig:nesting_without_bn}
\end{figure}

\label{app:chig:nesting_robustness}
As an additional study, we report a further benefit of nesting techniques during training. To create a more challenging learning setup, we remove the final batch normalization layer from the SchNet architecture and train for 200 epochs. Figure~\ref{fig:nesting_without_bn} shows that the standard LoRA model fails to learn meaningful dynamics, as indicated by its low and stagnant VAMP-2 score. In contrast, both joint and sequential nesting exhibit stable training behavior and converge to high-scoring solutions. This suggests that nesting techniques are not merely post hoc alignment mechanisms, but also provide stronger optimization signals in this challenging setting. We leave a more in-depth analysis of this empirical observation for future work.

\subsection{Computing Resources}
\label{app:resources}

All experiments were conducted on a server equipped with two Intel(R) Xeon(R) Gold 5220R CPUs, about 500 GiB of total RAM, four NVIDIA A5000 GPUs, and a 1TB NVMe SSD for storage. The chignolin molecular dynamics experiment was the most computationally intensive; each training run required 2-3 hours on two NVIDIA A5000 GPUs. Considering multiple runs across 5 data splits (Appendix~\ref{app:chig}), the total computational effort for the reported chignolin experiments was equivalent to approximately 2-3 days. The other experiments (noisy logistic map, ordered MNIST, Langevin dynamics) were less demanding, completing in approximately 1 hour, typically using one A5000 GPU. All experiments utilized PyTorch; the chignolin simulations additionally employed SchNetPack 2.1.1~\citep{Schutt--Hessmann--Gebauer--Lederer--Gastegger2023}. The total compute for the entire research project, including preliminary experiments, is estimated at approximately 3 days of operational time on this hardware.

\end{document}